\documentclass{article}

\PassOptionsToPackage{numbers, sort&compress}{natbib}

\usepackage{amsmath, amssymb, amsthm}
\usepackage{algorithm}
\usepackage{mathtools}
\usepackage{float}
\usepackage{framed}
\usepackage[final]{neurips_2024}
\usepackage[noend]{algpseudocode}
\usepackage{enumitem}
\usepackage{thmtools, thm-restate}

\usepackage[utf8]{inputenc} 
\usepackage[T1]{fontenc}    
\usepackage[hidelinks]{hyperref}       
\usepackage{url}            
\usepackage{booktabs}       
\usepackage{amsfonts}       
\usepackage{nicefrac}       
\usepackage{microtype}      
\usepackage{xcolor}         
\usepackage{wrapfig}
\usepackage{tabularx}

\usepackage{graphicx}
\usepackage{subcaption}
\usepackage{multirow}

\usepackage[capitalize,noabbrev]{cleveref}

\newcommand{\M}{\mathcal{M}}
\newcommand{\N}{\mathcal{N}}

\newcommand{\R}{\mathbb{R}}

\newcommand{\X}{\mathcal{X}}
\newcommand{\D}{\mathcal{D}}
\newcommand{\I}{\mathbb{I}}
\newcommand{\Y}{\mathcal{Y}}
\newcommand{\W}{\mathcal{W}}
\newcommand{\norm}[1]{\left\lVert#1\right\rVert}
\renewcommand{\S}{\mathcal{S}}
\newcommand{\abs}[1]{|#1|}

\newcommand{\ReM}{ReM}
\newcommand{\GReM}{GReM}

\newcommand{\Score}{\textup{Score}}

\newtheorem{thm}{Theorem}
\newtheorem{lem}{Lemma}

\newtheorem{prop}{Proposition}

\theoremstyle{definition}
\newtheorem{defn}{Definition}

\newcommand{\Cam}[1]{}
\newcommand{\ds}[1]{}
\newcommand{\brett}[1]{}
\newcommand{\miguel}[1]{}
\newcommand{\blue}[1]{#1}
\newcommand{\teal}[1]{}

\title{Efficient and Private Marginal Reconstruction with Local Non-Negativity}

\author{Brett Mullins$^1$ \quad Miguel Fuentes$^1$ \quad Yingtai Xiao$^2$ \quad \textbf{Daniel Kifer}$^2$ \\ 
        \textbf{Cameron Musco}$^1$ \quad \textbf{Daniel Sheldon}$^1$ \\
        $^1$University of Massachusetts, Amherst \quad $^2$Penn State University \\
          \texttt{\{bmullins,mmfuentes,cmusco,sheldon\}@cs.umass.edu}\\
          \texttt{\{yxx5224,duk17\}@psu.edu}}

\begin{document}

\maketitle

\begin{abstract}
  Differential privacy is the dominant standard for formal and quantifiable privacy and has been used in major deployments that impact millions of people. Many differentially private algorithms for query release and synthetic data contain steps that reconstruct answers to queries from answers to other queries that have been measured privately.  Reconstruction is an important subproblem for such mechanisms to economize the privacy budget, minimize error on reconstructed answers, and allow for scalability to high-dimensional datasets.  
  In this paper, we introduce a principled and efficient postprocessing method ReM (Residuals-to-Marginals) for \blue{reconstructing answers to marginal queries}. Our method builds on recent work on efficient mechanisms for marginal query release, based on making measurements using a \emph{residual query basis} that admits efficient pseudoinversion, which is an important primitive used in reconstruction.
  \blue{An extension GReM-LNN (Gaussian Residuals-to-Marginals with Local Non-negativity) reconstructs marginals under Gaussian noise satisfying consistency and non-negativity, which often reduces error on reconstructed answers.  We demonstrate the utility of ReM and GReM-LNN by applying them to improve existing private query answering mechanisms.}
\end{abstract}

\section{Introduction} \label{s:introduction}

Differential privacy is the dominant standard for formal and quantifiable privacy and has been used in major deployments that impact millions of people such as the 2020 US Decennial Census \cite{abowd20222020}.
One of the most fundamental problems in differential privacy is answering a workload of linear queries. Linear queries are used for basic descriptive statistics like counts and sums, and as building blocks for more complex tasks. 
Marginal queries, which describe the frequency distribution of subsets of discrete variables (e.g., income by age and education), are of particular interest as descriptive statistics and for use in downstream tasks like regression analyses.

A key subproblem in linear query answering is \emph{reconstruction}. Given a workload of linear queries, most mechanisms select a different set of queries to measure to make the most efficient use of the privacy budget, and then use the noisy answers to reconstruct answers to workload queries~\cite{li2010optimizing, li2012adaptive, li2015matrix, mckenna2018optimizing, xiao2024optimal,mckenna2022aim,zhang2017privbayes,liu2021iterative,aydore2021differentially,vietri2022private}.
Effective reconstruction methods can combine information from all noisy measurements to provide mutually consistent answers to workload queries.

Computational complexity is a key challenge for reconstruction methods. 
These methods answer workload queries by---either explicitly or implicitly---reconstructing a data distribution that has size exponential in the number of variables. 
To scale to high-dimensional data sets, existing approaches must represent this distribution compactly through some form of parametric representation~\cite{zhang2017privbayes,mckenna2019graphical,liu2021iterative,aydore2021differentially,vietri2022private}, which introduces tradeoffs such as a restricted space of data distributions that can be represented~\cite{zhang2017privbayes,liu2021iterative,aydore2021differentially,vietri2022private}, non-convex optimization objectives to find the best representation~\cite{liu2021iterative,aydore2021differentially,vietri2022private}, or complexity that depends on the measured queries and is still exponential in the worst case~\cite{mckenna2019graphical}.

We introduce \ReM{} (residuals-to-marginals), a principled and scalable post-processing method to reconstruct answers to a workload of marginal queries from noisy measurements of \emph{residuals}.
Residuals are a class of linear queries that are related to marginals, which were recently introduced in the privacy literature~\cite{xiao2024optimal} but previously studied in statistics~\cite{darroch1983additive,fienberg2006computing}. 
\ReM{} uses a compact representation of the data distribution to produce workload answers without exponential complexity in the number of variables. 
\ReM{} builds on the reconstruction approach of ResidualPlanner~\cite{xiao2024optimal}, which utilizes Kronecker structure to efficiently perform pseudoinverse operations. 
\ReM{} is a flexible framework for performing reconstruction in a broad range of settings and it can be used with a variety of existing query-answering mechanisms. 
\blue{\ReM{} also extends to the common setting of reconstructing answers to marginal queries from a set of noisy marginal measurements with isotropic Gaussian noise. In this case, \ReM{} performs the standard pseudoinverse reconstruction and is the first method to do so efficiently. }
We also develop \blue{GReM-LNN (Gaussian ReM with local non-negativity)}, an extension that reconstructs marginals satisfying non-negativity, which often reduces error on reconstructed answers.

We demonstrate the utility of ReM and \blue{GReM-LNN} by showing that they significantly reduce error and enhance the scalability of existing private query answering mechanisms including ResidualPlanner~\cite{xiao2024optimal} and the multiplicative weights exponential mechanisms (MWEM)~\cite{hardt2010simple}. Our code is available at \url{https://github.com/bcmullins/efficient-marginal-reconstruction}.

\section{Preliminaries} \label{s:preliminaries}
We consider a sensitive tabular dataset $\D$ of records $x^{(1)}, \ldots, x^{(N)}$. 
Each record $x = (x_1, \ldots, x_d)$ consists of $d$ categorical attributes.
The $i$th attribute $x_i$ belongs to the finite set $\X_i$ of size $n_i$.
The data universe is $\X = \prod_{i=1}^d \X_i$ and has size $n = \prod_i n_i$.
The \emph{data vector} or \emph{data distribution} $p \in \R^n$ is a vector indexed by $\X$ that counts the occurrences of each record in $\D$; it has entries $p(x) = \sum_{i=1}^N \I[x^{(i)} = x]$.
Since $n$ is exponential in the data dimension $d$, it is computationally intractable to work directly with data vectors in high dimensions.

\subsection{Linear queries, marginals, and residuals} \label{s:lqa}
Linear queries are a rich class of statistics that include counts, sums, and averages and are used as building blocks for more complex tasks. 
A linear query is the sum of a real-valued function $q: \X \to \R$ applied to each record in the dataset.
We adopt the equivalence that a query is a vector $q \in \R^n$ with answer $q^\top p$.
A \emph{query matrix} or \emph{workload} $W$ is a collection of $m$ linear queries arranged row-wise in an $m \times n$ matrix. 
The answer to workload $W$ for data vector $p$ is given by $Wp$. 

\emph{Marginal queries} are a common type of linear query for high-dimensional data.
They count the number of records that match certain values for a subset of the attributes -- e.g., the number of people in a dataset with education at least a college degree and income \$50-\$100K.
Let $\gamma \subseteq [d]$ be a subset of attributes and $x_\gamma = (x_i)_{i \in \gamma}$ be the corresponding subvector \blue{of a record $x$.}
Further, let $\X_\gamma = \prod_{i \in \gamma} \X_i$ and $n_\gamma = \prod_{i \in \gamma} n_i$.
The \emph{marginal} $\mu_\gamma \in \R^{n_\gamma}$ has entries $\mu_\gamma(t) = \sum_{i = 1}^N \I[x^{(i)}_\gamma = t]$ that count the number of occurrences in the dataset for each setting $t \in \X_\gamma$ of the attributes in $\gamma$.
Let $M_\gamma \in \R^{n_\gamma \times n}$ be the \emph{marginal workload} so that $\mu_\gamma = M_\gamma p$.
As shown in Fig.~\ref{fig:kronecker-marginals}, $M_\gamma$ can be  written concisely as a Kronecker product over dimensions, with base matrices equal to the identity $I_k \in \R^{n_k \times n_k}$ for attributes in $\gamma$ and the all ones vector $1_k^\top \in \R^{1 \times n_k}$ for attributes not in $\gamma$.
Kronecker product matrices can be understood as applying different linear operations along each dimension of a multi-dimensional array. In this case $M_\gamma$ sums over dimensions of the array representation of $p$ for attributes not in $\gamma$. 
We provide a brief summary of Kronecker products and their relevant properties in Appendix \ref{s:kron}.

\begin{figure}[h]
  \centering
  \begin{subfigure}[b]{0.24\textwidth}
    \small
    $$
    \begin{aligned}
    M_\gamma &= \bigotimes_{k=1}^d \begin{cases} I_k & k \in \gamma \\ 1_k^\top & k \notin \gamma \\ \end{cases} 
    \end{aligned}
    $$
    \caption{Marginals}
    \vspace{10pt}
    \label{fig:kronecker-marginals}
  \end{subfigure} 
  \hfill
  \begin{subfigure}[b]{0.24\textwidth}
    \small
    $$
    \begin{aligned}
      D_{(k)} = \begin{bmatrix}
        1 & -1 &  0 \\
        0 &  1 & -1 \\
        \end{bmatrix} 
    \end{aligned}
    $$
    \caption{\centering Differencing operator for $k$th attribute.}
    \label{fig:difference-example}
  \end{subfigure}
  \hfill
  \begin{subfigure}[b]{0.24\textwidth}
    \small
    $$
    \begin{aligned}
      D_\tau &= \bigotimes_{k=1}^d \begin{cases} D_{(k)} & k \in \tau \\ 1 & k \notin \tau \\ \end{cases}
    \end{aligned}
    $$
    \caption{\centering Differencing operator for $\tau$-marginal.}
    \label{fig:difference}
  \end{subfigure}
  \hfill
  \begin{subfigure}[b]{0.24\textwidth}
    \small
    $$
    \begin{aligned}
      R_\tau &= \bigotimes_{k=1}^d \begin{cases} D_{(k)} & k \in \tau \\ 1_k^\top & k \notin \tau \\ \end{cases}
    \end{aligned}
    $$
    \caption{Residuals}
    \vspace{10pt}
    \label{fig:kronecker-residuals}
  \end{subfigure}
  \caption{Kronecker structure of workloads.}
  \label{fig:kronecker}
  \vspace{-10pt}
  \end{figure}

\emph{Residual queries} are class of linear queries closely related to marginals. 
They were recently introduced in the privacy literature~\cite{xiao2024optimal} but previously studied in statistics as variable \emph{interactions}~\cite{darroch1983additive,fienberg2006computing}.
For $\tau \subseteq [d]$, the \emph{$\tau$-residual} is obtained from the marginal $\mu_\tau$ by applying a differencing operator along each dimension. 
Let $D_{(k)}$ be the linear operator that computes successive differences for vectors of length $n_k$, i.e., $(D_{(k)} v)_i = v_{i+1} - v_i$ for $i=1,\ldots, n_k - 1$; an example is shown for $n_k=3$ in Fig. \ref{fig:difference-example}.
Let $D_\tau$ be the matrix that applies this operation to all attributes in the $\tau$-marginal as shown in Fig.~\ref{fig:difference}.
The residual workload can be written as $R_\tau = D_\tau M_\tau \in \R^{m_\tau \times n}$ where $m_\tau = \prod_{i \in \tau} (n_i - 1)$, which has the explicit Kroecker product form shown in Fig.~\ref{fig:kronecker-residuals}.\footnote{Note that our matrix $D_\tau$ is slightly different from the operator used in~\cite{xiao2024optimal} but has the same row space~\cite{fienberg2006computing}.} 
With these definitions, if $\mu_\tau = M_\tau p$ is the $\tau$-marginal, the $\tau$-residual is $\alpha_\tau = D_\tau \mu_\tau = R_\tau p$ and can be computed from either $\mu_\tau$ or $p$.

Residuals and marginals have an intricate structure. 
The $\gamma$-marginal is uniquely determined by the $\tau$-residuals for $\tau \subseteq \gamma$, i.e., there is an invertible linear transformation between $M_\gamma$ and $(R_\tau)_{\tau \subseteq \gamma}$ (a vertical block matrix).
Intuitively, a $\gamma$-residual contains information \emph{not} contained in the $\tau$-marginals for $\tau \subset \gamma$. 
Further, the row spaces of $R_\tau$ and $R_{\tau'}$ are orthogonal for any $\tau \neq \tau'$, and  the row spaces of $M_\gamma$ and $R_\tau$ are orthogonal when $\tau \not\subseteq \gamma$~\cite{darroch1983additive,fienberg2006computing,xiao2024optimal}. 
Along with Kronecker structure, the orthogonality of residuals is the key property we will leverage to perform efficient reconstruction.

A key advantage of residual workloads is that we can work with their pseudoinverses efficiently in certain situations even though they have exponential size. 
Let $Q^+$ denote the Moore-Penrose pseudoinverse of $Q$.
The following proposition builds on the reconstruction method in~\cite{xiao2024optimal} and will be used to reconstruct answers to a marginal query $M_\gamma$ from measurements for a collection of residuals.
\begin{restatable}{prop}{propreconstruct}
  \label{prop:reconstruct}
  Let $R_{\S} = (R_\tau)_{\tau \in \S}$ be a combined workload of residual queries for all $\tau$ in a collection $\S \subseteq 2^{[d]}$, where the individual matrices $R_\tau$ are stacked vertically. The size of $R_\S$ is $m \times n$ where $m = \sum_{\tau \in \S} m_\tau$. Then for any \blue{$z = (z_\tau)_{\tau \in \S} \in \R^m$} and any $\gamma$, it holds that
  $$
  M_\gamma R_\S^+ \blue{z} = \sum_{\tau \in \S, \tau \subseteq \gamma} A_{\gamma,\tau} \blue{z}_\tau, 
  \qquad \text{where } A_{\gamma,\tau} := 
      \bigotimes_{k=1}^d \begin{cases}D_{(k)}^+ & 
      k \in \tau \\ 
      \big(\nicefrac{1}{n_k}\big) 1_k & k \in \gamma\setminus \tau \\ 
      1 & k \notin \gamma\end{cases} \quad \text{ for } \tau \subseteq \gamma.
  $$

  The matrix $A_{\gamma,\tau}$ has size $n_\gamma \times m_\tau$ and maps from the space of $\tau$-residuals to the space of $\gamma$-marginals.
  The running time to compute $A_{\gamma, \tau} \blue{z}_\tau$ is $\mathcal O( \abs \gamma n_\gamma)$.
\end{restatable}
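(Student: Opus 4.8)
The key idea is to exploit the orthogonality of residual row spaces to get a clean formula for $R_\S^+$, then push $M_\gamma$ through it using Kronecker algebra. First I would observe that because the row spaces of distinct $R_\tau$ are mutually orthogonal, the stacked matrix $R_\S$ has the property $R_\S R_\S^\top = \mathrm{blockdiag}\big((R_\tau R_\tau^\top)_{\tau \in \S}\big)$, so the pseudoinverse decomposes blockwise: $R_\S^+ z = \sum_{\tau \in \S} R_\tau^+ z_\tau$, where each $R_\tau^+$ is the pseudoinverse of the individual residual workload. (One should check the standard fact that for a vertically stacked matrix with pairwise-orthogonal row spaces, the pseudoinverse is the horizontal stack of the individual pseudoinverses; this follows by verifying the four Moore--Penrose conditions.) Consequently $M_\gamma R_\S^+ z = \sum_{\tau \in \S} M_\gamma R_\tau^+ z_\tau$, and the whole problem reduces to computing $M_\gamma R_\tau^+$ for a single $\tau$.

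Next I would compute $R_\tau^+$ via its Kronecker form. Since $R_\tau = \bigotimes_k B_k$ with $B_k = D_{(k)}$ for $k \in \tau$ and $B_k = 1_k^\top$ for $k \notin \tau$, and the pseudoinverse distributes over Kronecker products, $R_\tau^+ = \bigotimes_k B_k^+$, where $D_{(k)}^+$ is as in the statement and $(1_k^\top)^+ = \frac{1}{n_k} 1_k$. Then $M_\gamma R_\tau^+ = \big(\bigotimes_k C_k\big)\big(\bigotimes_k B_k^+\big) = \bigotimes_k (C_k B_k^+)$ with $C_k = I_k$ for $k \in \gamma$ and $C_k = 1_k^\top$ for $k \notin \gamma$. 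Now I would evaluate the $d$ factors $C_k B_k^+$ by cases on where $k$ sits relative to $\gamma$ and $\tau$: (i) $k \in \tau \subseteq \gamma$ gives $I_k D_{(k)}^+ = D_{(k)}^+$; (ii) $k \in \gamma \setminus \tau$ gives $I_k \cdot \frac{1}{n_k} 1_k = \frac{1}{n_k} 1_k$; (iii) $k \notin \gamma$ and $k \notin \tau$ gives $1_k^\top \cdot \frac{1}{n_k} 1_k = \frac{1}{n_k}(1_k^\top 1_k) = 1$ (a scalar). (iv) The remaining case $k \in \tau$ but $k \notin \gamma$: here $1_k^\top D_{(k)}^+$; but note $1_k^\top D_{(k)} = 0$, so $1_k^\top$ lies in the left null space of $D_{(k)}$, i.e. $1_k^\top D_{(k)}^+ = 0$, making the entire Kronecker product zero. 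This is exactly why the sum in the statement ranges only over $\tau \subseteq \gamma$: terms with $\tau \not\subseteq \gamma$ vanish. Assembling cases (i)--(iii) for $\tau \subseteq \gamma$ yields precisely the claimed $A_{\gamma,\tau}$.

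Finally, for the running time: I would compute $A_{\gamma,\tau} z_\tau$ not by forming the $n_\gamma \times m_\tau$ matrix but by applying the Kronecker factors sequentially to the tensor-shaped array $z_\tau$ (the standard "Kronecker-times-vector" trick). Applying $D_{(k)}^+$ along axis $k$ costs $O(n_k \cdot (\text{product of other axis sizes}))$, and likewise for the rank-one expansions $\frac{1}{n_k} 1_k$; summing over the $|\gamma|$ relevant axes and bounding each intermediate array size by $n_\gamma$ gives the stated $O(|\gamma|\, n_\gamma)$. (One should note $D_{(k)}^+$ is a dense $n_k \times (n_k-1)$ matrix, so applying it naively along an axis is $O(n_k^2 \cdot \text{rest})$ rather than $O(n_k \cdot \text{rest})$; the bound as stated presumably treats the per-axis factor as $O(n_k)$ up to lower-order terms, or exploits additional structure of $D_{(k)}^+$ — I would double-check this point, as it is the one place the claimed complexity is not completely immediate.)

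The main obstacle is case (iv) together with the pseudoinverse-of-Kronecker and pseudoinverse-of-stacked-orthogonal-blocks identities: these are the load-bearing facts, and the rest is bookkeeping over the three-way case split on $k$. The orthogonality claim cited from the preliminaries — that row spaces of $M_\gamma$ and $R_\tau$ are orthogonal when $\tau \not\subseteq \gamma$ — is essentially the conceptual content of case (iv), so I would make sure the algebraic computation there is consistent with it.
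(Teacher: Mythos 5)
Your proof is correct and follows essentially the same route as the paper's: the blockwise pseudoinverse $R_\S^+ z = \sum_\tau R_\tau^+ z_\tau$ via mutual orthogonality of the $R_\tau$, then the Kronecker factor-by-factor evaluation of $M_\gamma R_\tau^+$ (the paper drops the $\tau \not\subseteq \gamma$ terms by citing $R_\tau M_\gamma^\top = 0$ rather than your per-factor computation, but as you observe these are the same fact; note only that the identity you want is $D_{(k)} 1_k = 0$, so that $1_k$ is orthogonal to the column space of $D_{(k)}^+$ --- the product $1_k^\top D_{(k)}$ is not dimensionally defined). On the complexity point you flagged, your hunch is right: the paper exploits the closed form $D_{(k)}^+ = \tfrac{1}{n_k}\bigl(1_k u_k^\top - n_k C_k\bigr)$ with $C_k$ the lower-triangular matrix of ones, so $D_{(k)}^+ v$ costs $\mathcal{O}(n_k)$ via one dot product and one cumulative sum rather than $\mathcal{O}(n_k^2)$, which is exactly what yields the $\mathcal{O}(|\gamma|\, n_\gamma)$ bound.
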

The proof of this result appears in Appendix~\ref{s:proofs-MR}. The analysis of time complexity appears in Appendix~\ref{s:complexity-proofs}.

\subsection{Differential Privacy}

When releasing the results of any analysis performed on sensitive data, particular care needs to be taken to avoid leaking private information contained in the dataset. Differential privacy is a mathematical criterion that bounds the effect of any individual in the dataset on the output of a mechanism, which is satisfied by adding noise to the computation. This allows for formal quantification of the privacy risk associated with any release of information.

\begin{defn}(Differential Privacy; \cite{dwork2006calibrating})
  Let $\M: \X \rightarrow \Y$ be a randomized mechanism. For any neighboring datasets $\D, \D'$ that differ by adding or removing at most one record, denoted $\D \sim \D'$, and all measurable subsets $S \subseteq \Y$: if $\Pr(\M(\D) \in S) \leq \exp(\epsilon) \cdot \Pr(\M(\D') \in S) + \delta$, then $\M$ satisfies $(\epsilon, \delta)$-approximate differential privacy, denoted $(\epsilon, \delta)$-DP.
\end{defn}

A fundamental property of differential privacy relevant to our work is the post-processing property, which states that transformations of differentially private outputs that do not access the sensitive dataset $\D$ maintain their privacy guarantees. 
Formally:

\begin{prop}[Post-processing; \cite{Dwork14Algorithmic}] \label{th:postProcessing}
   Let $\M_1: \X \rightarrow \mathcal{Y}$ satisfy $(\epsilon, \delta)$-DP and $f: \mathcal{Y} \rightarrow \mathcal{Z}$ be a randomized algorithm. Then $\M: \X \rightarrow \mathcal{Z} = f \circ \M_1$ satisfies $(\epsilon, \delta)$-DP.
\end{prop}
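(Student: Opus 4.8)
The plan is to verify the defining differential privacy inequality for the composed mechanism $\M = f \circ \M_1$ directly, reducing the randomized case to a deterministic one via conditioning. Fix a pair of neighboring datasets $\D \sim \D'$ and an arbitrary measurable set $T \subseteq \mathcal{Z}$. The goal is to establish $\Pr(\M(\D) \in T) \leq \exp(\epsilon) \cdot \Pr(\M(\D') \in T) + \delta$, since this is exactly what $(\epsilon,\delta)$-DP demands of $\M$.

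First I would treat the case where $f$ is a deterministic (measurable) map. Here the event $\{f(\M_1(\D)) \in T\}$ coincides with $\{\M_1(\D) \in f^{-1}(T)\}$, so setting $S := f^{-1}(T) \subseteq \mathcal{Y}$ --- which is measurable because $f$ is measurable --- gives $\Pr(\M(\D) \in T) = \Pr(\M_1(\D) \in S)$ and likewise for $\D'$. Applying the $(\epsilon,\delta)$-DP guarantee of $\M_1$ to the set $S$ immediately yields the desired bound. Crucially, this step uses no property of $\D$ beyond neighboring-ness and no property of $f$ beyond measurability; it never accesses the sensitive data, which is the intuitive content of post-processing.

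Next I would handle a genuinely randomized $f$ by modeling its internal randomness explicitly. Write $f(y) = g(y, r)$, where $r \sim P$ is a source of randomness independent of both $\D$ and the output of $\M_1$, and $g(\cdot, r)$ is deterministic for each fixed realization $r$. Conditioning on $r$, the deterministic case applies verbatim to $g(\cdot, r)$, giving $\Pr(g(\M_1(\D), r) \in T) \leq \exp(\epsilon)\, \Pr(g(\M_1(\D'), r) \in T) + \delta$ for every $r$. Taking expectation over $r \sim P$ and using the independence of $r$ from the data, the left side becomes $\Pr(\M(\D) \in T)$, the first term on the right becomes $\exp(\epsilon)\,\Pr(\M(\D') \in T)$, and the expectation of the constant $\delta$ remains $\delta$.

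I expect the main subtlety to be this last averaging step: justifying the interchange of expectation and the per-realization inequality (measurability of the relevant maps and an appeal to Fubini--Tonelli, or simply monotonicity and linearity of expectation applied to the conditional bound), and observing that the additive term does \emph{not} degrade because the \emph{same} $\delta$ bounds every conditional inequality uniformly --- so it survives the averaging unchanged rather than accumulating. Everything else is bookkeeping, and since the argument holds for all measurable $T$, the post-processing property follows and $\M$ is $(\epsilon,\delta)$-DP.
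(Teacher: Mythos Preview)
Your argument is correct and is essentially the standard textbook proof of post-processing (e.g., the one in Dwork--Roth): reduce to deterministic $f$ via the preimage $S = f^{-1}(T)$, then average over the internal randomness of a randomized $f$, noting that the additive $\delta$ is uniform and so survives the expectation unchanged.

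There is nothing to compare against here, however: the paper does not supply its own proof of this proposition. It is stated as a background fact with a citation to \cite{Dwork14Algorithmic} and never proved in the text or appendix (the zCDP analogue in Proposition~\ref{th:zcdpProperties} is likewise only cited). So your write-up stands on its own as a correct proof of a result the paper treats as known.
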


The reconstruction methods we propose in this paper are post-processing algorithms that take as input a set of noisy linear query answers and, thus, inherit the privacy guarantees from those noisy answers.
Note that the present analysis is largely agnostic to the model of differential privacy used.

We discuss variants of differential privacy and privacy guarantees for query answering in Appendix~\ref{s:dp_appendix}. 

\subsection{Private query answering}
In private query answering, we are given a \emph{workload} of linear queries $W \in \R^{m \times n}$. We seek to approximate the answers $W p$ as accurately as possible while satisfying differential privacy. A general recipe for private query answering is \emph{select-measure-reconstruct}. 
\emph{Data-independent} mechanisms following this recipe such as the various matrix mechanisms \cite{li2010optimizing, li2012adaptive, li2015matrix, mckenna2018optimizing, xiao2024optimal} select and measure a set of queries $Q$ and reconstruct answers to $W$.
\emph{Data-dependent} mechanisms following this recipe such as MWEM \cite{hardt2010simple} and various synthetic data mechanisms \citep{liu2021iterative, aydore2021differentially, mckenna2021winning, cai2021data, mckenna2022aim} typically maintain a model $\hat p$ of the data distribution $p$ that is improved iteratively by repeating the steps of select-measure-reconstruct and adaptively measuring queries that are poorly approximated by the current model $\hat p$.
The key idea is that it is often possible to obtain lower error by measuring a different set of queries $Q$ than $W$ and then using answers to $Q$ to reconstruct answers for $W$. 
In this paper, we focus on the reconstruction subproblem and propose methods applicable to both the data-independent and data-dependent settings.

\subsection{Query answer reconstruction}
Reconstruction is a central subproblem to query answering. 
Suppose $y = Qp + \xi$ is the a set of measurements.
To reconstruct a data distribution, we seek $\hat p$ such that $Q \hat p \approx y$.
One method is to set $\hat p = Q^+ y$ where $Q^+$ is the Moore-Penrose pseudoinverse.
This method is used in the matrix mechanism~\cite{li2015matrix} and HDMM~\cite{mckenna2018optimizing} but is not tractable in high dimensions.
\blue{One contribution of our proposed method is to demonstrate  that this pseudoinverse reconstruction is tractable when the query matrix $Q$ is a set of marginal measurements and $\xi$ is isotropic Gaussian noise.}
\Cam{Do we want to mention in Sec 1 that one contribution of ours is making this tractible when Q measures marginals? Right now we don't say it explicitly, and here its a bit funny because we are just saying it is intractible rather than that it was previously intractible.}
Other reconstruction methods such as Private-PGM \cite{mckenna2019graphical} and those used by the mechanisms PrivBayes~\cite{zhang2017privbayes},  GEM~\cite{liu2021iterative}, RAP~\cite{aydore2021differentially}, and RAP++~\cite{vietri2022private} represent $\hat p$ through a parametric representation.
These (usually) ensure tractability in high dimensions by using a compact representation, but introduce different tradeoffs. 
The parametric assumption typically restricts the space of data distributions that can be represented~\cite{zhang2017privbayes,liu2021iterative,aydore2021differentially,vietri2022private}.
Optimizing over the parameteric representation is often non-convex, potentially leading to suboptimal optimization~\cite{liu2021iterative,aydore2021differentially,vietri2022private}.
Private-PGM solves a convex optimization problem and is closest to the methods of this paper. However its complexity depends on the measured queries and is still exponential in the worst case~\cite{mckenna2019graphical}; our methods will not have exponential complexity.

\blue{We note that all of these above reconstruction methods, and the methods presented in this work,} only depend on the dataset through the noisy query answers and, thus, satisfy the same degree of privacy as the answers by the post-processing property of differential privacy (Proposition \ref{th:postProcessing}).

\section{Efficient Marginal Reconstruction from Residuals} \label{s:ReM}
In this section, we discuss methods for reconstructing answers to a workload of marginal queries given measurements of residuals. 
These methods utilize the structure of marginals and residuals to make reconstruction tractable and minimize error.  
Let $\W \subseteq 2^{[d]}$ and $M_\W = (M_\gamma)_{\gamma \in \W}$ be the combined workload of marginals for all of the attribute sets in $\W$ (e.g., all pairs or triples of attributes). 
Similarly, let $R_{\S} = (R_{\tau})_{\tau \in \S}$ represent a set of residual queries for all $\tau$ in a collection $\S$.
Our goal is to estimate the marginal query answers $M_{\W} p$ from noisy measurements $\blue{z} = R_\S p + \xi$.

\begin{wrapfigure}{r}{0.50\textwidth}
  \begin{minipage}[t]{\linewidth}
  \vspace{-22pt}
  \begin{algorithm}[H]
  \caption{ResidualPlanner reconstruction}
  \label{alg:ResidualPlanner}
  \begin{algorithmic}[1]
  \Require Marginal workload $\W$, $\S = \W^{\downarrow}$, \blue{measurements $z_\tau = R_\tau p + \mathcal N(0, \Sigma_\tau)$ for $\tau \in \S$}
  \State Reconstruct $\hat \mu_{\gamma} = \sum_{\tau \subseteq \gamma} A_{\gamma, \tau} \blue{z}_\tau$ for $\gamma \in \W$
  \end{algorithmic}
  \end{algorithm}
  \end{minipage}
  \vspace{-10pt}
  \end{wrapfigure}

\paragraph{ResidualPlanner.} 
ResidualPlanner~\cite{xiao2024optimal} solves this problem elegantly in the matrix mechanism (i.e. data-independent) setting under Gaussian noise. 
Let $\W^{\downarrow} = \{\tau \subseteq \gamma: \gamma \in \W\}$ denote the \emph{downward closure} of $\W$.
When $\S=\W^{\downarrow}$, the residual queries for $\S$ uniquely determine the marginals for $\W$, i.e., there is an invertible linear transformation between $M_\W$ and $R_\S$.
This yields the reconstruction approach in Alg.~\ref{alg:ResidualPlanner}.
\blue{We suppose} the residual queries $R_\tau$ are measured with Gaussian noise to yield~$\blue{z}_\tau$.
In \blue{Line 1}, the marginals are reconstructed by applying the invertible transformation from residuals to marginals. 
This reconstruction is equivalent to setting $\hat \mu_\gamma = M_\gamma \hat p$ where $\hat p = R_\S^+ \blue{z}$ and $\blue{z} = (\blue{z}_\tau)_{\tau \in \S}$ by Proposition~\ref{prop:reconstruct}.

The full ResidualPlanner algorithm additionally chooses each $\Sigma_\tau = \sigma_\tau^2 D_\tau D_\tau^\top $ such that the resulting algorithm \emph{optimally} answers the marginal workload indexed by $\W$ to minimize error under a natural class of convex loss functions for a given privacy budget~\cite{xiao2024optimal}.
That this can be done efficiently for a broad class of error metrics for marginal workloads is significant given the computational challenges that are often faced when attempting to optimally select measurements and reconstruct workload answers in high dimensions.

\paragraph{A general approach to reconstruction.} 
We propose a reconstruction algorithm that, like the one in ResidualPlanner, is efficient and principled, but that applies in more general settings. 
Reconstruction in ResidualPlanner uses the invertible transformation from residuals to marginals.
This restricts to the case where the measured queries \emph{exactly} determine the workload queries in the absence of noise.
To address the full range of applications, it is important to address the cases where workload queries are overdetermined, underdetermined, or both.

\begin{wrapfigure}{r}{0.56\textwidth}
  \begin{minipage}[t]{\linewidth}
  \vspace{-25pt}
  \begin{algorithm}[H]
  \caption{Residuals-to-Marginals (ReM)}
  \label{alg:proposed-reconstruction}
  \begin{algorithmic}[1]
  \Require \blue{Marginal workload $\W$, arbitrary $\S$, measurements $z_{\tau,i} = R_\tau p + \xi_{\tau,i}$ for $\tau \in \S$, $i = 1, \ldots, k_\tau$, where $\xi_{\tau,i}$ comes from any noise distribution}  \smallskip
  \State Estimate $\hat \alpha_\tau \approx R_\tau p$ for $\tau \in \S$ by minimizing loss function $L_\tau(\alpha_\tau)$ \smallskip
  \State Reconstruct $\hat \mu_{\gamma} = \sum_{\tau \in \S: \tau \subseteq \gamma} A_{\gamma, \tau} \hat \alpha_\tau$ for $\gamma \in \W$
  \end{algorithmic}
  \end{algorithm}
  \end{minipage}
  \vspace{-10pt}
\end{wrapfigure}

Our proposed algorithm, ReM, is shown in Alg.~\ref{alg:proposed-reconstruction}. 
Compared to ResidualPlanner, the main differences are: (1) the set $\S$ of measured residuals is arbitrary, (2) a residual query can be measured any number of times with any noise distribution, (3) an optimization problem is solved for each $\tau$ to estimate the true residual query answer $\hat \alpha_\tau \approx R_\tau p$, (4) reconstruction uses the estimated residuals $\hat \alpha_\tau$ instead of the noisy measurements $\blue{z}_\tau$.
The loss function $L_\tau(\alpha_\tau)$ in Line 2 captures how well $\alpha_\tau$ explains the entire set of noisy measurements $\{\blue{z}_{\tau, i}\}_{i=1,\ldots, k_\tau}$. 
For example, a typical choice is 
$L_\tau(\alpha_\tau) = -\sum_{i=1}^{k_\tau} \log p(\blue{z}_{\tau,i} | R_\tau p = \alpha_\tau)$,
the negative log-likelihood of the measurements.

The following result shows that solving the optimization problems in \blue{Line 1} is equivalent to finding a compact representation of a data distribution $\hat p$ that minimizes a global reconstruction loss and then using $\hat p$ to answer each marginal query.

\Cam{Should we reword the second sentence of the theorem to say: `Then Alg. 2 outputs $\hat \mu_{\gamma} = M_\gamma \hat p$ where $\hat p =  R_\S^+ \hat{\alpha}$ is a global minimizer... That way it is 100\% clear how the theorem relates to the algorithm.}
\begin{restatable}{thm}{remcorrectness}
  \label{th:rem_correctness}
  Suppose $\hat \alpha_\tau$ minimizes $L_\tau(\alpha_\tau)$ over $\R^{m_\tau}$ for each  $\tau \in \mathcal{S}$ and let $\hat \alpha = (\hat \alpha_\tau)_{\tau \in \S}$. 
  Then Alg.~\ref{alg:proposed-reconstruction} outputs $\hat \mu_{\gamma} = M_\gamma \hat p$, where $\hat{p} = R_\S^+ \hat{\alpha}$ is a global minimizer of the combined loss function $\sum_{\tau \in \mathcal{S}} L_\tau(R_\tau p)$ over $\R^n$. 
\end{restatable}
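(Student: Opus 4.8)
The plan is to split the statement into two parts: (i) the output of Alg.~\ref{alg:proposed-reconstruction} equals $M_\gamma \hat p$ with $\hat p = R_\S^+\hat\alpha$, and (ii) this $\hat p$ is a global minimizer of $F(p) := \sum_{\tau\in\S} L_\tau(R_\tau p)$ over $\R^n$. Part (i) is immediate from Proposition~\ref{prop:reconstruct}: that identity holds for \emph{any} vector $z=(z_\tau)_{\tau\in\S}\in\R^m$, so instantiating it with $z := \hat\alpha$ gives $M_\gamma R_\S^+ \hat\alpha = \sum_{\tau\in\S,\,\tau\subseteq\gamma} A_{\gamma,\tau}\hat\alpha_\tau$, and the right-hand side is precisely the quantity $\hat\mu_\gamma$ returned in Line~2 of Alg.~\ref{alg:proposed-reconstruction}. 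Hence $\hat\mu_\gamma = M_\gamma(R_\S^+\hat\alpha) = M_\gamma\hat p$.

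For part (ii) the key structural fact I would establish is that $R_\S$ has full row rank, i.e.\ is surjective onto $\R^m = \prod_{\tau\in\S}\R^{m_\tau}$. This rests on two facts already recorded in the preliminaries: each $R_\tau = D_\tau M_\tau$ is individually surjective onto $\R^{m_\tau}$ (since $M_\tau$ is surjective onto $\R^{n_\tau}$ and $D_\tau$, a Kronecker product of full-row-rank differencing matrices, is surjective onto $\R^{m_\tau}$), and the row spaces of $R_\tau$ and $R_{\tau'}$ are mutually orthogonal for $\tau\neq\tau'$. Concretely, given an arbitrary target $(\beta_\tau)_{\tau\in\S}$, set $p_\tau := R_\tau^+\beta_\tau$; this lies in the row space of $R_\tau$ and satisfies $R_\tau p_\tau = \beta_\tau$ (as $R_\tau R_\tau^+ = I_{m_\tau}$ by surjectivity). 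Then $p := \sum_{\tau\in\S} p_\tau$ satisfies $R_{\tau'}p = \beta_{\tau'}$ for every $\tau'\in\S$, because for $\tau\neq\tau'$ the vector $p_\tau$ lies in the row space of $R_\tau$, hence in the orthogonal complement of the row space of $R_{\tau'}$, which equals $\ker R_{\tau'}$; so the cross terms vanish. Surjectivity of $R_\S$ gives $R_\S R_\S^+ = I_m$, and therefore $R_\S\hat p = R_\S R_\S^+\hat\alpha = \hat\alpha$, i.e.\ $R_\tau\hat p = \hat\alpha_\tau$ for every $\tau\in\S$.

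To conclude, I would use a coordinate-wise lower bound that needs no hypotheses on the $L_\tau$ beyond the assumed existence of the per-$\tau$ minimizers. For any $p\in\R^n$,
\[
F(p) = \sum_{\tau\in\S} L_\tau(R_\tau p) \;\ge\; \sum_{\tau\in\S} \min_{\alpha_\tau\in\R^{m_\tau}} L_\tau(\alpha_\tau) \;=\; \sum_{\tau\in\S} L_\tau(\hat\alpha_\tau),
\]
so the right-hand side is a uniform lower bound for $F$. By the previous paragraph it is attained at $\hat p$, since $F(\hat p) = \sum_{\tau\in\S} L_\tau(R_\tau\hat p) = \sum_{\tau\in\S} L_\tau(\hat\alpha_\tau)$. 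Hence $\hat p = R_\S^+\hat\alpha$ is a global minimizer of $F$ over $\R^n$, which together with part (i) proves the theorem.

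I expect the only non-routine ingredient to be the surjectivity of $R_\S$ (equivalently $R_\S R_\S^+ = I_m$); the rest is a direct application of Proposition~\ref{prop:reconstruct} and an elementary inequality. The orthogonality of distinct residual row spaces is exactly what decouples the problem, making the separate per-$\tau$ optimizations of Line~1 equivalent to a single global optimization over $p$; I would invoke it cleanly from the preliminaries (citing \cite{darroch1983additive,fienberg2006computing,xiao2024optimal}) rather than re-deriving it, and I would be careful to note that the theorem only asserts that $\hat p$ is \emph{a} global minimizer, so no uniqueness or convexity of the $L_\tau$ is required.
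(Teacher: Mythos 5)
Your proposal is correct and follows essentially the same route as the paper: both establish that $R_\S$ has full row rank (each block $R_\tau$ has full row rank and distinct blocks have mutually orthogonal row spaces), deduce $R_\tau\hat p = \hat\alpha_\tau$ from $R_\S R_\S^+ = I$, and then conclude via the elementary bound $\sum_\tau L_\tau(R_\tau p) \ge \sum_\tau L_\tau(\hat\alpha_\tau) = \sum_\tau L_\tau(R_\tau\hat p)$, with Proposition~\ref{prop:reconstruct} handling the algorithmic output. Your explicit construction of a preimage to verify surjectivity is a minor elaboration of what the paper cites from Proposition~\ref{th:residual_properties}, but the argument is the same.
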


\Cam{Reword to 'by showing that $R_\tau\hat p = \hat \alpha_\tau$ for all $\tau$, and thus $\hat p$ optimizes each individual loss function $L_\tau$, and so must be a global minimizer'?}
\blue{This result is proved (in \blue{Appendix~\ref{s:proofs-rem}}) by showing that $R_\tau\hat p = \hat \alpha_\tau$ for all $\tau$, and thus $\hat p$ optimizes each individual loss function $L_\tau$, and so must be a global minimizer.} 
Proposition~\ref{prop:reconstruct} then shows that $\hat \mu_\gamma = M_\gamma \hat p = M_\gamma R_\S^+ \hat \alpha$ has the form given in \blue{Line 2} of the algorithm.

\section{\blue{Applications of ReM under Gaussian Noise}} \label{s:GReM}

\blue{In this section, we apply \ReM{} to reconstruct answers to marginal queries in various settings: (1) we reconstruct from residuals measured with Gaussian noise, (2) we reconstruct from marginals measured with isotropic Gaussian noise, and (3) we reconstruct non-negative answers from residuals measured with Gaussian noise.}

\subsection{Reconstruction under Gaussian noise} \label{s:GReM-MLE}

\begin{wrapfigure}{r}{0.57\textwidth}
  \begin{minipage}[t]{\linewidth}
  \vspace{-25pt}
  \begin{algorithm}[H]
  \caption{\blue{Gaussian \ReM{} with Maximum Likelihood Estimation (GReM-MLE)}}
  \label{alg:grem-mle}
  \begin{algorithmic}[1]
  \Require Marginal workload $\W$, arbitrary $\S$, \linebreak measurements \blue{$z_{\tau, i} = R_\tau p + \mathcal N(0, \Sigma_{\tau,i})$} for $\tau \in \S$, \linebreak $i = 1, \ldots, k_\tau$ \ds{Changed $\Sigma_\tau \to \Sigma_{\tau,i}$} \smallskip
  \State Estimate $\hat{\alpha}_\tau = \big( \sum_{i} \Sigma_{\tau, i}^{-1} \big)^{-1} \sum_{i} \Sigma_{\tau, i}^{-1} \blue{z}_{\tau, i}$ for $\tau \in \S$ \smallskip
  \State Reconstruct $\hat \mu_{\gamma} = \sum_{\tau \in \S: \tau \subseteq \gamma} A_{\gamma, \tau} \hat \alpha_\tau$ for $\gamma \in \W$
  \end{algorithmic}
  \end{algorithm}
  \end{minipage}
  \vspace{-10pt}
\end{wrapfigure}

An instance of \ReM{} that allows for efficient computation is when residuals are measured with Gaussian noise i.e., $\blue{z}_{\tau, i} = R_\tau p + \xi_{\tau,i}$ where $\xi_{\tau,i} \sim \N(0, \Sigma_{\tau, i})$ and the loss function $L_\tau(\alpha_\tau)$ is the negative log-likelihood of the measurements. In this case, $\hat \alpha = (\hat \alpha_\tau)_{\tau \in \S}$ is the maximum likelihood estimate of the residual answers $\alpha = (\alpha_\tau)_{\tau \in \S}$. We refer to this setting as \GReM-MLE (Gaussian \ReM{} with Maximum Likelihood Estimation), \blue{shown in Alg.~\ref{alg:grem-mle}}.

The loss function $L_\tau(\alpha_\tau)$ is a sum of quadratic forms given by $L_\tau(\alpha_\tau) = \sum_{i = 1}^{k_\tau} (\alpha_\tau - \blue{z}_{\tau, i})^\top \Sigma_{\tau, i}^{-1} (\alpha_\tau - \blue{z}_{\tau, i})$. In this setting, the optimization problems in \blue{Line 1} of Alg.~\ref{alg:proposed-reconstruction} have the closed-form solution
$\hat{\alpha}_\tau = \big( \sum_{i} \Sigma_{\tau, i}^{-1} \big)^{-1} \sum_{i} \Sigma_{\tau, i}^{-1} \blue{z}_{\tau, i}$, which is a form of inverse-variance weighting and can be verified by setting the gradient of the loss function to zero.

\GReM-MLE improves computational tractability by reducing Alg.~\ref{alg:proposed-reconstruction} to operations on matrices. 
Moreover, if the covariances \blue{among} measurements of residual $R_\tau$ differ only by a constant \blue{for $\tau \in \S$}, i.e., $\Sigma_{\tau, i} = \sigma^2_{\tau, i} K_\tau$ where $\sigma_{\tau, i} \in \R$, then \blue{$\hat \alpha_\tau$ can be computed as a weighted average given by $\hat \alpha_\tau = (\sum_{i} \sigma_{\tau, i}^{-2})^{-1} \sum_{i} \sigma_{\tau, i}^{-2} z_{\tau, i} $. }
\blue{All instances of \GReM-MLE considered throughout the paper satisfy this assumption of proportional covariances for each $\tau \in \S$.}

\subsection{\blue{Reconstruction from marginals}}

A common practice in existing mechanisms is to measure marginal queries with isotropic Gaussian noise \cite{li2015matrix, mckenna2021hdmm, liu2021iterative, mckenna2021winning, vietri2022private, mckenna2022aim}. 
In this special case, the measurements can be converted to an equivalent set of residual measurements with independent Gaussian noise, allowing us to apply \GReM-MLE. 

The key observation is that a marginal query answer $\mu_\gamma = M_\gamma p$ for attribute set $\gamma$ can be used to derive residual answers $\alpha_\tau = R_\tau p$ for each $\tau \subseteq \gamma$ via \blue{the} following Lemma (proved in Appendix \ref{s:proofs-rem}):
\begin{restatable}{lem}{decompidentity}
    \blue{For $\tau \subseteq \gamma$, the residual $R_\tau$ can be recovered from the marginal $M_\gamma$ as}
    \begin{align*}
        R_\tau = A_{\gamma,\tau}^+ M_\gamma \text{ where } A_{\gamma,\tau}^+ = \bigotimes_{k=1}^d
        \begin{cases}
            D_{(k)} & k \in \tau \\
            1^T_k & k \in \gamma \setminus \tau \\
            1 & k \notin \gamma
        \end{cases}.
    \end{align*}
    \label{lem:decomp-identity}
\end{restatable}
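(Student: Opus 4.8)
The plan is to establish the identity in two stages: first compute $A_{\gamma,\tau}^+$ explicitly and confirm it equals the displayed Kronecker product, then left-multiply $M_\gamma$ by it and check that the result collapses to $R_\tau$. Both stages use only two standard Kronecker-product facts (see Appendix~\ref{s:kron}): the pseudoinverse factorizes across tensor factors, $(A \otimes B)^+ = A^+ \otimes B^+$ for arbitrary matrices $A, B$; and the mixed-product property $(A \otimes B)(C \otimes D) = (AC)\otimes(BD)$ whenever the inner products are defined.

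For the first stage, I would start from $A_{\gamma,\tau} = \bigotimes_{k=1}^d F_k$, where $F_k = D_{(k)}^+$ for $k \in \tau$, $F_k = (1/n_k)\,1_k$ for $k \in \gamma\setminus\tau$, and $F_k = 1$ for $k \notin \gamma$ (Proposition~\ref{prop:reconstruct}), so that $A_{\gamma,\tau}^+ = \bigotimes_{k=1}^d F_k^+$. It then remains to compute each factor's pseudoinverse: $(D_{(k)}^+)^+ = D_{(k)}$ because the pseudoinverse is involutive; $1^+ = 1$ trivially; and for the column vector $v = (1/n_k)\,1_k$ the formula $v^+ = v^\top / \norm{v}^2$ together with $\norm{v}^2 = n_k \cdot (1/n_k)^2 = 1/n_k$ gives $v^+ = 1_k^\top$. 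Substituting these factors yields exactly the claimed expression for $A_{\gamma,\tau}^+$.

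For the second stage, I would apply the mixed-product property to $A_{\gamma,\tau}^+ M_\gamma$, pairing the $k$-th factor of $A_{\gamma,\tau}^+$ with the $k$-th factor of $M_\gamma$, which is $I_k$ for $k \in \gamma$ and $1_k^\top$ for $k \notin \gamma$ (note $\tau \subseteq \gamma$). The three cases give $D_{(k)} I_k = D_{(k)}$ for $k \in \tau$, $1_k^\top I_k = 1_k^\top$ for $k \in \gamma\setminus\tau$, and $1 \cdot 1_k^\top = 1_k^\top$ for $k \notin \gamma$. Since $\tau \subseteq \gamma$, the index sets $\gamma\setminus\tau$ and $[d]\setminus\gamma$ together partition $[d]\setminus\tau$, so the product is the Kronecker product whose $k$-th factor equals $D_{(k)}$ when $k \in \tau$ and $1_k^\top$ otherwise --- precisely the Kronecker form of $R_\tau$ from Fig.~\ref{fig:kronecker-residuals}.

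I do not expect a genuine obstacle here: the argument is essentially bookkeeping once the two Kronecker identities are in hand. The only points requiring care are tracking the $1/n_k$ scaling through the pseudoinverse of the all-ones column vector, and verifying that the union of the cases $k \in \gamma\setminus\tau$ and $k \notin \gamma$ exactly reproduces the single case $k \notin \tau$ appearing in the definition of $R_\tau$.
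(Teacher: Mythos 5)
Your proposal is correct and follows essentially the same route as the paper: identify the Kronecker form of $A_{\gamma,\tau}^+$ factor by factor and then collapse $A_{\gamma,\tau}^+ M_\gamma$ via the mixed-product property. The only cosmetic difference is that you obtain the form of $A_{\gamma,\tau}^+$ by applying $(A\otimes B)^+ = A^+\otimes B^+$ directly to the factors of $A_{\gamma,\tau}$, whereas the paper writes $A_{\gamma,\tau}^+ = R_\tau M_\gamma^+$; both yield the same factors and the rest of the computation is identical.
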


Whereas $A_{\gamma, \tau}$ maps answers from residual $R_\tau$ to answers to marginal $M_\gamma$, the matrix $A_{\gamma, \tau}^+$ maps answers from marginal $M_\gamma$ to residual $R_\tau$. Furthermore, $\mu_\gamma$ can be reconstructed from the set of all residuals $(\alpha_\tau)_{\tau \subseteq \gamma}$, so these residuals carry equivalent information to the marginal. 
Additionally, when the marginal is observed with isotropic noise as $y_\gamma = M_\gamma p + \N(0, \blue{\sigma^2_\gamma} I)$, the corresponding noisy residuals $A_{\gamma,\tau}^+ \blue{z}_\tau$ are independent. 
As a consequence, we can \blue{decompose} a noisy marginal measurement \blue{into} a set of equivalent and independent noisy residual measurements.

\begin{restatable}{thm}{marginalsTOresiduals}
  \label{th:marginals2residuals}
  Let $y_\gamma \sim \N(M_{\gamma} p, \sigma^2 I)$ be a noisy marginal measurement with isotropic Gaussian noise and let $z_{\tau} = A_{\gamma,\tau}^+ y_\gamma$ for each $\tau \subseteq \gamma$. 
  Then noisy residual $z_\tau$ has distribution $\N(R_\tau p, \sigma^2 D_\tau D_\tau^\top \prod_{k \in \gamma \setminus \tau} n_k)$ and $z_\tau$ is independent of $z_{\tau'}$ for $\tau \neq \tau'$. 

  Furthermore, let $H_\gamma = (A_{\gamma,\tau}^+)_{\tau\subseteq \gamma}$ be the matrix mapping from $y_\gamma$ to $(z_\tau)_{\tau \subseteq \gamma}$. 
  This matrix is invertible, which implies that
  \begin{equation}
  \log \N(y_{\gamma} | M_\gamma p, \sigma^2 I) 
  = \sum_{\tau \subseteq \gamma} \log \N\Big(z_\tau \,\big|\, R_\tau p,\, \sigma^2 D_\tau D_\tau^\top \prod_{k \in \gamma\setminus \tau} n_k\Big) + \log |\det H_\gamma|.
  \label{eq:likelihood}
  \end{equation}
\end{restatable}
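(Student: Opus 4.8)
The plan is to use the fact that $(z_\tau)_{\tau\subseteq\gamma} = H_\gamma y_\gamma$ is an affine image of the single Gaussian vector $y_\gamma \sim \N(M_\gamma p, \sigma^2 I)$, so the whole collection is jointly Gaussian and everything reduces to (i) computing first and second moments via Kronecker algebra, (ii) checking that $H_\gamma$ is invertible, and (iii) applying the change-of-variables formula for densities. For the moment computations I would write each $A_{\gamma,\tau}^+ = \bigotimes_{k\in\gamma} B_k$ with $B_k = D_{(k)}$ for $k\in\tau$ and $B_k = 1_k^\top$ for $k\in\gamma\setminus\tau$ (the $k\notin\gamma$ factors are the scalar $1$ and play no role), and repeatedly invoke the mixed-product property of Kronecker products.

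\emph{Part 1 (distribution of each $z_\tau$).} Since $z_\tau = A_{\gamma,\tau}^+ y_\gamma$ is affine in $y_\gamma$ it is Gaussian; its mean is $A_{\gamma,\tau}^+ M_\gamma p = R_\tau p$ by \cref{lem:decomp-identity}. For the covariance,
\[
\mathrm{Cov}(z_\tau) = \sigma^2 A_{\gamma,\tau}^+ (A_{\gamma,\tau}^+)^\top = \sigma^2 \bigotimes_{k\in\gamma} B_k B_k^\top = \sigma^2 \Big(\prod_{k\in\gamma\setminus\tau} n_k\Big) \bigotimes_{k\in\tau} D_{(k)} D_{(k)}^\top,
\]
using $1_k^\top 1_k = n_k$ for the factors with $k\in\gamma\setminus\tau$; since $D_\tau D_\tau^\top = \bigotimes_{k\in\tau} D_{(k)} D_{(k)}^\top$ this equals $\sigma^2 D_\tau D_\tau^\top \prod_{k\in\gamma\setminus\tau} n_k$, as claimed. (Each $D_{(k)}$ has full row rank, so this covariance is nonsingular on $\R^{m_\tau}$ and the density used below is well defined.)

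\emph{Part 2 (independence).} As the $\{z_\tau\}$ are jointly Gaussian, it suffices to show $\mathrm{Cov}(z_\tau, z_{\tau'}) = \sigma^2 A_{\gamma,\tau}^+ (A_{\gamma,\tau'}^+)^\top = 0$ for $\tau\neq\tau'$. By the mixed-product property this cross-covariance is a Kronecker product over $k\in\gamma$ of the local products $B_k B_k'^\top$; choosing any $k$ in the symmetric difference — say $k\in\tau\setminus\tau'$, so the two local factors are $D_{(k)}$ and $1_k^\top$ — yields a factor $D_{(k)} 1_k = 0$ (the rows of $D_{(k)}$ sum to zero), so the entire Kronecker product vanishes. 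Pairwise zero cross-covariance among jointly Gaussian vectors gives mutual independence.

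\emph{Parts 3--4 (invertibility and likelihood).} First, the dimension count $\sum_{\tau\subseteq\gamma} m_\tau = \sum_{\tau\subseteq\gamma}\prod_{k\in\tau}(n_k-1) = \prod_{k\in\gamma}\bigl(1+(n_k-1)\bigr) = n_\gamma$ shows $H_\gamma$ is square. For invertibility I would recognize that stacking the two local choices ($k\in\tau$ versus $k\notin\tau$) for every $k\in\gamma$ realizes $H_\gamma$, up to a row permutation, as $\bigotimes_{k\in\gamma}\bigl[\begin{smallmatrix}1_k^\top\\ D_{(k)}\end{smallmatrix}\bigr]$, and each block $\bigl[\begin{smallmatrix}1_k^\top\\ D_{(k)}\end{smallmatrix}\bigr]\in\R^{n_k\times n_k}$ is invertible because $\mathrm{rowspace}(D_{(k)})$ has dimension $n_k-1$ and is orthogonal to $1_k$; alternatively, invertibility follows from $H_\gamma M_\gamma = (R_\tau)_{\tau\subseteq\gamma}$ and the known equivalence of $M_\gamma$ and $(R_\tau)_{\tau\subseteq\gamma}$. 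Finally, since $(z_\tau)_{\tau\subseteq\gamma} = H_\gamma y_\gamma$ with $H_\gamma$ invertible, the change-of-variables formula gives $\log\N(y_\gamma\mid M_\gamma p,\sigma^2 I) = \log p\bigl((z_\tau)_{\tau\subseteq\gamma}\bigr) + \log\abs{\det H_\gamma}$, and by Parts 1--2 the joint density factors as $\prod_{\tau\subseteq\gamma}\N\!\bigl(z_\tau\mid R_\tau p,\ \sigma^2 D_\tau D_\tau^\top\prod_{k\in\gamma\setminus\tau} n_k\bigr)$, which is exactly~\eqref{eq:likelihood}. I expect the only mildly delicate point to be making the ``row permutation of a Kronecker product'' argument for $\det H_\gamma\neq0$ precise (equivalently, carefully invoking the $M_\gamma$–residual equivalence); the moment computations are routine Kronecker bookkeeping.
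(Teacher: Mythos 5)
Your proposal is correct and follows essentially the same route as the paper's proof: the mean of $z_\tau$ via Lemma~\ref{lem:decomp-identity}, the covariance and cross-covariance via the Kronecker mixed-product property (with $D_{(k)}1_k = 0$ annihilating the cross terms for $\tau \neq \tau'$), and the likelihood identity via the change-of-variables formula for the invertible map $H_\gamma$. The only point of divergence is how invertibility of $H_\gamma$ is established: the paper argues that the blocks $A_{\gamma,\tau}^+$ each have full row rank and mutually orthogonal row spaces, so $\mathrm{rank}(H_\gamma)$ equals the total row count $\sum_{\tau\subseteq\gamma} m_\tau = n_\gamma$, whereas your identification of $H_\gamma$ (up to row permutation) with $\bigotimes_{k\in\gamma}\bigl[\begin{smallmatrix}1_k^\top\\ D_{(k)}\end{smallmatrix}\bigr]$, each factor invertible, is an equally valid and arguably more structural argument, and your dimension count $\sum_{\tau\subseteq\gamma}\prod_{k\in\tau}(n_k-1) = \prod_{k\in\gamma}\bigl(1+(n_k-1)\bigr) = n_\gamma$ is cleaner than the combinatorial counting in the paper's footnote.
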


\begin{figure}
  \begin{algorithm}[H]
  \caption{\blue{Efficient Marginal Pseudoinversion (EMP)} }
  \label{alg:efficient-pseudoinverse}
  \begin{algorithmic}[1]
  \Require Marginal workload $\W$, measured marginals multiset $\mathcal{Q}$, measurements $y = (y_\gamma)_{\gamma \in \mathcal{Q}}$ where $y_{\gamma} = M_\gamma p + \mathcal N(0, \sigma^2 I)$ for $\gamma \in \mathcal{Q}$ \smallskip
  \Ensure Marginal answers $ (M_\gamma M_\mathcal{Q}^+ y)_{\gamma \in \W} $ \smallskip
  \State Initialize $\S = \emptyset$ and $k_\tau = 0$ for all $\tau$ \Comment{Track measured residuals, lazy data structure for $k_\tau$} \smallskip
  \For{$\gamma \in \mathcal{Q}$} \smallskip
    \For{$\tau \subseteq \gamma$} \smallskip
      \State $\S = \S \cup \{ \tau \}, \ \ k_\tau \gets k_\tau + 1$ \smallskip
      \State $z_{\tau, k_\tau} = A_{\gamma, \tau}^+ y_{\gamma}$ \Comment{\blue{Extract residual measurement from $y_\gamma$}} \smallskip
      \State $\sigma^2_{\tau,k_\tau} = \sigma^2 \prod_{k \in \gamma \setminus \tau} n_k$ \Comment{\blue{Compute noise scale}} \smallskip
      \State $\Sigma_{\tau,k_\tau} = \sigma^2_{\tau,k_\tau} D_\tau D_\tau^\top$   \Comment{\blue{Proportional covariance}} \smallskip
      \EndFor \smallskip
  \EndFor 
  \Return GReM-MLE($\W$, $\S$, $z$) where $z = (z_{\tau, i}: \tau \in \S, i=1,\ldots, k_\tau)$
  \end{algorithmic}
  \end{algorithm}
\end{figure}

\blue{Given a collection of noisy marginal measurements, we can apply the above decomposition to obtain a set of independent noisy residuals with proportional covariances.
To reconstruct marginal answers, we can apply \GReM-MLE to the residuals.
Alg.~\ref{alg:efficient-pseudoinverse} shows this decomposition and reconstruction.
Equation~\eqref{eq:likelihood} shows that the noisy residual measurements \blue{and noisy marginal measurements} are equivalent from the perspective of finding the best data vector $p$ by maximum likelihood, because the log-likelihood of the residual measurements differs from the log-likelihood of the marginal measurement by a constant $\log|\det H_\gamma|$ that is independent of $p$, and measurements of marginals are each independent.
A maximum likelihood estimate of $p$ from the marginal measurements $y$ is given by using the pseudoinverse of the measured workload to map noisy marginal measurements to a data vector. 
The following result shows that the method in Alg.~\ref{alg:efficient-pseudoinverse} is equivalent to answering queries from this maximum likelihood estimate of the data vector given the marginal measurements when the marginals are measured with the same noise scale.} 

\newpage
\begin{restatable}[Efficient pseudoinversion of marginal query matrix]{thm}{pinvgrem}
  \label{th:pinv-grem}
\blue{Let $M_{\mathcal{Q}} = (M_{\gamma})_{\gamma \in \mathcal{Q}}$ be the query matrix for a multiset $\mathcal{Q}$ of marginals and let $y = (y_{\gamma})_{\gamma \in \mathcal{Q}}$ be corresponding noisy marginal measurements with $y_{\gamma} = M_{\gamma} p + \mathcal{N}(0, \sigma^2 \mathcal{I})$.     
Let $\mathcal S = \{\tau \subseteq \gamma: \gamma \in \mathcal{Q}\}$ and for each $\tau \in \mathcal S$ let $\gamma_{\tau, i}$ be the $i$th marginal in $\mathcal{Q}$ containing $\tau$.
Let $z_{\tau, i} = A_{\gamma_{\tau,i}, \tau}^+ y_{\gamma_{\tau,i}}$ be the residual measurement obtained from $\gamma_{\tau,i}$ and let $\Sigma_{\tau,i} = \sigma^2_{\tau, i} D_\tau D_\tau^\top$ be its covariance where $\sigma^2_{\tau,i} = \sigma^2 \prod_{k \in \gamma_{\tau,i} \setminus \tau} n_k$.
Then, given any workload of marginal queries $\W$, for each $\gamma \in \mathcal W$, the marginal reconstruction $\hat \mu_\gamma$ obtained from Algorithm~\ref{alg:grem-mle} on these residual measurements is equal to $M_\gamma M_\mathcal{Q}^+ y$.}
\end{restatable}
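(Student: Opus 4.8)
The plan is to show that the GReM-MLE estimate $\hat\mu_\gamma$ coincides with $M_\gamma M_{\mathcal Q}^+ y$ by connecting both sides to a single maximum likelihood problem over the data vector $p$. First I would invoke Theorem~\ref{th:marginals2residuals} to rewrite the Gaussian log-likelihood of the marginal measurements. Since the marginals in $\mathcal Q$ are measured independently, the joint log-likelihood $\sum_{\gamma \in \mathcal Q} \log\mathcal N(y_\gamma \mid M_\gamma p, \sigma^2 I)$ decomposes, via Equation~\eqref{eq:likelihood} applied to each $\gamma$, into $\sum_{\tau \in \mathcal S} \sum_{i} \log\mathcal N(z_{\tau,i} \mid R_\tau p, \Sigma_{\tau,i})$ plus a $p$-independent constant. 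Hence a maximizer of the marginal likelihood over $p$ is exactly a minimizer of the combined residual loss $\sum_{\tau} L_\tau(R_\tau p)$ with $L_\tau$ the Gaussian negative log-likelihood used in GReM-MLE.

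Next I would apply Theorem~\ref{th:rem_correctness} (with the closed-form inverse-variance-weighted $\hat\alpha_\tau$ of Section~\ref{s:GReM-MLE} as the per-$\tau$ minimizer): the output of GReM-MLE is $\hat\mu_\gamma = M_\gamma \hat p$ where $\hat p = R_{\mathcal S}^+ \hat\alpha$ is a global minimizer of $\sum_\tau L_\tau(R_\tau p)$ over $\R^n$, i.e.\ a maximum likelihood estimate of $p$ from the residual measurements, which by the previous paragraph is a maximum likelihood estimate of $p$ from the marginal measurements $y$. It therefore remains to identify this MLE with the pseudoinverse reconstruction $M_{\mathcal Q}^+ y$. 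For an isotropic-noise Gaussian model $y = M_{\mathcal Q} p + \mathcal N(0,\sigma^2 I)$, the MLE minimizes $\norm{M_{\mathcal Q} p - y}^2$, whose minimum-norm solution is $M_{\mathcal Q}^+ y$; I would then argue that $M_\gamma$ applied to \emph{any} least-squares solution gives the same vector, because $M_\gamma$ lies in the row space of $M_{\mathcal Q}$ (as $\gamma \in \mathcal W$ and every $\tau \subseteq \gamma$ appears in $\mathcal S = \mathcal Q^{\downarrow}$, so $M_\gamma$ is a linear combination of the measured residuals, hence of rows of $M_{\mathcal Q}$), making $M_\gamma p$ invariant across the affine set of minimizers. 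Thus $M_\gamma \hat p = M_\gamma M_{\mathcal Q}^+ y$.

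There is one subtlety to handle carefully: Theorem~\ref{th:rem_correctness} requires each $\hat\alpha_\tau$ to minimize $L_\tau$ over all of $\R^{m_\tau}$, and in GReM-MLE the closed form is well-defined only when $\sum_i \Sigma_{\tau,i}^{-1}$ is invertible. Here each $\Sigma_{\tau,i} = \sigma_{\tau,i}^2 D_\tau D_\tau^\top$ is invertible (the $D_{(k)}$ blocks have full row rank, so $D_\tau D_\tau^\top$ is positive definite), so this is fine, and with proportional covariances the estimate reduces to the weighted average stated in Section~\ref{s:GReM-MLE}. A second point worth spelling out is that the $z_{\tau,i}$ are genuinely independent across both $\tau$ and $i$: within a fixed $\gamma$ this is Theorem~\ref{th:marginals2residuals}, and across different $\gamma$ it follows from independence of the original marginal measurements, so the summed log-likelihood really does factor as claimed.

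The main obstacle I anticipate is the last step — showing the per-$\tau$ maximum likelihood estimates assemble, via $R_{\mathcal S}^+$, into a \emph{global} MLE of $p$ that matches the pseudoinverse, and in particular that $M_\gamma$ applied to it is well-defined independently of which least-squares solution is chosen. This hinges on the orthogonality structure of residuals (row spaces of distinct $R_\tau$ are orthogonal, and $R_{\mathcal S}$ spans the same space as $M_{\mathcal Q}$ since $\mathcal S = \mathcal Q^{\downarrow}$) together with Proposition~\ref{prop:reconstruct}; most of the remaining work is bookkeeping that is already packaged in Theorems~\ref{th:rem_correctness} and~\ref{th:marginals2residuals}.
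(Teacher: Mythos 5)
Your overall architecture matches the paper's: convert the least-squares objective $\|M_{\mathcal Q}p - y\|_2^2$ into the residual negative log-likelihood via Theorem~\ref{th:marginals2residuals}, invoke Theorem~\ref{th:rem_correctness} to conclude that $\hat p = R_{\mathcal S}^+\hat\alpha$ is a global minimizer, and then identify the GReM-MLE output with $M_\gamma M_{\mathcal Q}^+ y$. The gap is in your final identification step. You argue that $M_\gamma$ applied to \emph{any} least-squares minimizer gives the same answer because ``every $\tau \subseteq \gamma$ appears in $\mathcal S = \mathcal Q^{\downarrow}$,'' so that the rows of $M_\gamma$ lie in $\mathrm{row}(M_{\mathcal Q})$. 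But $\mathcal S$ is the downward closure of the \emph{measured} multiset $\mathcal Q$, not of the workload $\W$, and the theorem allows $\W$ to be arbitrary. For a workload marginal $\gamma \in \W$ that is not covered by the measurements (e.g.\ involving an attribute absent from every $\gamma' \in \mathcal Q$), not all $\tau \subseteq \gamma$ belong to $\mathcal S$, the identity $M_\gamma = \sum_{\tau\subseteq\gamma}A_{\gamma,\tau}R_\tau$ cannot be assembled from measured residuals, and $M_\gamma p$ genuinely varies over the affine set of least-squares minimizers. So the invariance argument, as justified, fails exactly in the underdetermined case the theorem is meant to cover.

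The fix is to apply the row-space fact you already mention to $\hat p$ rather than to $M_\gamma$: since $\hat p = R_{\mathcal S}^+\hat\alpha \in \mathrm{col}(R_{\mathcal S}^+) = \mathrm{row}(R_{\mathcal S}) \subseteq \mathrm{row}(M_{\mathcal Q})$ (the inclusion holding because each $R_\tau = A_{\gamma,\tau}^+ M_\gamma$ for some $\gamma \in \mathcal Q$ by Lemma~\ref{lem:decomp-identity}), and since $M_{\mathcal Q}^+ y$ is the \emph{unique} least-squares minimizer lying in $\mathrm{row}(M_{\mathcal Q})$, you get the vector identity $\hat p = M_{\mathcal Q}^+ y$ in $\R^n$. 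Applying $M_\gamma$ to both sides then gives $\hat\mu_\gamma = M_\gamma M_{\mathcal Q}^+ y$ for every $\gamma \in \W$, covered or not, which is exactly how the paper closes the argument. The rest of your proposal, including the care about invertibility of $\sum_i \Sigma_{\tau,i}^{-1}$ and independence of the $z_{\tau,i}$ across distinct measured marginals, is sound and aligns with the paper's proof.
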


\Cam{I think $\mathcal W$ is undefined above. Maybe reword to say 'Then given any workload of marginal queries $\mathcal W$, for each $\gamma \in \mathcal W$, Algoritm 3 run with these residual measurements outputs $\hat \mu_\gamma = M_\gamma M_\mathcal{Q}^+ y$.}

\blue{This result can be generalized to allow for differing noise scales between marginal measurements. We prove this result and discuss the generalized form of Theorem \ref{th:pinv-grem} in Appendix~\ref{s:proofs-rem}.}

\subsection{\blue{Reconstruction with local non-negativity}}

It is often possible to improve accuracy of a differentially private mechanism by forcing its outputs to satisfy known constraints~\cite{nikolov2013geometry,li2015matrix,aydore2021differentially}. 
For our problem, true marginals are non-negative, so it is desirable to enforce non-negativity in their private estimates. 
To enforce non-negativity, instead of solving the separate problems in \blue{Line 1} of Alg.~\ref{alg:proposed-reconstruction}, we solve the following combined problem over the full vector $\alpha = (\alpha_\tau)_{\tau \in \W^\downarrow}$ of residuals:
\begin{equation}
  \min_{\alpha} \sum_{\tau \in \S} L_\tau(\alpha_\tau)
  \quad
  \text{s.t.} \sum_{\tau \subseteq \gamma} A_{\gamma,\tau} \alpha_\tau \geq 0,\quad \forall \gamma \in \mathcal W.
  \label{eq:opt-nonnegative}
\end{equation}
Reconstruction of marginals then proceeds as in Line 2 of Alg.~\ref{alg:proposed-reconstruction}. The constraints in Eq.~(\ref{eq:opt-nonnegative}) ensure that the reconstructed marginals will be non-negative.
We refer to this as \emph{local non-negativity}, since this problem solves for a data distribution $\hat p$ that is non-negative for marginals in $\W$ rather than a data distribution with non-negative entries.

A natural setting to apply local non-negativity to \blue{\ReM{} is under Gaussian noise with covariance} $\Sigma_{\tau, i} = \sigma_{\tau, i}^2 D_\tau D_\tau^\top$ \blue{and} $\sigma_{\tau, i}^2 \in \R$.
Recall that marginals measured with isotropic Gaussian noise decompose into residuals with the above covariance structure. 
Our proposed application of local non-negativity in the Gaussian noise setting \GReM-LNN (Gaussian ReM with local non-negativity) solves Eq.~(\ref{eq:opt-nonnegative}) for $L_\tau(\alpha_\tau) = \sum_{i = 1}^{k_\tau} (\alpha_\tau - \blue{z}_{\tau, i})^\top K_{\tau, i}^{-1} (\alpha_\tau - \blue{z}_{\tau, i})$ and $K_{\tau, i} = 2^{|\tau|} D_\tau D_\tau^\top$. 
In the \GReM-LNN setting, Eq.~(\ref{eq:opt-nonnegative}) is an convex program with linear constraints. 
Our implementation solves this problem using a scalable dual ascent algorithm (described in Appendix~\ref{s:grem-lnn_implementation}) but could be solved in principle using standard optimizers, given sufficient resources \cite{boyd2004convex}. 
With respect to the loss function $L_\tau(\alpha_\tau)$, adopting $2^{|\tau|}$ rather than Gaussian noise scale $\sigma_{\tau, i}^2$ is a heuristic that weights lower degree residual queries such as the total query and 1-way residuals more heavily than higher degree queries such as 3-way residuals. 
In contrast, using the Gaussian noise scale $\sigma_{\tau, i}^2$ obtained from both ResidualPlanner and the marginal decomposition in Theorem~\ref{th:marginals2residuals} weights higher degree residual queries more than lower degree residuals. 
When enforcing \blue{local} non-negativity, it is beneficial for reducing reconstruction error to allocate more weight to residuals that affect more marginals through reconstruction.
The present choice of weights $2^{|\tau|}$ for \GReM-LNN, however, remain a heuristic.
We discuss this point further in Section~\ref{s:discussion}.

\subsection{\blue{Computational Complexity}} \label{s:complexity}
We summarize the complexity results in Table \ref{tab:complexity_results}. \blue{Formal statements and proofs appear in Appendix~\ref{s:complexity-proofs}.}
Let $\S$ be the set of measured residuals. 
To understand the results, suppose that $R_\tau$ is measured once, given by $z_\tau$, for each $\tau \in \S$.
Recall from Proposition~\ref{prop:reconstruct} that computing $A_{\gamma, \tau} z_\tau$ \blue{takes $\mathcal{O}(|\gamma| n_\gamma)$ time}.
This operation maps \blue{from the space of $\tau$-residuals to $\gamma$-marginals.}
To reconstruct an answer to the marginal query $M_\gamma$, we apply the invertible transformation from residuals to marginals by summing over 
\blue{contributions for each $\tau \subseteq \gamma$ to yield $\hat \mu_{\gamma} = \sum_{\tau \in \S: \tau \subseteq \gamma} A_{\gamma, \tau} z_\tau$.}
In the worst case, this requires computing $A_{\gamma, \tau} z_\tau$ for $2^{|\gamma|}$ residuals. 
Then the running time of reconstructing an answer to marginal $M_\gamma$ is $\mathcal{O}(|\gamma| n_\gamma 2^{|\gamma|})$. 
If $\W$ is a workload of marginals, then reconstructing answers to each $\gamma \in \W$ is $\mathcal{O}(\sum_{\gamma \in \W} |\gamma| n_\gamma 2^{|\gamma|})$.
The following result shows that the complexity of reconstructing an answer to marginal $M_\gamma$ is almost linear with respect to domain size.

\begin{thm}
  \label{th:complexity-little-o}
  \blue{For $\varepsilon > 0$, reconstructing an answer to $M_\gamma$ is $o(n_\gamma^{1+\varepsilon})$ as $n_i \rightarrow \infty$ for some $i \in \gamma$.}
\end{thm}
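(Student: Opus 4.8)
The plan is to bound the reconstruction cost for a single marginal $M_\gamma$, namely $\mathcal{O}(|\gamma|\, n_\gamma\, 2^{|\gamma|})$ as established in the discussion preceding the theorem, and show it is $o(n_\gamma^{1+\varepsilon})$ in the stated limit. The key point is that $2^{|\gamma|}$ and $|\gamma|$ are \emph{fixed} once $\gamma$ is fixed, whereas $n_\gamma = \prod_{i \in \gamma} n_i \to \infty$ as some $n_i \to \infty$. So morally this is just ``a constant times $n_\gamma$ is $o(n_\gamma^{1+\varepsilon})$,'' but one has to be careful that the constant genuinely does not grow with the limiting parameter.

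First I would fix $\gamma$ and write $d_\gamma = |\gamma|$ and $C_\gamma = d_\gamma\, 2^{d_\gamma}$, so the running time of reconstructing $\hat\mu_\gamma$ is at most $C_\gamma\, n_\gamma$. Next, observe that as $n_i \to \infty$ for some $i \in \gamma$ (with the other $n_j$, $j \in \gamma$, held at any fixed values $\ge 1$), we have $n_\gamma = n_i \prod_{j \in \gamma, j \ne i} n_j \to \infty$. Then for any $\varepsilon > 0$,
\[
\frac{C_\gamma\, n_\gamma}{n_\gamma^{1+\varepsilon}} = \frac{C_\gamma}{n_\gamma^{\varepsilon}} \longrightarrow 0,
\]
since $C_\gamma$ is a fixed constant and $n_\gamma^\varepsilon \to \infty$. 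By definition of little-$o$, the reconstruction time is $o(n_\gamma^{1+\varepsilon})$. This is essentially the whole argument.

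The only subtlety — and the thing I would state carefully rather than the ``hard part'' — is what is being held fixed in the asymptotic statement. The theorem says ``as $n_i \to \infty$ for some $i \in \gamma$,'' which I read as: $\gamma$ is fixed, the index $i$ is fixed, the remaining domain sizes are fixed, and only $n_i$ grows; equivalently one could let an arbitrary nonempty subset of the $n_j$, $j\in\gamma$, grow. In either reading $d_\gamma$ and $2^{d_\gamma}$ stay bounded, so the argument goes through verbatim. It is worth remarking that the claim would \emph{fail} if one instead let $d = |\gamma|$ itself grow, since then $2^{|\gamma|}$ is no longer a constant; the theorem is genuinely an ``almost linear in domain size, for fixed attribute set'' statement, and I would make that scope explicit. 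If one wants the stronger conclusion that reconstructing the whole workload $\W$ is $o\big(\sum_{\gamma \in \W} n_\gamma^{1+\varepsilon}\big)$, the same bound applied termwise with $C := \max_{\gamma \in \W} |\gamma| 2^{|\gamma|}$ suffices, again using that $\W$ (hence each $|\gamma|$) is fixed.
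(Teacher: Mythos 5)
Your proof is correct and follows essentially the same route as the paper's: both start from the $\mathcal{O}(|\gamma|\, n_\gamma\, 2^{|\gamma|})$ bound for reconstructing $M_\gamma$ and show the quotient against $n_\gamma^{1+\varepsilon}$ tends to zero because $|\gamma|$ and $2^{|\gamma|}$ are fixed while $n_\gamma^{\varepsilon}\to\infty$. Your explicit remark about what is held fixed (and that the claim would fail if $|\gamma|$ grew) is a welcome clarification but does not change the argument.
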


\begin{table}[h!]
  \centering
  \renewcommand{\arraystretch}{1.5}
  \begin{tabular}{c|c}
  \hline
  \textbf{Method} & \textbf{Running Time} \\ \hline
  GReM-MLE($\W, \S, z$) & $\mathcal{O}(\sum_{\gamma \in \W} |\gamma| n_\gamma 2^{|\gamma|})$ \\ \hline
  EMP($\W, \mathcal{Q}, y$) & $\mathcal{O}(\sum_{\gamma \in \W} |\gamma| n_\gamma 2^{|\gamma|})$ \\ \hline
  One Round of GReM-LNN($\W, \S, z$) & $\mathcal{O}(\sum_{\gamma \in \W} |\gamma| n_\gamma 2^{|\gamma|})$ \\ \hline
  \end{tabular}
  \vspace{5pt}
  \caption{\blue{Summary of Complexity Results}}
  \label{tab:complexity_results}
\end{table}

GReM-MLE, given in Alg.~\ref{alg:grem-mle}, consists of two steps: estimating residual answers $\hat \alpha_\tau$ from residuals answers $z_{\tau, i}$ for $i = 1, \ldots, k_\tau$ and each $\tau \in S$, and reconstructing answers to marginal workload $\mathcal W$. 
Recall that we suppose that covariance is proportional among measurements of a given residual $R_\tau$, so $\hat \alpha_\tau$ can be computed in closed-form as a weighted average in $\mathcal{O}(n_\tau)$ time. 
Then GReM-MLE takes $\mathcal{O}(\sum_{\gamma \in \mathcal W} |\gamma| n_\gamma 2^{|\gamma|})$ time.
The efficient marginal pseudoinversion, given in Alg.~\ref{alg:efficient-pseudoinverse}, first decomposes marginals and then applies GReM-MLE.
Computing $A_{\gamma, \tau}^{+} y_\gamma$ takes $\mathcal{O}(|\gamma| n_\gamma)$ time, so the running time of decomposing the marginal measurements is $\mathcal{O}(\sum_{\gamma \in \mathcal{Q}} |\gamma| n_\gamma 2^{|\gamma|})$ where $\mathcal{Q}$ is the set of marginals measured with isotropic Gaussian noise. 
Then the efficient marginal pseudoinversion is $\mathcal{O}(\sum_{\gamma \in \W} |\gamma| n_\gamma 2^{|\gamma|})$.
Additionally, one round of GReM-LNN, given in Alg.~\ref{alg:dual_ascent}, is $\mathcal{O}(\sum_{\gamma \in \W} |\gamma| n_\gamma 2^{|\gamma|})$.

\section{Experiments}
In this section, we measure the utility of \GReM-MLE and \GReM-LNN by incorporating them as a post-processing step into two mechanisms for privately answering marginals: (1) ResidualPlanner \cite{xiao2024optimal}, and (2) \blue{a data-dependent mechanism we call} Scalable MWEM. 
Both mechanisms measure queries with Gaussian noise and reconstruct answers to all three-way marginals for the given data domain. 
For the ResidualPlanner experiment, we measure residuals for all subsets of three or fewer attributes with Gaussian noise scales determined by ResidualPlanner. 
\blue{For the Scalable MWEM experiment, we measure the total query and a subset of the 3-way marginals in the data domain with isotropic Gaussian noise and reconstruct answers to all 3-ways marginals using the efficient marginal pseudoinversion in Alg.~\ref{alg:efficient-pseudoinverse}. We fully describe Scalable MWEM in Appendix~\ref{s:scalable_mwem}.}

We compare average $\ell_1$ error with respect to the reconstructed marginals of the base mechanism to post-processing \blue{with \GReM-LNN} and two heuristics that enforce non-negativity by truncating negative values to zero (Trunc) and truncating to zero then rescaling (Trunc+Rescale).
For the Scalable MWEM experiment, we additionally compare to a well-studied reconstruction mechanism Private-PGM \cite{mckenna2019graphical}.
We run these methods on four datasets of varying size and scale, Titanic \cite{titanic}, Adult \cite{kohavi1996scaling}, Salary \cite{hay2016principled}, and Nist-Taxi \cite{lothe2021sdnist}, and various practical privacy regimes, $\epsilon \in \{ 0.1, 0.31, 1, 3.16, 10 \}$ and $\delta = 1 \times 10^{-9}$.
For each setting, we run five trials and report the average error of each method as well as minimum/maximum bands.
Additional details are provided in Appendix \ref{s:exp_details}.

\subsection{ResidualPlanner Results} \label{s:rp_experiments}
\vspace{-10pt}
\begin{figure}[t]
  \centering
  \includegraphics[width=0.75\textwidth]{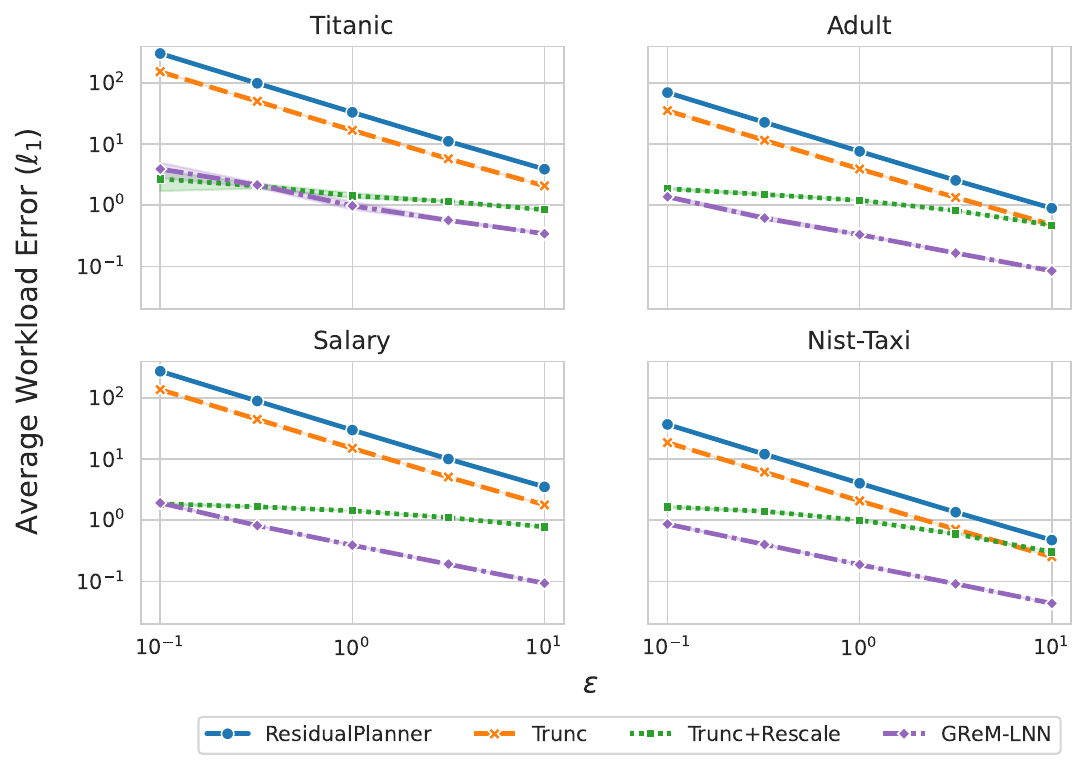}
  \caption{Average $\ell_1$ workload error on all 3-way marginals across five trials and privacy budgets $\epsilon \in \{ 0.1, 0.31, 1, 3.16, 10 \}$ and $\delta = 1 \times 10^{-9}$ for ResidualPlanner.}
  \label{fig:rp_experiments}
\vspace{-10pt}\end{figure}

Fig.~\ref{fig:rp_experiments} displays results for the ResidualPlanner experiment. 
Across all privacy budgets and datasets considered, \GReM-LNN significantly reduces workload error on the reconstructed marginals compared to ResidualPlanner. 
Averaging over all settings and trials, \GReM-LNN reduces ResidualPlanner workload error by a factor of 44.0$\times$. 
With respect to the heuristic methods, \GReM-LNN reconstructs marginals with lower error than Trunc across all privacy budgets and datasets. 
Except at the highest privacy regime considered ($\epsilon = 0.1$) on Titanic and Salary, \GReM-LNN yields lower error than Trunc+Rescale. 
Averaging over all settings and trials, \GReM-LNN has lower workload error by a factor of 17.6$\times$ compared to Trunc and 3.2$\times$ compared to Trunc+Rescale.
Note that \GReM-MLE is omitted from Fig.~\ref{fig:rp_experiments} since ResidualPlanner is the maximum likelihood reconstruction for its measurements.
Appendix \ref{s:additional_experiments} reports results for this experiment with respect to $\ell_2$ workload error, which are consistent with the present findings.

\subsection{Scalable MWEM Results} \label{s:scalable_mwem_experiments}

Fig.~\ref{fig:mwem_experiments} displays results for the Scalable MWEM experiment for 30 rounds of measurements. 
Observe that Scalable MWEM runs for the settings considered, which would be infeasible for the original MWEM mechanism due to large data domains.
Of all methods considered, Private-PGM yields the greatest reduction in workload error in settings where it ran; however, Private-PGM failed due to exceeding memory resources (20 GB) at 30 rounds on Adult, Salary, and Nist-Taxi in all trials. 
In Appendix~\ref{s:additional_experiments}, we report the settings in which Private-PGM successfully ran across 10, 20, and 30 rounds of Scalable MWEM.

\begin{figure}[t]
  \centering
  \includegraphics[width=0.75\textwidth]{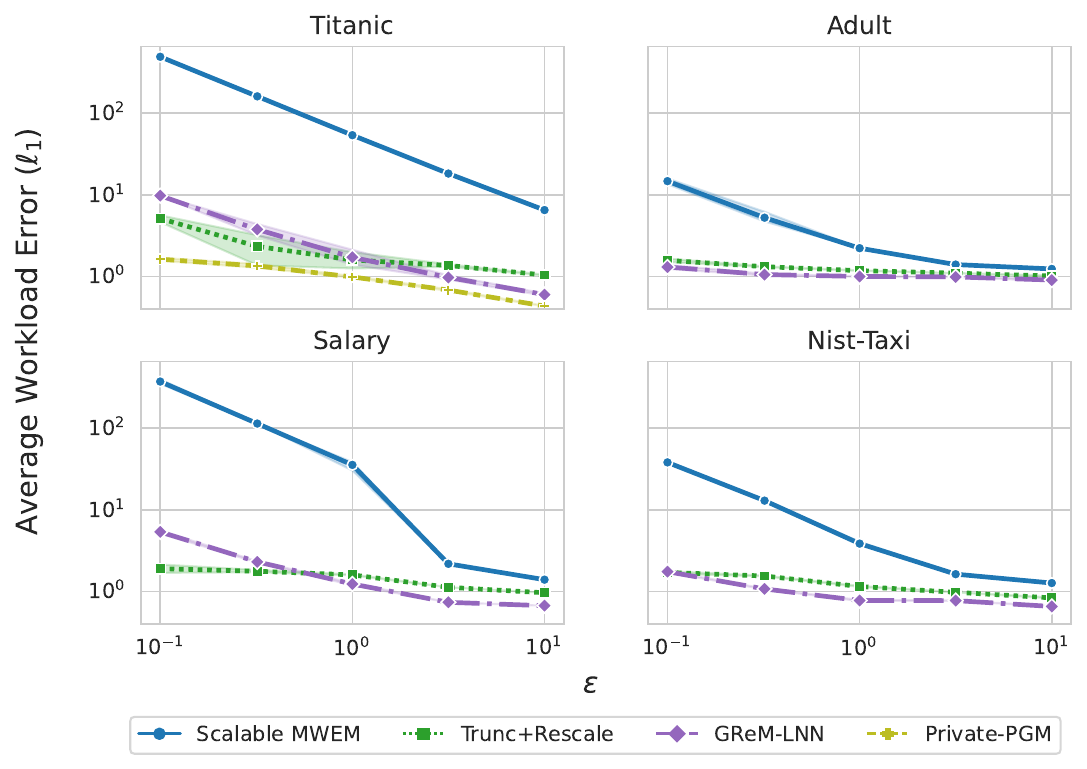}
  \caption{Average $\ell_1$ workload error on all 3-way marginals across five trials and privacy budgets $\epsilon \in \{ 0.1, 0.31, 1, 3.16, 10 \}$ and $\delta = 1 \times 10^{-9}$ for Scalable MWEM with 30 rounds of measurements.}
  \label{fig:mwem_experiments}
\vspace{-10pt}\end{figure}

With respect to \GReM-LNN, the findings from the prior experiment agree with the present results. 
Across all privacy budgets and datasets considered, \GReM-LNN significantly reduces workload error on the reconstructed marginals compared to Scalable MWEM.
Averaging over all settings and trials, \blue{\GReM-LNN reduces Scalable MWEM workload error by a factor of 12.3$\times$.} 
Averaging over all settings and trials, \blue{\GReM-LNN has lower workload error by a factor of 1.1$\times$ compared to Trunc+Rescale.} Note that we suppress results for Trunc due to space.
Appendix \ref{s:additional_experiments} reports results for this experiment with respect to $\ell_2$ workload error, which are consistent with the present findings.

\section{Discussion} \label{s:discussion}
We develop \ReM{}, a method for reconstructing answers to marginal queries that scales to large data domains. We also introduce a tractable method to incorporate local non-negativity that significantly improves reconstruction quality. Finally, we show that \ReM{} can be used to improve the existing query answering mechanisms ResidualPlanner and a scalable version of MWEM. 

\textbf{Limitations.} Many data-dependent query answering mechanisms also generate synthetic data. In some cases, practitioners utilize these mechanisms primarily in order to use the synthetic data for downstream tasks such as training a machine learning model \cite{rosenblatt2020differentially, du2024towards}. For those users, the fact that \ReM{} does not generate synthetic data would be an important limitation. A broader limitation, which is common to many methods in this field, is lack of support for continuous data. Marginal and residual queries are only defined on discrete domains so continuous attributes need to be discretized.

\textbf{Future Work and Broader Impacts.} While developing effective algorithms for privacy-preserving data analysis is generally beneficial, it is known that these methods can lead to unfair outcomes \cite{pujol2020fair}. One direction for future work is to further understand the fairness properties of the methods we present and how to mitigate any undesirable outcomes. Another direction for future work is further understanding the weighting scheme used in \GReM-LNN to apply local non-negativity. Preliminary experiments show that weighting lower-order residual queries more highly in the loss function yields reconstructed answers with lower workload error as well as faster and more reliable convergence of the optimization routine. In general, the relationship between residual weights in the loss function, optimizer convergence, and reconstruction quality is not well understood.

\begin{ack}
  \blue{This work was supported by the National Science Foundation under grants CNS-1931686 and CNS-2317232 (Kifer); CCF-2046235 (Musco); and IIS-1749854 and DBI-2210979 (Sheldon).}
\end{ack}

\bibliographystyle{unsrtnat}
\bibliography{thesis.bib}

\newpage
\appendix

\section{Kronecker Products} \label{s:kron}

Kronecker products are a convenient way to represent highly structured matrices. 
Let $A$ be an $m_a \times n_a$ matrix $A = \begin{bmatrix}
    a_{1, 1}  & \cdots & a_{1, n_a} \\
    \vdots    &        & \vdots \\
    a_{m_a, 1}& \cdots & a_{m_a, n_a}
\end{bmatrix}$ and $B$ be a $m_b \times n_b$ matrix. 
Then the Kronecker product of $A$ with $B$ is an $m_a m_b \times n_a n_b$ matrix given by $A \otimes B = \begin{bmatrix}
    a_{1, 1}B  & \cdots & a_{1, n_a}B \\
    \vdots     &        & \vdots \\
    a_{m_a, 1}B& \cdots & a_{m_a, n_a}B
\end{bmatrix}$.
Kronecker products provide a compact representation of matrices by representing exponentially-many entries of $A \otimes B$ with linearly-many entries in $A$ and $B$.
For the Kronecker product of a sequence of matrices $A_1, \ldots, A_d$, we use the notation
$$
\bigotimes_{i = 1}^{d} A_i = A_1 \otimes \cdots \otimes A_d 
$$
The Kronecker product is associative, so pairwise products can be taken in any order.

Kronecker products additionally possess useful algebraic properties. Let $(\cdot)^+$ denote Moore-Penrose pseudoinverse.
\begin{prop}(Kronecker Product Properties) \label{th:kronProperties}
  Let $A = \bigotimes_{i = 1}^{d} A_i$ and $B = \bigotimes_{j = 1}^{d} B_j$. Then the following properties hold:
  \begin{enumerate}[leftmargin=*,topsep=0pt,itemsep=0pt]
    \item $A^{\top} = \bigotimes_{i = 1}^{d} A_i^{\top}$.
    \item $A^+ = \bigotimes_{i = 1}^{d} A_i^+$.
    \item If $A_i$ and $B_i$ are compatible for multiplication for $i = 1, \ldots, d$, then $ A B = \bigotimes_{i = 1}^{d} A_i B_i$.
  \end{enumerate}
\end{prop}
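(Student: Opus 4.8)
The plan is to reduce all three claims to the two-factor case $d=2$ and then lift to general $d$ by induction, exploiting the associativity of the Kronecker product noted above. Concretely, writing $A = A_1 \otimes (A_2 \otimes \cdots \otimes A_d)$ and similarly for $B$, if each property holds for products of two matrices then induction on $d$ establishes it for the full product: the inductive step applies the two-factor identity with the second factor taken to be the Kronecker product of the remaining matrices, whose own transpose, pseudoinverse, or product is supplied by the inductive hypothesis. So it suffices to prove each property for $A_1 \otimes A_2$ and $B_1 \otimes B_2$. I would establish the properties in the order (3), (1), (2), since the pseudoinverse identity depends on the other two.

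For the mixed-product rule (3), I would argue by direct block computation. Regarding $A_1 \otimes A_2$ as a block matrix whose $(i,j)$ block is $(A_1)_{ij} A_2$ and $B_1 \otimes B_2$ as one whose $(k,\ell)$ block is $(B_1)_{k\ell} B_2$, the $(i,\ell)$ block of the product is $\sum_k (A_1)_{ik} (B_1)_{k\ell}\, A_2 B_2 = (A_1 B_1)_{i\ell}\, (A_2 B_2)$, which is exactly the $(i,\ell)$ block of $(A_1 B_1)\otimes(A_2 B_2)$. The compatibility hypothesis on the $A_i, B_i$ is precisely what makes the inner products $A_2 B_2$ and the scalar contraction over $k$ well-defined. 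For the transpose rule (1), I would read off from the definition that transposing $A_1 \otimes A_2$ sends its $(i,j)$ block $(A_1)_{ij}A_2$ to the $(j,i)$ position as $(A_1)_{ij} A_2^\top$, which coincides block-for-block with $A_1^\top \otimes A_2^\top$.

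For the pseudoinverse rule (2), I would avoid any explicit inversion and instead verify that $X := A_1^+ \otimes A_2^+$ satisfies the four defining Moore-Penrose conditions for $A := A_1 \otimes A_2$; by uniqueness of the pseudoinverse this forces $X = A^+$. Applying the mixed-product rule repeatedly, $A X A = (A_1 A_1^+ A_1)\otimes(A_2 A_2^+ A_2) = A_1 \otimes A_2 = A$ and $X A X = (A_1^+ A_1 A_1^+)\otimes(A_2^+ A_2 A_2^+) = X$, since each pair $A_i, A_i^+$ satisfies the corresponding conditions. For the two symmetry conditions I would compute $AX = (A_1 A_1^+)\otimes(A_2 A_2^+)$ and invoke the transpose rule (1) together with the symmetry of each $A_i A_i^+$ to get $(AX)^\top = (A_1 A_1^+)^\top \otimes (A_2 A_2^+)^\top = (A_1 A_1^+)\otimes(A_2 A_2^+) = AX$, and symmetrically for $XA$.

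I expect the pseudoinverse identity (2) to be the only step requiring care, since it cannot be dispatched by a one-line block computation. The key idea there is to sidestep a direct formula and instead invoke the characterization of $A^+$ by the four Moore-Penrose axioms, which reduces everything to the already-proven rules (1) and (3) plus the axioms for the individual factors $A_i$. The transpose and mixed-product rules are routine definitional and block calculations, and the reduction from general $d$ to $d=2$ is immediate from associativity.
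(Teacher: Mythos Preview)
Your proof is correct and follows the standard textbook approach. The paper itself states this proposition as a known fact about Kronecker products and does not supply a proof, so there is nothing to compare against; your reduction to $d=2$ via associativity, the block-entry verification of the transpose and mixed-product rules, and the verification of the four Moore--Penrose conditions for the pseudoinverse are all sound and exactly how one would establish these identities from scratch.
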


There are efficient algorithms for matrix-vector multiplication utilizing Kronecker structure such as Alg.~\ref{alg:shuffle_alg}.
Let $A = \bigotimes_{i = 1}^{\ell} A_i$ be a Kronecker structured matrix where $A_i$ is a matrix of size $a_i \times b_i$ so that $A$ has size $a \times b$ with $a = \prod_{i=1}^\ell a_i$ and $b = \prod_{i=1}^\ell b_i$. 

\begin{algorithm}[H]
  \caption{Kronecker Matrix-Vector Product \cite{plateau1985stochastic,mckenna2021hdmm}}
  \label{alg:shuffle_alg}
  \begin{algorithmic}
  \Require Matrix $A = \bigotimes_{i = 1}^{\ell} A_i$, vector $x$
  \State $a_i, b_i = \textup{SHAPE}(A_i)$ 
  \State $r = \prod_{i = 1}^{\ell} b_i $
  \State $x_1 = x$
  \For{$i = 1, \ldots, \ell $}
    \State $Z = \textup{RESHAPE}(x_i, b_i, \nicefrac{r}{b_i})$
    \State $r = r \cdot \nicefrac{a_i}{b_i}$
    \State $x_{i+1} = \textup{RESHAPE}(A_i Z, r, 1 )$
  \EndFor
  \Return $x_{\ell + 1}$ 
  \end{algorithmic}
\end{algorithm}

\section{Differential Privacy} \label{s:dp_appendix}

Let us begin by introducing a useful variant of differential privacy: zero-concentrated differential privacy (zCDP).

\begin{defn}(Zero-Concentrated Differential Privacy; \cite{bun2016concentrated})
  Let $\M: \X \rightarrow \Y$ be a randomized mechanism. For any neighboring datasets $p, p'$ that differ by at most one record, denoted $p \sim p'$, and all measurable subsets $S \subseteq \Y$: if $D_\gamma(\M(p) || \M(p')) \leq \rho \gamma$ for all $\gamma \in (1, \infty)$ where $D_\gamma$ is the $\gamma$-Renyi divergence between distributions $\M(p), \M(p')$, then $\M$ satisfies $\rho$-zCDP.
\end{defn}

While $(\epsilon, \delta)$-DP is a more common notion, it is often more convenient to work with zCDP. There exists a conversion from zCDP to $(\epsilon, \delta)$-DP.

\begin{prop}[zCDP to DP Conversion; \cite{canonne2020discrete}] \label{prop:zcdp_conversion}
  If mechanism $\M$ satisfies $\rho$-zCDP, then $\M$ satisfies $(\epsilon, \delta)$-DP for any $\epsilon > 0$ and $\delta = \min_{\alpha > 1} \frac{\exp((\alpha - 1)(\alpha\rho-\epsilon))}{\alpha - 1}\left(1 - \frac{1}{\alpha}\right)^\alpha$.
\end{prop}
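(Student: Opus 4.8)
The plan is to combine the standard reduction of $(\epsilon,\delta)$-DP to the \emph{hockey-stick divergence} with the moment-generating-function interpretation of the Rényi divergence. Fix neighboring datasets $p \sim p'$ and write $P = \M(p)$, $Q = \M(p')$ for the output distributions. A mechanism is $(\epsilon,\delta)$-DP exactly when, for every such pair, the supremum $\sup_S \big(\Pr[P \in S] - e^\epsilon \Pr[Q \in S]\big)$ is at most $\delta$; this supremum is attained at $S^\star = \{x : P(x) > e^\epsilon Q(x)\}$ and equals $\E_{x\sim P}[(1 - e^\epsilon Q(x)/P(x))_+]$, where $(\cdot)_+ = \max(\cdot,0)$. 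First I would introduce the privacy-loss random variable $Z = \log(P(x)/Q(x))$ with $x \sim P$, so the quantity to bound becomes $\E[(1 - e^{\epsilon - Z})_+]$. It then suffices to bound this expectation by the claimed expression for each admissible $\alpha$ and take the minimum.

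The second step connects $Z$ to the Rényi divergence supplied by the zCDP hypothesis. For any $\alpha > 1$,
$$\E_{x\sim P}\big[e^{(\alpha-1)Z}\big] = \E_{x\sim P}\Big[\big(P(x)/Q(x)\big)^{\alpha - 1}\Big] = \E_{x\sim Q}\Big[\big(P(x)/Q(x)\big)^{\alpha}\Big] = e^{(\alpha-1)D_\alpha(P\|Q)},$$
and the $\rho$-zCDP assumption (applied with Rényi order $\alpha$) gives $D_\alpha(P\|Q) \le \rho\alpha$, hence $\E_P[e^{(\alpha-1)Z}] \le e^{(\alpha-1)\alpha\rho}$. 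This single moment bound is what feeds into the subsequent tail-type estimate.

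The crux is a tight pointwise bound of the hockey-stick integrand against this exponential moment, which is where the factor $(1 - 1/\alpha)^\alpha$ originates. I would prove that for every real $z$ and every $\alpha > 1$,
$$(1 - e^{\epsilon - z})_+ \le \frac{(1 - 1/\alpha)^\alpha}{\alpha - 1}\, e^{(\alpha-1)(z - \epsilon)}.$$
Since the left side vanishes for $z \le \epsilon$, it is enough to maximize $g(u) = (1 - e^{-u})e^{-(\alpha-1)u}$ over $u = z - \epsilon \ge 0$; solving $g'(u) = 0$ yields $e^{-u} = 1 - 1/\alpha$, and substituting back gives the maximum value $(1-1/\alpha)^\alpha/(\alpha-1)$. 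Taking expectations under $P$ and inserting the moment bound from the previous step produces
$$\E_P\big[(1 - e^{\epsilon - Z})_+\big] \le \frac{(1-1/\alpha)^\alpha}{\alpha - 1}\, e^{-(\alpha-1)\epsilon}\, \E_P\big[e^{(\alpha-1)Z}\big] \le \frac{(1-1/\alpha)^\alpha}{\alpha-1}\, e^{(\alpha-1)(\alpha\rho - \epsilon)},$$
which is exactly the expression inside the minimum. Minimizing the right-hand side over $\alpha > 1$, and noting that the bound holds uniformly over all neighboring pairs, yields the claimed $\delta$.

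I expect the main obstacle to be establishing the tight pointwise bound with the optimal constant: the weaker estimate $(1 - e^{\epsilon - z})_+ \le \I[z > \epsilon]$ only recovers the cruder Bun--Steinke conversion without the $(1-1/\alpha)^\alpha$ factor, so care is needed to carry out the one-variable optimization of $g$ and to verify that the critical point is a genuine maximizer (e.g.\ checking $g \to 0$ at both endpoints $u = 0$ and $u \to \infty$, so the interior stationary point dominates). The remaining ingredients—the hockey-stick characterization of approximate DP and the moment-generating-function identity—are routine once this integrand bound is in place.
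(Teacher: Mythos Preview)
Your argument is correct and is essentially the proof given in the cited reference \cite{canonne2020discrete}; note that the present paper does not supply its own proof of this proposition but simply quotes it as a known conversion. The key steps---the hockey-stick characterization of $(\epsilon,\delta)$-DP, the identity $\E_P[e^{(\alpha-1)Z}] = e^{(\alpha-1)D_\alpha(P\Vert Q)}$, and the one-variable optimization yielding the sharp constant $(1-1/\alpha)^\alpha/(\alpha-1)$---are exactly those used by Canonne, Kamath, and Steinke, so there is nothing to compare against in this paper and your derivation stands on its own.
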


Next, we introduce two building block mechanisms. An important quantity in analyzing the privacy of a mechanism is sensitivity. The $\ell_k$ sensitivity of a function $f: \X \rightarrow \R$ is given by $\Delta_k(f) = \max_{p \sim p'} \norm{f(p) - f(p')}_k$. If $f$ is clear from the context, we write $\Delta_k$.

\begin{prop}[zCDP of Gaussian mechanism; \cite{bun2016concentrated}]\label{defn:gaussian mech}
  Let $W$ be an $m \times n$ workload. Given data vector $p$, the Gaussian mechanism adds i.i.d. Gaussian noise to $Wp$ with scale parameter $\sigma$ i.e., $\M(p) = Wp + \sigma \Delta_2(W) \N(0, \I)$, where $\I$ is the $m \times m$ identity matrix. Then the Gaussian Mechanism satisfies $\frac{1}{2\sigma^2}$-zCDP.
\end{prop}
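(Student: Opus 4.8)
The plan is to verify the defining inequality of $\rho$-zCDP directly: for every pair of neighboring datasets $p \sim p'$ and every order $\gamma \in (1,\infty)$, show that the $\gamma$-Rényi divergence between the two output distributions is at most $\frac{\gamma}{2\sigma^2}$, which is exactly $\rho\gamma$ with $\rho = \frac{1}{2\sigma^2}$.

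First I would identify the output distributions. Because $\M$ adds the noise $\sigma\Delta_2(W)\,\N(0,\I)$ to the deterministic answer $Wp$, the outputs on neighboring datasets are multivariate Gaussians $\M(p) \sim \N(Wp,\, \Sigma)$ and $\M(p') \sim \N(Wp',\, \Sigma)$ that \emph{share} the isotropic covariance $\Sigma = \sigma^2 \Delta_2(W)^2 \I$ and differ only in their means. This common-covariance structure is what makes the divergence tractable.

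The key step is to obtain the closed form for the Rényi divergence between two Gaussians with equal covariance. Writing $\mu = Wp$, $\mu' = Wp'$, I would substitute the Gaussian densities into $D_\gamma(P\,\|\,Q) = \frac{1}{\gamma-1}\log \int P(x)^\gamma Q(x)^{1-\gamma}\,dx$ and complete the square in the exponent. The normalizing constants cancel because the covariances coincide, the surviving Gaussian integral over $x$ evaluates to one, and what remains is
$$
D_\gamma\big(\N(\mu,\Sigma)\,\|\,\N(\mu',\Sigma)\big) = \frac{\gamma}{2}\,(\mu-\mu')^\top \Sigma^{-1} (\mu-\mu').
$$
With $\Sigma = \sigma^2\Delta_2(W)^2 \I$ this simplifies to $\frac{\gamma}{2\sigma^2 \Delta_2(W)^2}\,\norm{Wp - Wp'}_2^2$.

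Finally I would invoke the definition of $\ell_2$ sensitivity. By construction $\Delta_2(W) = \max_{p \sim p'} \norm{Wp - Wp'}_2$, so $\norm{Wp - Wp'}_2^2 \le \Delta_2(W)^2$ for any neighboring pair. Substituting this bound cancels the $\Delta_2(W)^2$ factors and gives $D_\gamma(\M(p)\,\|\,\M(p')) \le \frac{\gamma}{2\sigma^2}$ for all $\gamma \in (1,\infty)$. Since $p \sim p'$ were arbitrary, $\M$ satisfies $\frac{1}{2\sigma^2}$-zCDP. The only genuinely technical step is the completing-the-square Gaussian integral yielding the closed-form divergence; everything else follows immediately from the definitions of zCDP and $\ell_2$ sensitivity. (The degenerate case $\Delta_2(W) = 0$, where $W$ is constant across neighbors and privacy is trivial, can be handled separately or excluded as it makes $\Sigma$ singular.)
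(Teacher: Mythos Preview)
Your proof is correct and follows the standard argument: the R\'enyi divergence between two isotropic Gaussians with equal covariance has the closed form $\frac{\gamma}{2}(\mu-\mu')^\top\Sigma^{-1}(\mu-\mu')$, and bounding $\norm{Wp-Wp'}_2$ by $\Delta_2(W)$ yields the claim. Note, however, that the paper does not prove this proposition at all; it simply cites it from \cite{bun2016concentrated} as a known result, so there is no in-paper proof to compare against. Your argument is essentially the one given in that reference.
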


\begin{prop}[zCDP of correlated Gaussian mechanism; \blue{\cite{xiao2021optimizing}}] \label{defn:cor_gaus_mech}
 Let $W$ be an $m \times n$ workload. Given data vector $p$, the correlated Gaussian mechanism adds Gaussian noise to $Wp$ with covariance matrix $\Sigma$ i.e., $\M(p) = Mp + \N(0, \Sigma)$. The correlated Gaussian mechanism satisfies $\frac{\gamma}{2}$-zCDP where $\gamma$ is the largest diagonal element of $M^\top \Sigma^{-1} M$.
\end{prop}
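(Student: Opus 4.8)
The plan is to directly compute the Rényi divergence between the two output distributions and show it is bounded by $\tfrac{\gamma}{2}\lambda$ for every order $\lambda \in (1,\infty)$, which is exactly the condition in the definition of $\rho$-zCDP with $\rho = \gamma/2$. (I write $\lambda$ for the Rényi order to avoid clashing with the paper's use of $\gamma$ for the largest diagonal entry of $M^\top\Sigma^{-1}M$, and I treat the workload $W$ and the matrix $M$ in the statement as the same object.) For fixed neighboring data vectors $p \sim p'$, the outputs are $\M(p) \sim \N(Mp,\Sigma)$ and $\M(p') \sim \N(Mp',\Sigma)$: two multivariate Gaussians sharing the covariance $\Sigma$ and differing only in their means.

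First I would invoke the closed-form expression for the Rényi divergence between two Gaussians. For $P = \N(\mu_0,\Sigma)$ and $Q = \N(\mu_1,\Sigma)$ with a common covariance, the general formula specializes (the log-determinant term cancels because the two covariances coincide) to
$$D_\lambda(P\|Q) = \frac{\lambda}{2}(\mu_0 - \mu_1)^\top \Sigma^{-1}(\mu_0-\mu_1).$$
Applying this with $\mu_0 = Mp$ and $\mu_1 = Mp'$, and writing $\Delta = p - p'$, gives
$$D_\lambda(\M(p)\|\M(p')) = \frac{\lambda}{2}\,\Delta^\top M^\top \Sigma^{-1} M\,\Delta.$$

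The key step is to control the quadratic form $\Delta^\top (M^\top\Sigma^{-1}M)\Delta$ uniformly over neighboring pairs. Since $p$ and $p'$ differ by adding or removing a single record, and each entry of the data vector counts occurrences of one universe element $x \in \X$, the difference $\Delta = p - p'$ equals $\pm e_x$ for the standard basis vector $e_x$ indexed by the added or removed record's value. Hence $\Delta^\top (M^\top\Sigma^{-1}M)\Delta = (M^\top\Sigma^{-1}M)_{xx}$, a single diagonal entry of $M^\top\Sigma^{-1}M$, which is at most $\gamma$, its largest diagonal element, by definition of $\gamma$. Therefore $D_\lambda(\M(p)\|\M(p')) \le \tfrac{\gamma}{2}\lambda$; since this bound holds for every neighboring pair and every $\lambda \in (1,\infty)$, the mechanism satisfies $\tfrac{\gamma}{2}$-zCDP.

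I expect the only substantive obstacles to be justifying the common-covariance Gaussian Rényi divergence formula and the reduction of the neighboring relation to a signed unit vector; both are routine given the add/remove neighboring model and the occurrence-count definition of the data vector, so the overall argument is short. A minor point worth stating carefully is that $M^\top\Sigma^{-1}M$ is positive semidefinite, so its diagonal entries are the relevant nonnegative quantities, and the worst case over neighbors is attained at the maximizing index $x$, confirming that the max-diagonal characterization of $\gamma$ yields the stated privacy parameter.
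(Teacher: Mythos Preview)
Your argument is correct and is the standard proof of this fact. Note, however, that the paper itself does not prove this proposition: it is stated with a citation to \cite{xiao2021optimizing} and no proof is given in the text, so there is no ``paper's own proof'' to compare against. Your derivation---computing the R\'enyi divergence between two Gaussians with a common covariance, then using that add/remove neighbors differ by $\pm e_x$ so the quadratic form $\Delta^\top M^\top\Sigma^{-1}M\,\Delta$ picks out a diagonal entry bounded by $\gamma$---is exactly the argument one would expect and matches the cited source.
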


\begin{prop}[zCDP of exponential mechanism; \cite{mcsherry2007mechanism,cesar2021bounding}]\label{defn:exponential mech}
  Let $\epsilon > 0$ and $\Score_r: \X \rightarrow \R$ be a quality score of candidate $r \in \mathcal{R}$ for data vector $p$. Then the exponential mechanism outputs a candidate $r \in \mathcal{R}$ according to the following distribution:
  $\Pr(\M(p) = r) \propto \exp \big(\frac{\epsilon}{2 \Delta_1} \Score_r(p) \big)$.
  The exponential mechanism satisfies $\frac{\epsilon^2}{8}$-zCDP.
\end{prop}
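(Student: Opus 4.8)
The plan is to prove the bound in two stages: first establish that the exponential mechanism satisfies a \emph{bounded range} property on its privacy loss, and then show that any mechanism whose privacy loss takes values in an interval of width $\eta$ satisfies $\tfrac{\eta^2}{8}$-zCDP. Instantiating the second stage with $\eta = \epsilon$ gives the claim. This route is what yields the tight constant $\tfrac18$; the cruder argument that passes through pure $\epsilon$-DP and the generic conversion $\epsilon \mapsto \tfrac{\epsilon^2}{2}$-zCDP would lose a factor of four.

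For the first stage I would fix neighbors $p \sim p'$ and write $Z(p) = \sum_{r \in \mathcal{R}} \exp\!\big(\tfrac{\epsilon}{2\Delta_1}\Score_r(p)\big)$ for the normalizer. The privacy loss at output $r$ is
$$
\ell(r) = \ln \frac{\Pr(\M(p)=r)}{\Pr(\M(p')=r)} = \frac{\epsilon}{2\Delta_1}\big(\Score_r(p) - \Score_r(p')\big) + \ln\frac{Z(p')}{Z(p)}.
$$
The normalizer term is identical for every $r$, so it cancels in the difference $\ell(r) - \ell(r') = \frac{\epsilon}{2\Delta_1}\big[(\Score_r(p)-\Score_r(p')) - (\Score_{r'}(p)-\Score_{r'}(p'))\big]$. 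Since each score $\Score_r$ has $\ell_1$ sensitivity at most $\Delta_1$, each parenthesized difference lies in $[-\Delta_1, \Delta_1]$, so $|\ell(r) - \ell(r')| \le \frac{\epsilon}{2\Delta_1}\cdot 2\Delta_1 = \epsilon$. Hence the privacy loss random variable is confined to an interval of width at most $\epsilon$.

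For the second stage I would set $P = \M(p)$, $Q = \M(p')$, write $L = \ln(P/Q)$ as a function of the output, and define the cumulant generating function $\psi(\lambda) = \ln \E_{r \sim Q}[e^{\lambda L}]$. Two anchor values hold: $\psi(0) = 0$ trivially and $\psi(1) = \ln \E_{r\sim Q}[P/Q] = \ln 1 = 0$. Because $\psi''(\lambda)$ equals the variance of $L$ under the exponentially tilted law $Q_\lambda \propto Q\, e^{\lambda L}$, and every tilt is supported on the same width-$\epsilon$ range, Popoviciu's inequality gives $\psi''(\lambda) \le \tfrac{\epsilon^2}{4}$ uniformly in $\lambda$. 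I would then consider $h(\lambda) = \psi(\lambda) - \tfrac{\epsilon^2}{8}\lambda(\lambda-1)$, so that $h'' \le 0$, making $h$ concave; together with $h(0) = h(1) = 0$ this forces $h(\gamma) \le 0$ for all $\gamma > 1$. Recalling that the Rényi divergence satisfies $(\gamma-1)\,D_\gamma(P\|Q) = \psi(\gamma)$, I obtain $D_\gamma(\M(p)\|\M(p')) \le \tfrac{\epsilon^2}{8}\gamma$ for every $\gamma \in (1,\infty)$, which is precisely $\tfrac{\epsilon^2}{8}$-zCDP.

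I expect the main obstacle to be the second stage. The bounded-range computation is routine once one spots the cancellation of the normalizer, but the analytic passage from a width-$\epsilon$ privacy loss to the \emph{tight} $\tfrac{\epsilon^2}{8}$ Rényi bound is where the content lies: one must justify the uniform bound $\psi'' \le \tfrac{\epsilon^2}{4}$ through the tilted-variance reading of the CGF and then identify the concave auxiliary function $h$ whose zeros at $\lambda = 0$ and $\lambda = 1$ pin down the quadratic $\tfrac{\epsilon^2}{8}\lambda(\lambda-1)$.
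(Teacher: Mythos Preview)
Your argument is correct and is essentially the bounded-range argument of Cesar and Rogers that the paper cites. Note, however, that the paper does not supply its own proof of this proposition: it is stated as a background result with attribution to \cite{mcsherry2007mechanism,cesar2021bounding} and used as a black box in the privacy accounting for Scalable MWEM. So there is nothing in the paper to compare against beyond the citation itself; your two-stage route (bounded range of the privacy loss, then the CGF/Popoviciu argument pinning down the quadratic $\tfrac{\epsilon^2}{8}\lambda(\lambda-1)$) is exactly the content of the cited reference and recovers the sharp constant.
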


Adaptive composition and post-processing are two important properties of differential privacy that allow us to construct complex mechanisms from the above building blocks. Let us state these results for zCDP.

\begin{prop}[zCDP Properties; \cite{bun2016concentrated, cesar2021bounding}] \label{th:zcdpProperties}
  zCDP satisfies these two properties of differential privacy:
  \begin{enumerate}[leftmargin=*,topsep=0pt,itemsep=0pt]
    \item (Adaptive Composition) Let $\M_1: \X \rightarrow \mathcal{Y}_1$ satisfy $\rho_1$-zCDP and $\M_2: \X \times \mathcal{Y}_1 \rightarrow \mathcal{Y}_2$ satisfy $\rho_2$-zCDP. The mechanism $p \mapsto \M_2(p, \M_1(p))$ satisfies $(\rho_1 + \rho_2)$-zCDP.
    \item (Post-processing) Let $\M_1: \X \rightarrow \mathcal{Y}$ satisfy $\rho$-zCDP and $f: \mathcal{Y} \rightarrow \mathcal{Z}$ be a randomized algorithm. Then $\M: \X \rightarrow \mathcal{Z} = f \circ \M_1$ satisfies $\rho$-zCDP.
  \end{enumerate}
\end{prop}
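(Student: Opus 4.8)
The plan is to prove both properties directly from the definition of $\rho$-zCDP in terms of the Rényi divergence $D_\gamma$, relying on two structural facts about $D_\gamma$: its monotonicity under post-processing (the data-processing inequality) and a factorization of the order-$\gamma$ divergence integral under conditioning. Throughout I fix an order $\gamma \in (1,\infty)$ and a pair of neighboring datasets $p \sim p'$; it then suffices to bound $D_\gamma(\M(p)\,\|\,\M(p'))$ by $(\rho_1+\rho_2)\gamma$ (resp.\ $\rho\gamma$) for every such $\gamma$ and every such pair, since the zCDP guarantee quantifies over all orders and all neighbors.

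For post-processing I would invoke the data-processing inequality for Rényi divergence: for any (possibly randomized) map $f$ and any distributions $P,Q$, one has $D_\gamma(f(P)\,\|\,f(Q)) \le D_\gamma(P\,\|\,Q)$. Applying this with $P=\M_1(p)$, $Q=\M_1(p')$, and the channel $f$ gives
$$
D_\gamma(\M(p)\,\|\,\M(p')) = D_\gamma\big(f(\M_1(p))\,\big\|\,f(\M_1(p'))\big) \le D_\gamma(\M_1(p)\,\|\,\M_1(p')) \le \rho\gamma,
$$
where the last step is the $\rho$-zCDP guarantee of $\M_1$. As $\gamma$ and $p\sim p'$ were arbitrary, $\M$ satisfies $\rho$-zCDP. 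The data-processing inequality is itself a standard property of $D_\gamma$ (provable from joint convexity of $(P,Q)\mapsto\int P^\gamma Q^{1-\gamma}$ together with Jensen's inequality), and I would either cite it or include that short derivation.

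For adaptive composition I write the output of $\M(p)=(\M_1(p),\M_2(p,\M_1(p)))$ as a pair $(y_1,y_2)$ with joint density $P(y_1,y_2)=P_1(y_1)P_2(y_2\mid y_1)$ under $p$, where $P_1$ is the density of $\M_1(p)$ and $P_2(\cdot\mid y_1)$ the density of $\M_2(p,y_1)$; I define $Q_1,Q_2$ analogously under $p'$. Using $D_\gamma(P\,\|\,Q)=\frac{1}{\gamma-1}\log\int P^\gamma Q^{1-\gamma}$ and substituting the factorized densities, the double integral separates as
$$
\int P_1(y_1)^\gamma Q_1(y_1)^{1-\gamma}\left[\int P_2(y_2\mid y_1)^\gamma Q_2(y_2\mid y_1)^{1-\gamma}\,dy_2\right]dy_1.
$$
The inner integral equals $\exp\big((\gamma-1)\,D_\gamma(\M_2(p,y_1)\,\|\,\M_2(p',y_1))\big)$, and since $\M_2$ satisfies $\rho_2$-zCDP with respect to neighboring datasets for \emph{every} fixed auxiliary input $y_1$, it is bounded by $\exp((\gamma-1)\rho_2\gamma)$ uniformly in $y_1$. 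Pulling this constant out of the outer integral and applying $\frac{1}{\gamma-1}\log(\cdot)$ yields
$$
D_\gamma(\M(p)\,\|\,\M(p')) \le \rho_2\gamma + D_\gamma(\M_1(p)\,\|\,\M_1(p')) \le \rho_2\gamma + \rho_1\gamma = (\rho_1+\rho_2)\gamma,
$$
using the $\rho_1$-zCDP guarantee of $\M_1$ in the final step.

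The main obstacle is the composition argument: it hinges on the uniform-in-$y_1$ bound on the conditional divergence, which requires reading the zCDP hypothesis for $\M_2$ correctly (neighboring in the dataset argument, with the auxiliary input held fixed), and on the clean factorization of the order-$\gamma$ integral under conditioning — the analogue of a chain rule for $D_\gamma$. Some care is also needed for measure-theoretic subtleties (expressing all densities with respect to a common dominating measure, and handling the edge cases where $Q$ vanishes) so that both the factorization and the exponential-form identity for the inner integral are valid. By contrast, the post-processing part is comparatively routine once the data-processing inequality is granted.
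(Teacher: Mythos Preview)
The paper does not actually prove this proposition: it is stated in Appendix~\ref{s:dp_appendix} with citations to \cite{bun2016concentrated,cesar2021bounding} and no proof is given, so there is nothing to compare your argument against on the paper's side. Your proof is the standard one and is correct in substance.

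One small gap worth closing: in the composition argument you define $\M(p)$ as the \emph{pair} $(\M_1(p),\M_2(p,\M_1(p)))$ and bound the divergence of that joint output, but the proposition as stated defines the composed mechanism as $p\mapsto \M_2(p,\M_1(p))$, which lives in $\mathcal{Y}_2$ alone. Your computation therefore establishes $(\rho_1+\rho_2)$-zCDP for the mechanism that releases both coordinates; to finish, you should note that projecting onto the second coordinate is post-processing and invoke part~(2) (which you have already proved) to conclude the stated bound for $\M_2(p,\M_1(p))$ itself. This is a one-line fix, but as written the argument proves a slightly different (stronger) statement than the one claimed.
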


\section{Relationship between Marginals and Residuals} \label{s:proofs-MR}

\blue{In this section, we prove Proposition~\ref{prop:reconstruct}, which provides a relationship between marginals and residuals.
Before proving this result, let us consider residual workloads as well as the subtraction matrix $D_{(k)}$. }

Let us state some properties of residuals.
\begin{prop}[Residual Properties; \cite{xiao2024optimal, darroch1983additive, fienberg2006computing}] \label{th:residual_properties}
    Let $\Omega$ be the set of all tuples of attributes for a given data universe $\X$. 
    \begin{enumerate}[leftmargin=*,topsep=0pt,itemsep=0pt]
        \item $R_\tau$ is an $m_\tau \times n$ matrix with full row rank.
        \item $R_\tau, R_{\tau'}$ are mutually orthogonal for $\tau \neq \tau'$ i.e. $R_\tau R_{\tau'}^\top = \mathbf{0}$.
        \item $R_\tau, M_{\tau'}$ are mutually orthogonal for $\tau \not\subseteq \tau'$ i.e. $R_\tau M_{\tau'}^\top = \mathbf{0}$.
        \item $(R_\tau)_{\tau \in \Omega}$ spans $\R^n$.
    \end{enumerate}
\end{prop}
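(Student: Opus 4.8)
\textbf{Proof proposal for Proposition~\ref{th:residual_properties}.}

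The plan is to establish each of the four properties by reducing to the Kronecker-product structure of $R_\tau$ shown in Fig.~\ref{fig:kronecker-residuals} and the algebraic facts in Proposition~\ref{th:kronProperties}. Throughout I would use that $R_\tau = \bigotimes_{k=1}^d C_k^\tau$ where $C_k^\tau = D_{(k)}$ if $k \in \tau$ and $C_k^\tau = 1_k^\top$ otherwise. Two elementary one-dimensional facts do essentially all the work: (a) $D_{(k)}$ has full row rank $n_k - 1$ (its rows are the obviously independent vectors $e_{i+1} - e_i$), and $1_k^\top$ has full row rank $1$; (b) the row space of $D_{(k)}$ is exactly the orthogonal complement of the constant vector $1_k$, since $D_{(k)} 1_k = 0$ and $\dim(\mathrm{rowspace}\, D_{(k)}) = n_k - 1$. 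Equivalently, $D_{(k)} 1_k^{\top\top} = D_{(k)} 1_k = 0$ and $1_k^\top D_{(k)}^\top = (D_{(k)} 1_k)^\top = 0$.

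For property 1, full row rank of a Kronecker product is the product of the row ranks of the factors (this follows from property 3 of Proposition~\ref{th:kronProperties} applied to a Gram matrix argument, or directly from the fact that $\mathrm{rank}(A \otimes B) = \mathrm{rank}(A)\mathrm{rank}(B)$), so $\mathrm{rank}(R_\tau) = \prod_{k \in \tau}(n_k - 1) \cdot \prod_{k \notin \tau} 1 = m_\tau$, which is the number of rows. For properties 2 and 3, I would compute the relevant Gram-type matrix factor-by-factor using property 3 of Proposition~\ref{th:kronProperties}: $R_\tau R_{\tau'}^\top = \bigotimes_{k=1}^d C_k^\tau (C_k^{\tau'})^\top$, and $M_{\tau'}^\top$ has $k$th factor $I_k$ for $k \in \tau'$ and $1_k$ for $k \notin \tau'$. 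When $\tau \neq \tau'$ there is an index $k$ in the symmetric difference; if $k \in \tau \setminus \tau'$ the corresponding factor is $D_{(k)} 1_k = 0$, and if $k \in \tau' \setminus \tau$ it is $1_k^\top D_{(k)}^\top = 0$; either way one factor of the Kronecker product vanishes, so the whole product is the zero matrix. The same argument handles property 3: if $\tau \not\subseteq \tau'$ pick $k \in \tau \setminus \tau'$, giving a factor $D_{(k)} I_k^\top = D_{(k)}$ — wait, that is not zero; instead for $R_\tau M_{\tau'}^\top$ the factor at such $k$ is $D_{(k)} \cdot I_k = D_{(k)} \neq 0$, so I must be more careful. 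Hmm — for property 3 the vanishing factor actually comes from an index $k \in \tau \setminus \tau'$: there the $M_{\tau'}$ factor is $1_k^\top$ (since $k \notin \tau'$), so $M_{\tau'}^\top$ has factor $1_k$, and $R_\tau$ has factor $D_{(k)}$, giving $D_{(k)} 1_k = 0$. Good — so in both cases the zero factor is produced by the index witnessing $\tau \not\subseteq \tau'$ (resp. $\tau \neq \tau'$), using that $D_{(k)}$ kills the constant vector.

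For property 4, I would show $(R_\tau)_{\tau \in \Omega}$ spans $\R^n$ by a dimension count together with orthogonality. By properties 1 and 2 the row spaces are mutually orthogonal subspaces of dimension $m_\tau$ each, so the dimension of their span is $\sum_{\tau \subseteq [d]} \prod_{k \in \tau}(n_k - 1)$. Expanding the product, $\sum_{\tau \subseteq [d]} \prod_{k \in \tau}(n_k - 1) = \prod_{k=1}^d \big(1 + (n_k - 1)\big) = \prod_{k=1}^d n_k = n$, so the orthogonal sum of these row spaces is all of $\R^n$. The main obstacle — really the only place requiring care — is getting the one-dimensional orthogonality bookkeeping exactly right, i.e. correctly identifying which index produces the zero Kronecker factor in properties 2 and 3; once the key identity $D_{(k)} 1_k = 0$ is in hand, everything else is routine Kronecker algebra and a telescoping dimension count. \qed
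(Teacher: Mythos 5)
Your proof is correct, but note that the paper itself never proves Proposition~\ref{th:residual_properties}: it is imported wholesale from the cited works \cite{xiao2024optimal, darroch1983additive, fienberg2006computing}, so there is no in-paper argument to compare against. What you have written is a valid self-contained derivation, and it uses exactly the toolkit the paper deploys for the closely analogous facts it \emph{does} prove: the factor-by-factor Gram computation via the mixed-product property, with $D_{(k)} 1_k = 0$ producing a vanishing Kronecker factor at any index witnessing $\tau \neq \tau'$ or $\tau \not\subseteq \tau'$, is precisely the argument the paper gives for $A_{\gamma,\tau}^+ (A_{\gamma,\tau'}^+)^\top = \mathbf{0}$ in the proof of Theorem~\ref{th:marginals2residuals}; and your dimension count $\sum_{\tau \subseteq [d]} \prod_{k \in \tau}(n_k-1) = \prod_k n_k = n$ is the same identity the paper establishes (by a combinatorial bijection in a footnote, where you use the algebraic expansion $\prod_k(1 + (n_k-1))$ — both are fine). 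Each step checks out: rank multiplicativity of Kronecker products gives property 1, the correct vanishing index for property 3 is indeed $k \in \tau \setminus \tau'$ (where the $M_{\tau'}$ factor is $1_k^\top$, not $I_k$), and properties 1, 2, and the count give property 4. The only revision I would ask for is editorial: the ``wait, that is not zero'' self-correction in the middle of the property-3 argument should be excised from a final write-up — state the correct choice of index directly rather than narrating the false start.
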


\begin{lem} \label{lem:decompose}
    Data vector $p \in \R^n$ can be decomposed uniquely as follows: $p = \sum_{\tau \in \Omega} R_\tau^\top v_\tau $ for $v_\tau \in \R^{m_\tau}$.
\end{lem}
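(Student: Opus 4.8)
The plan is to establish the decomposition $p = \sum_{\tau \in \Omega} R_\tau^\top v_\tau$ in two parts: first existence (that such $v_\tau$ exist), then uniqueness (that the $v_\tau$ are determined by $p$). Both parts flow directly from the residual properties collected in Proposition~\ref{th:residual_properties}.

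For \textbf{existence}, I would argue as follows. By property~4 of Proposition~\ref{th:residual_properties}, the rows of the matrices $\{R_\tau\}_{\tau \in \Omega}$ collectively span $\R^n$, so the column space of the stacked matrix $R_\Omega^\top = [\,R_\tau^\top\,]_{\tau \in \Omega}$ is all of $\R^n$. Hence for any $p \in \R^n$ there is some vector $v = (v_\tau)_{\tau \in \Omega}$ with $p = R_\Omega^\top v = \sum_{\tau \in \Omega} R_\tau^\top v_\tau$. A cleaner constructive route, which also sets up the uniqueness argument, is to exhibit the $v_\tau$ explicitly: since the row spaces of the $R_\tau$ are mutually orthogonal (property~2) and each $R_\tau$ has full row rank (property~1), the matrix $R_\tau R_\tau^\top$ is invertible, and I claim $v_\tau = (R_\tau R_\tau^\top)^{-1} R_\tau\, p$ works. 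To see this, note the vectors $\{R_\tau^\top (R_\tau R_\tau^\top)^{-1} R_\tau\, p\}_\tau$ are the orthogonal projections of $p$ onto the mutually orthogonal subspaces $\mathrm{rowspace}(R_\tau)$, and since these subspaces span $\R^n$ and are pairwise orthogonal, they form an orthogonal direct-sum decomposition of $\R^n$; therefore $p$ equals the sum of its projections.

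For \textbf{uniqueness}, suppose $\sum_{\tau \in \Omega} R_\tau^\top v_\tau = \sum_{\tau \in \Omega} R_\tau^\top v'_\tau$. Setting $w_\tau = v_\tau - v'_\tau$, we have $\sum_\tau R_\tau^\top w_\tau = 0$. Fix any $\sigma \in \Omega$ and left-multiply by $R_\sigma$: using mutual orthogonality $R_\sigma R_\tau^\top = \mathbf{0}$ for $\tau \neq \sigma$ (property~2), all terms vanish except the $\tau = \sigma$ term, giving $R_\sigma R_\sigma^\top w_\sigma = 0$. Since $R_\sigma$ has full row rank (property~1), $R_\sigma R_\sigma^\top$ is invertible, so $w_\sigma = 0$. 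As $\sigma$ was arbitrary, $v_\tau = v'_\tau$ for all $\tau$, establishing uniqueness.

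I do not anticipate a serious obstacle here — the lemma is essentially a restatement of the fact that the row spaces of the residual workloads furnish an orthogonal direct-sum decomposition of $\R^n$, which is exactly what properties~1, 2, and 4 provide. The only point requiring a little care is justifying that mutual orthogonality of the \emph{row spaces} together with their spanning $\R^n$ yields a genuine \emph{direct sum} (so that the representation is unique); this is the standard fact that pairwise-orthogonal subspaces are linearly independent, and invoking it cleanly — or, equivalently, running the short left-multiplication argument above — is the crux of the proof.
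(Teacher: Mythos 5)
Your proof is correct and takes essentially the same approach as the paper: the explicit formula $v_\tau = (R_\tau R_\tau^\top)^{-1} R_\tau\, p$ is exactly the paper's projection $p_\tau = R_\tau^+ R_\tau\, p$ written out, since $R_\tau^+ = R_\tau^\top (R_\tau R_\tau^\top)^{-1}$ for a full-row-rank matrix. Your uniqueness argument via left-multiplication by $R_\sigma$ is actually spelled out more carefully than the paper's one-line appeal to full row rank, which is a minor improvement rather than a different route.
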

\begin{proof}
    Let $p_\tau = R_\tau^+ R_\tau p$ be the projection of $p$ onto the row-space of $R_\tau$. By Proposition \ref{th:residual_properties}, $p = \sum_{\tau \in \Omega} p_\tau $. Let $v_\tau \in \R^{m_\tau}$ be such that $p_\tau = R_\tau^\top v_\tau$. Since $R_\tau$ is full row rank, $v_\tau$ is unique.
\end{proof}

Now, let us consider $D_{(k)}^+$. Recall that $D_{(k)}$ is an $n_k - 1 \times n_k$ matrix given by 
\begin{align*}
    D_{(k)} = \begin{bmatrix}
    1       & -1 & 0  & \cdots & 0 \\
    0       & 1  & -1 & \cdots & 0 \\
    \vdots  &  \vdots  &  \vdots   &        & \vdots \\
    0       &  \cdots  &  \cdots  &   1    & -1
    \end{bmatrix}.
\end{align*}
 The pseudoinverse of $D_{(k)}$ is known in closed-form: 
 \begin{align*}
     D_{(k)}^{+} &= \frac{1}{n_k} \begin{bmatrix}
        n_k-1     & n_k-2 & \cdots & 1 \\
        -1      & n_k-2 & \cdots & 1 \\
        -1      & -2  & \cdots & 1 \\
        \vdots  &   \vdots  &        & \vdots \\
        -1      & -2  & \cdots & -(n_k - 1)
    \end{bmatrix} \\
    &= (\nicefrac{1}{n_k})(1_{k} u_{k}^\top - n_k C_{k}),
\end{align*}
where $u_k = \begin{bmatrix}
    n_k-1 \\
    n_k-2 \\
    \vdots \\
    1
\end{bmatrix}$ and $C_k$ is the $n_k \times n_k-1$ lower triangular matrix of ones.

Continuing the example from Fig.~\ref{fig:kronecker}, 
\begin{align*}
    D_{(k)} = \begin{bmatrix}
        1 & -1 &  0 \\
        0 &  1 & -1 \\
        \end{bmatrix} 
    \qquad
    D_{(k)}^{+} = \frac{1}{3}\begin{bmatrix}
        2  & 1  \\
        -1 & 1  \\
        -1 & -2 \\
        \end{bmatrix}. 
\end{align*}

\propreconstruct*

\begin{proof}[Proof of Proposition~\ref{prop:reconstruct}]
  First note that $R_\mathcal{S}^+$ is the pseudoinverse of a block matrix.
  In general the pseudoinverse of a vertical block matrix involves the pseudoinverse of each block multiplied by a projection matrix \cite{baksalary2007particular}. 
  In this case each block is a residual query, as discussed in Proposition \ref{th:residual_properties}, these query matrices are mutually orthogonal so the pseudoinverse $R_\mathcal{S}^+$ has the form $\left(R_\tau^+\right)_{\tau \in \mathcal{S}}^T$. 
  Here, the combined query matrix $R_\mathcal{S}$ is constructed by stacking the blocks $R_\tau$ vertically and the combined pseudoinverse $R_\mathcal{S}^+$ stacks the blocks $R_\tau^+$ horizontally.
  Given this block structure of $R_\mathcal{S}^+$ we can write
  \begin{equation}
  R_\mathcal{S}^+ \blue{z} 
  = \sum_{\tau \in \mathcal{S}} R_\tau^+ \blue{z}_\tau 
  \quad \implies \quad
  M_\gamma R_\mathcal{S}^+ \blue{z} 
  = \sum_{\tau \in \mathcal{S}} M_\gamma R_\tau^+ \blue{z}_\tau.
  \end{equation}
  Another relevant property of residual queries given in Proposition \ref{th:residual_properties} is that $R_\tau M_{\tau'}^\top = \mathbf{0}$ for $\tau \not\subseteq \tau'$. When we drop these orthogonal queries from the summation, we get $M_\gamma R_\mathcal{S}^+ \blue{z} = \sum_{\tau \in \mathcal{S}, \tau \subseteq \gamma} M_\gamma R_\tau^+ \blue{z}_\tau$. When computing the product $M_\gamma R_\tau^+$, several properties of Kronecker products given in Proposition \ref{th:kronProperties} are relevant. The first is that $(A \otimes B)^+ = A^+ \otimes B^+$. 
  Applying this property gives
  \begin{equation}
      R_\tau^+ = \bigotimes_{k=1}^d \begin{cases} D_{(k)}^+ & k \in \tau \\ \left(1_k^\top\right)^+ & k \notin \tau \\ \end{cases}.
  \end{equation}
  The next property is that when when $A$ and $B$ both have compatible Kronecker structure, $AB = \bigotimes_i A_i B_i$. Both $M_\gamma$ and $R_\tau^+$ have compatible Kronecker structure so we can write
  \begin{equation}
      M_\gamma R_\tau^+ = \bigotimes_{k=1}^d \begin{cases} 
      I_k D_{(k)}^+ & k \in \tau \\ 
      I_k \left(1_k^\top\right)^+ & k \in \gamma \setminus \tau \\ 
      1_k^\top \left(1_k^\top\right)^+ & k \notin \gamma
      \end{cases}.
  \end{equation}
  To evaluate this, notice that $(1_k^T)^+ = 1_k (1_k^T 1_k)^{-1} = 1_k/n_k$ and $1_k^\top (1_k/n_k) = 1$. Plugging this into the equation above we get 
  \begin{equation}
      A_{\gamma, \tau} = M_\gamma R_\tau^+ = \bigotimes_{k=1}^d \begin{cases} 
      D_{(k)}^+ & k \in \tau \\ 
      1_k/n_k & k \in \gamma \setminus \tau \\ 
      1 & k \notin \gamma
      \end{cases}.
  \end{equation}
  Finally, this gives the full result that $M_\gamma R_\mathcal{S}^+ \blue{z} = \sum_{\tau \in \mathcal{S}, \tau \subseteq \gamma} A_{\gamma, \tau} \blue{z}_\tau$.
\end{proof}

We prove the time complexity result for $A_{\gamma, \tau} \blue{z_\tau}$ in Appendix \ref{s:complexity-proofs}. 

\section{ReM Proofs} \label{s:proofs-rem}

In this section, we prove results related to ReM from Sections \ref{s:ReM} and \ref{s:GReM}.

\remcorrectness*

\begin{proof}
\blue{Suppose $\hat \alpha_\tau$ minimizes $L_\tau$ for all $\tau$ and let $\hat p = R_\S^+ \hat \alpha$. Then for any $p \in \R^n$
\begin{equation}
\label{eq:rem-correctness}
\sum_{\tau \in \S} L_\tau(R_\tau \hat p) 
= \sum_{\tau \in \S} L_\tau(R_\tau R_\S^+ \hat \alpha) 
\stackrel{(\star)}{=} \sum_{\tau \in \S} L_\tau(\hat \alpha_\tau) 
\leq \sum_{\tau \in \S} L_\tau(R_\tau p).
\end{equation}
We will justify Equality $(\star)$ below.
The inequality holds because $\hat \alpha_\tau$ minimizes $L_\tau$. 
Thus, Equation~\eqref{eq:rem-correctness} shows that $\hat p$ minimizes the combined loss $\sum_{\tau \in \S} L_\tau(R_\tau p)$.}

\blue{To justify Equality $(\star)$, first observe that $R_\S R_\S^+ \hat \alpha = \hat \alpha$ because $R_\S$ has full row rank and thus $R_\S R_\S^+ = I$. 
We can see $R_\S$ has full row rank by Proposition~\ref{th:residual_properties}: each block of rows corresponding to one residual has full row rank and these blocks are orthogonal. 
The equality $R_\tau R_\S^+ \hat \alpha = \hat \alpha_\tau$ is obtained by selecting the block of rows corresponding to residual $\tau$ from the equality $R_\S R_\S^+ \hat \alpha = \hat \alpha$.}
\end{proof}

\decompidentity*
    
\begin{proof}[Proof of Lemma~\ref{lem:decomp-identity}]
Recall that we defined $A_{\gamma, \tau} = M_\gamma R_\tau^+$. Then $A_{\gamma, \tau}^+ = R_\tau M_\gamma^+. $ Observe the following:

$$
\begin{aligned}
A_{\gamma,\tau}^+ M_\gamma &= \bigotimes_{k=1}^d 
\begin{cases}
D_{(k)} I_k & k \in \tau \\
1_k^\top I_k & k \in \gamma \setminus \tau \\
1 \cdot 1_k^\top & k \notin \gamma
\end{cases} \\[10pt]
&= \bigotimes_{k=1}^d
\begin{cases}
D_{(k)} & k \in \tau \\
1_k^\top & k \notin \tau \\
\end{cases} \\[10pt]
&= R_\tau
\end{aligned}
$$
\end{proof}

\marginalsTOresiduals*

\begin{proof}[Proof of Theorem~\ref{th:marginals2residuals}.]
\blue{Since} $y_\tau \sim \N(M_\gamma p, \sigma^2 I)$ and $z_\tau = A_{\gamma,\tau}^+ y_\tau$, standard properties of normal distributions give that $z_\tau \sim \N( A_{\gamma,\tau}^+ M_\gamma p, \sigma^2 A_{\gamma,\tau}^+(A_{\gamma,\tau}^+)^\top)$. 
By Lemma~\ref{lem:decomp-identity}, the mean is equal to $R_\tau p$, as stated.
For the covariance 
$$
\begin{aligned}
A_{\gamma,\tau}^+(A_{\gamma,\tau}^+)^\top &=
\begin{cases}
D_{(k)}D_{(k)}^\top & k \in \tau \\
1_k^\top 1_k & k \in \gamma\setminus \tau \\
1 & k \notin \gamma 
\end{cases} \\[10pt]
&=
\prod_{k \in \gamma \setminus \tau} n_k\cdot 
\bigotimes_{k=1}^d 
\begin{cases}
D_{(k)}D_{(k)}^\top & k \in \tau \\
1 & k \notin \tau 
\end{cases} \\[10pt]
&= D_\tau D_\tau^\top \prod_{k \in \gamma\setminus \tau} n_k
\end{aligned}
$$
so the covariance is $\sigma^2 D_\tau D_\tau^\top \prod_{k \in \gamma \setminus \tau} n_k$, as stated. 

For $\tau \neq \tau'$ the vectors $z_\tau$ and $z_{\tau'}$ are jointly normal with covariance $\sigma^2 A_{\gamma,\tau}^+(A_{\gamma,\tau'}^+)^\top$. We will show that $A_{\gamma,\tau}^+(A_{\gamma,\tau'}^+)^\top$ is a matrix of zeros, so the covariance matrix is identically zero and the vectors are independent. 
By the Kronecker structure,
$$
A_{\gamma,\tau}^+(A_{\gamma,\tau'}^+)^\top = \bigotimes_{k=1}^d
\begin{cases}
D_{(k)}D_{(k)}^\top & k \in \tau \cap \tau' \\
1_k^\top D_{(k)}^\top & k \in \tau' \setminus \tau \\
D_{(k)} 1_{k} & k \in \tau \setminus \tau' \\
1_k^\top 1_k & k \in \gamma \setminus (\tau \cup \tau') \\
1 & k \notin \gamma
\end{cases}
$$
Observe that $D_{(k)} 1_k = 0$ is a vector of zeros because the rows of $D_{(k)}$ sum to zero, and similarly $1_k^\top D_{(k)}^\top$ is a row vector of zeros. 
Thus, any $k$ in the symmetric difference $(\tau' \setminus \tau) \cup (\tau \setminus \tau')$ will contribute an all zeros matrix to the Kronecker product and cause $A_{\gamma,\tau}^+ (A_{\gamma,\tau'}^+)^\top$ to be an all zeros matrix. 
But there must be at least one $k$ in the symmetric difference because $\tau \neq \tau'$. This proves that the covariance matrix is identically zero, as desired.

We will next show that the mapping $H_\gamma$ is invertible. 
$H_\gamma$ is a matrix with blocks $A_{\gamma,\tau}^+$ for each $\tau \subseteq \gamma$, stacked vertically, and $n_\gamma = \prod_{k \in \gamma} n_k$ columns. 
From the definition of the block $A_{\gamma,\tau}^+$, we can see it has $m_\tau = \prod_{k \in \tau} (n_k - 1)$ rows and is of full row rank because $D_{(k)}$ is a full rank matrix with $n_k-1$ rows and the other matrices in the Kronecker product have only one row. 
We showed above that $A_{\gamma,\tau}^+ (A_{\gamma,\tau'}^+)^\top = 0$ for $\tau \neq \tau'$, which means that the blocks of $H_\gamma$ have mutually orthogonal rows, and combined with the fact that each block has full row rank this means that $H_\gamma$ has rank equal to the total number of rows. This number of rows is  $\sum_{\tau \subseteq \gamma} m_\tau = \sum_{\tau \subseteq \gamma} \prod_{k \in \tau} (n_k-1) = \prod_{k \in \gamma} n_k = n_\gamma$, which equals the number of columns, and therefore $H_\gamma$ invertible.\footnote{To see that $\sum_{\tau \subseteq \gamma} \prod_{k \in \tau} (n_k-1) = \prod_{k \in \gamma} n_k$, observe that $n_\gamma = \prod_{k \in \gamma} n_k$ counts the number of ways to map each $k \in \gamma$ to a value $i \in \{1,\ldots,n_k\}$. Equivalently, we may consider selecting a subset $\tau \subseteq \gamma$, assigning each $k \in \tau$ to the value $1$, and then assigning each $k \notin \tau$ to one of the remaining values  in $\{2,\ldots,n_k\}$. The number of ways to do this is $\sum_{\tau \subseteq \gamma} \prod_{k \in \tau} (n_k-1) = \prod_{k \in \gamma} n_k$.}

Now, given what we have shown so far, we will write two different expressions for the log-probability density function $\log p_z(z)$ where $z=(z_\tau)_{\tau \subseteq \gamma}$. 
First, we have already derived the joint multivariate distribution of $z$, which, due to independence, has log-density
$$
\log p_z(z) = \sum_{\tau \subseteq \gamma} \log \N\Big(z_\tau \,\Big|\,R_\tau p, \sigma^2 D_\tau D_\tau^\top \prod_{k \in \gamma\setminus \tau} n_k\Big).
$$
Second, because $z = H_\gamma y_\gamma$ for the multivariate normal random variable $y_\gamma$, the change of variable formula for probability densities gives that
$$
\log p_z(z) = \log \N(y_\gamma | M_\gamma p, \sigma^2 I) - \log|\det H_\gamma|.
$$
Equating these two expressions gives Equation~\eqref{eq:likelihood}, which completes the proof.
\end{proof}

\pinvgrem*

\newcommand{\Q}{\mathcal{Q}}
\renewcommand{\L}{\mathcal{L}}
\newcommand{\SE}{\text{SE}}
\newcommand{\col}{\text{col}}
\newcommand{\row}{\text{row}}

\begin{proof}
By standard properties of the pseudoinverse, $M_\mathcal{Q}^+ y$ is the unique vector that minimizes $\SE(p) = \|M_\mathcal{Q} p - y\|_2^2$ and is in the row span of $M_\mathcal{Q}$.
We will show that the vector $R_\S^+ \hat \alpha$ satisfies both properties, where $\hat \alpha = (\hat \alpha_\tau)_{\tau \in \S}$ is constructed in Algorithm~\ref{alg:grem-mle}, and thus $R_\S^+ \hat \alpha = M_\mathcal{Q}^+ y$.
Then, by Proposition~\ref{prop:reconstruct}, the reconstructed marginal $\hat \mu_{\gamma}$ in Algorithm~\ref{alg:grem-mle} is equal to $M_\gamma R_\S^+ \hat \alpha$ and hence also equal to $M_\gamma M_\mathcal{Q}^+ y$, as claimed.

We will first show $R_\S^+ \hat \alpha$ minimizes $\SE(p)$. Observe that the $\SE(p)$ is equivalent to the negative log-likelihood $\L_y(p)$ of the marginal measurements $y$:
\begin{align*}
\SE(p) &= \|M_\mathcal{Q} p - y\|_2^2  \\
&= \sum_{\gamma \in \Q} \|M_\gamma p - y_\gamma\|_2^2 \\
&= -2 \sigma^2 \sum_{\gamma \in \Q} \log \N(y_{\gamma} | M_\gamma p, \sigma^2 I) + \text{const.} \\
&= 2 \sigma^2 \L_y(p) + \text{const.}
\end{align*}
Therefore $\SE(p)$ and $\L_y(p)$ have the same minimizers.

Then, by Theorem~\ref{th:marginals2residuals},
\begin{align*}
\L_y(p) &= \sum_{\gamma \in \Q} - \log \N(y_{\gamma} | M_\gamma p, \sigma^2 I) \\ 
        &= \sum_{\gamma \in \Q} \sum_{\tau \subseteq \gamma} - \log \N(A_{\gamma,\tau}^+ y_\tau \mid R_\tau p, \ \sigma^2 \prod_{k \in \gamma \setminus \tau} n_k \cdot D_\gamma D_\gamma^\top) + \text{const.}  \\
        &= \underbrace{\sum_{\tau \in \S} \sum_{i=1}^{k_\tau} - \log \N(z_{\tau,i} \mid R_\tau p, \ \sigma^2_{\tau,i} D_\gamma D_\gamma^\top)}_{\L_z(p)} + \text{ const.}
\end{align*}
where in the final line we rearranged  terms using the notation of the theorem statement.

Therefore, $\SE(p)$, $\L_y(p)$ and $\L_z(p)$ all have the same minimizers.

Furthermore, $\L_z(p)$ decomposes over residual measurements as $\L_z(p) = \sum_{\tau \in \S} L_\tau(R_\tau p)$ where $L_\tau(\alpha_\tau) = \sum_{i=1}^{k_\tau} - \log \N(z_{\tau,i} \mid \alpha_\tau, \ \sigma^2_{\tau,i} D_\gamma D_\gamma^\top)$.
Therefore, Theorem~\ref{th:rem_correctness} allows us to minimize each term separately. 
Algorithm~\ref{alg:grem-mle} finds $\hat\alpha_\tau$ to minimize $L_\tau(\alpha_\tau)$ for each $\tau \in \S$ using inverse variance weighting.
Then, by Theorem~\ref{th:rem_correctness}, the vector $R_\S^+ \hat \alpha$ is a minimizer of $\L_z(p)$, and therefore also a minimizer of $\SE(p)$.

It remains to show that $R_\S^+ \hat \alpha \in \row(M_\mathcal{Q})$. This is true because $R_\S^+ \hat \alpha \in \col(R_\S^+) = \row(R_\S) \subseteq \row(M_\Q)$.\footnote{In fact $\row(R_\S) = \row(M_\Q)$ but we only need the inclusion.} 
The final inclusion is true by Lemma~\ref{lem:decomp-identity}, since for each $\tau \in \S$ we have $R_\tau = A_{\gamma,\tau}^+ M_\gamma$ for some $\gamma \in \Q$.
\end{proof}


Let us now discuss a generalization of Theorem~\ref{th:pinv-grem} to the case where noise scales vary across marginal measurements.

\begin{thm}
    \label{th:pinv-grem-general}
    Let $M_{\mathcal{Q}} = (M_{\gamma})_{\gamma \in \mathcal{Q}}$ be the query matrix for a multiset $\mathcal{Q}$ of marginals and let $y = (y_{\gamma})_{\gamma \in \mathcal{Q}}$ be corresponding noisy marginal measurements with $y_{\gamma} = M_{\gamma} p + \mathcal{N}(0, \sigma^2 \mathcal{I})$.     
    Define the scaled query matrix for $\mathcal{Q}$ as $V_\mathcal{Q} = (V_{\gamma})_{\gamma \in \mathcal{Q}}$ where $V_{\gamma} = \frac{1}{\sigma_{\gamma}} M_{\gamma}$ and the scaled marginal measurements as $v = \big( v_{\gamma} \big)_{\gamma \in \mathcal{Q}}$ where $v_{\gamma} = \frac{1}{\sigma_{\gamma}} y_{\gamma}$.
    Let $\mathcal S = \{\tau \subseteq \gamma: \gamma \in \mathcal{Q}\}$ and for each $\tau \in \mathcal S$ let $\gamma_{\tau, i}$ be the $i$th marginal in $\mathcal{Q}$ containing $\tau$.
    Let $z_{\tau, i} = A_{\gamma_{\tau,i}, \tau}^+ y_{\gamma_{\tau,i}}$ be the residual measurement obtained from $\gamma_{\tau,i}$ and let $\Sigma_{\tau,i} = \sigma^2_{\tau, i} D_\tau D_\tau^\top$ be its covariance where $\sigma^2_{\tau,i} = \sigma^2 \prod_{k \in \gamma_{\tau,i} \setminus \tau} n_k$.
    Then, given any workload of marginal queries $\W$, for each $\gamma \in \mathcal W$, the marginal reconstruction $\hat \mu_\gamma$ obtained from Algorithm~\ref{alg:grem-mle} on these residual measurements is equal to $M_\gamma V_\mathcal{Q}^+ v$.
\end{thm}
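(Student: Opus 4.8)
The plan is to reduce Theorem~\ref{th:pinv-grem-general} to Theorem~\ref{th:pinv-grem} by a rescaling trick, mirroring the standard device of folding heteroscedastic noise into the design matrix. The key observation is that if $y_\gamma = M_\gamma p + \mathcal N(0, \sigma_\gamma^2 I)$, then the scaled measurement $v_\gamma = \tfrac{1}{\sigma_\gamma} y_\gamma$ satisfies $v_\gamma = V_\gamma p + \mathcal N(0, I)$ with $V_\gamma = \tfrac{1}{\sigma_\gamma} M_\gamma$. So the scaled measurements $v = (v_\gamma)_{\gamma \in \mathcal Q}$ are exactly noisy answers to the query matrix $V_{\mathcal Q}$ under \emph{isotropic} unit-variance Gaussian noise. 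The quantity $V_{\mathcal Q}^+ v$ is therefore the unique minimum-row-norm minimizer of $\|V_{\mathcal Q} p - v\|_2^2$, which in turn equals the maximum-likelihood estimate of $p$ from the measurements $y$ with their true (heteroscedastic) covariances, since $\|V_{\mathcal Q} p - v\|_2^2 = \sum_{\gamma} \tfrac{1}{\sigma_\gamma^2}\|M_\gamma p - y_\gamma\|_2^2 = -2\sum_\gamma \log \mathcal N(y_\gamma \mid M_\gamma p, \sigma_\gamma^2 I) + \text{const}$.

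Next I would re-express everything in residual coordinates. Since $V_\gamma = \tfrac{1}{\sigma_\gamma} M_\gamma$ is just a scaled marginal workload, $V_\gamma$ inherits the same Kronecker and orthogonality structure as $M_\gamma$; in particular $\row(V_\gamma) = \row(M_\gamma)$, so $\row(V_{\mathcal Q}) = \row(M_{\mathcal Q}) = \row(R_{\mathcal S})$ (using Lemma~\ref{lem:decomp-identity} for the last equality, exactly as in the proof of Theorem~\ref{th:pinv-grem}). Then I would apply Theorem~\ref{th:marginals2residuals} to each scaled marginal measurement $v_\gamma$: decomposing $v_\gamma$ into residuals $A_{\gamma,\tau}^+ v_\gamma = \tfrac{1}{\sigma_\gamma} z_{\tau,i}$ yields independent Gaussian residual measurements with covariance $\tfrac{\sigma^2}{\sigma_\gamma^2}\prod_{k\in\gamma\setminus\tau} n_k \cdot D_\tau D_\tau^\top$, and the change-of-variables identity \eqref{eq:likelihood} shows the negative log-likelihood $\mathcal L_v(p)$ of the scaled marginals equals (up to a $p$-independent constant) the negative log-likelihood $\mathcal L_z(p) = \sum_{\tau\in\mathcal S} L_\tau(R_\tau p)$ of the residual measurements, where $L_\tau$ uses covariances $\Sigma_{\tau,i}$ scaled by $\sigma_\gamma^{-2}$. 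Crucially, scaling $\Sigma_{\tau,i}$ by a constant does not change the $K_\tau$-proportional structure, so GReM-MLE's inverse-variance weighting in Alg.~\ref{alg:grem-mle} produces the same $\hat\alpha_\tau$ — it is the minimizer of $L_\tau$ regardless of the overall scale.

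Finally I would close the argument exactly as in Theorem~\ref{th:pinv-grem}: by Theorem~\ref{th:rem_correctness}, the vector $\hat p = R_{\mathcal S}^+ \hat\alpha$ is a global minimizer of $\mathcal L_z$, hence of $\mathcal L_v$, hence of $\|V_{\mathcal Q} p - v\|_2^2$; and $\hat p \in \col(R_{\mathcal S}^+) = \row(R_{\mathcal S}) \subseteq \row(V_{\mathcal Q})$. Uniqueness of the pseudoinverse solution then gives $\hat p = V_{\mathcal Q}^+ v$, and Proposition~\ref{prop:reconstruct} gives $\hat\mu_\gamma = M_\gamma \hat p = M_\gamma V_{\mathcal Q}^+ v$ for each $\gamma \in \mathcal W$. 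I expect the main obstacle to be purely bookkeeping: carefully tracking which $\sigma_\gamma$ attaches to which residual measurement $z_{\tau,i}$ (the theorem statement keeps $z_{\tau,i} = A^+_{\gamma_{\tau,i},\tau} y_{\gamma_{\tau,i}}$ \emph{unscaled} while scaling the workload, so one must check the per-$\tau$ weights GReM-MLE computes from $\Sigma_{\tau,i}$ correctly recover the inverse-variance weights appropriate to the scaled problem) and confirming that the constant terms dropped at each step are genuinely independent of $p$. No new conceptual ingredient beyond Theorems~\ref{th:rem_correctness}, \ref{th:marginals2residuals}, \ref{th:pinv-grem} and Proposition~\ref{prop:reconstruct} is needed.
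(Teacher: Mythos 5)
Your proposal is correct and takes essentially the same route as the paper: the paper reduces the heteroscedastic case to the observation that $V_{\mathcal{Q}}^+ v$ minimizes the weighted least-squares objective $\sum_{\gamma} \sigma_\gamma^{-2}\lVert y_\gamma - M_\gamma p\rVert_2^2 = \lVert v - V_{\mathcal{Q}} p\rVert_2^2$ (i.e., is the MLE of $p$ given $y$), and then reuses the machinery of Theorem~\ref{th:pinv-grem} --- residual decomposition via Theorem~\ref{th:marginals2residuals}, separability via Theorem~\ref{th:rem_correctness}, and the row-space containment --- exactly as you outline. If anything, your write-up is more explicit than the paper's (which states only the MLE lemma and asserts the rest follows), and you correctly flag the one bookkeeping point that matters: the per-$\tau$ covariances $\Sigma_{\tau,i}$ remain proportional to $D_\tau D_\tau^\top$ after rescaling, so GReM-MLE's inverse-variance weights are unchanged.
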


The result follows due to the following Lemma, which shows that $V_\mathcal{Q}^+ y$ is an MLE for $p$ given the noisy marginal measurements $y$. 

\begin{lem}
  Let $M_{\mathcal{Q}} = (M_{\gamma_j})_{j=1}^r$ be the query matrix for marginals $\mathcal{Q} = (\gamma_1, \ldots, \gamma_r)$, which may include duplicates, and let $y = (y_{\gamma_j})_{j=1}^r$ be corresponding noisy marginal measurements with $y_{\gamma_j} = M_{\gamma_j} p + \mathcal{N}(0, \sigma_{\gamma_j}^2 \mathcal{I})$. 
  Define the scaled query matrix as $V_Q = (V_{\gamma_j})_{j=1}^r$ where $V_{\gamma_j} = \frac{1}{\sigma_{\gamma_j}} M_{\gamma_j}$ and the scaled marginal measurements as $v = \big( v_{\gamma_j} \big)_{j=1}^r$ where $v_{\gamma_j} = \frac{1}{\sigma_{\gamma_j}} y_{\gamma_j}$. 
  Then $V_Q^+ v$ is a MLE of $p$ with respect to noisy measurements $y$.
\end{lem}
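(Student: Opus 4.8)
The plan is to reduce the maximum likelihood problem to an ordinary (unweighted) least-squares problem by rescaling each block, and then invoke the standard characterization of the Moore--Penrose pseudoinverse as producing a least-squares solution of minimal norm.

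First I would write out the log-likelihood of the observations. Since the measurements $y_{\gamma_j}$ are independent across $j$ with $y_{\gamma_j} \sim \N(M_{\gamma_j}p,\ \sigma_{\gamma_j}^2 I)$, the log-likelihood is
$$
\ell(p) \;=\; \sum_{j=1}^r \log \N\!\left(y_{\gamma_j} \mid M_{\gamma_j}p,\ \sigma_{\gamma_j}^2 I\right) \;=\; -\frac{1}{2}\sum_{j=1}^r \frac{1}{\sigma_{\gamma_j}^2}\norm{M_{\gamma_j}p - y_{\gamma_j}}_2^2 + c,
$$
where $c$ collects all terms that do not depend on $p$ (the Gaussian normalizing constants). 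The key algebraic observation is that $\tfrac{1}{\sigma_{\gamma_j}^2}\norm{M_{\gamma_j}p - y_{\gamma_j}}_2^2 = \norm{\tfrac{1}{\sigma_{\gamma_j}}M_{\gamma_j}p - \tfrac{1}{\sigma_{\gamma_j}}y_{\gamma_j}}_2^2 = \norm{V_{\gamma_j}p - v_{\gamma_j}}_2^2$, so that summing over $j$ and stacking blocks gives $\sum_j \norm{V_{\gamma_j}p - v_{\gamma_j}}_2^2 = \norm{V_Q p - v}_2^2$. Hence $\ell(p) = -\tfrac12\norm{V_Q p - v}_2^2 + c$, and maximizing $\ell$ over $p \in \R^n$ is equivalent to minimizing the residual sum of squares $\norm{V_Q p - v}_2^2$.

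Next I would recall the standard fact about the pseudoinverse: for any matrix $V$ and vector $v$, the vector $V^+ v$ is a minimizer of $\norm{Vp - v}_2^2$ (in fact the unique minimizer of minimal Euclidean norm). Applying this with $V = V_Q$ shows that $V_Q^+ v$ globally minimizes $\norm{V_Q p - v}_2^2$, hence globally maximizes $\ell$, i.e.\ is a maximum likelihood estimate of $p$ given $y$. This completes the argument.

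There is essentially no hard step here; the only subtleties worth flagging are that when the noise scales $\sigma_{\gamma_j}$ differ, the likelihood is a \emph{weighted} least-squares objective, so one must pass to the rescaled matrix $V_Q$ (not $M_Q$) before the pseudoinverse yields an MLE, and that since $V_Q$ need not have full column rank the MLE is generally not unique---which is why the statement asserts that $V_Q^+ v$ is \emph{a} (namely the minimum-norm) MLE rather than the unique one.
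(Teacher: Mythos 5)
Your proof is correct and follows essentially the same route as the paper's: rewrite the log-likelihood as the weighted least-squares objective, absorb the weights into the scaled blocks $V_{\gamma_j}$ and $v_{\gamma_j}$ to obtain $\norm{V_Q p - v}_2^2$, and invoke the least-squares property of the pseudoinverse. Your closing remarks on why the rescaling is necessary and on non-uniqueness of the MLE are accurate additions but not part of the paper's argument.
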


\begin{proof}
  The log-likelihood of data vector $p$ under noisy marginal measurement $y_{\gamma_j}$ can be written as
  \begin{equation*}
    \mathcal{L}_{y_{\gamma_j}}(p) = -\frac{1}{2\sigma_{\gamma_j}^2} \norm{y_{\gamma_j} - M_{\gamma_j} p}_2^2 + c_{\gamma_j}
  \end{equation*}
  where $c_{\gamma_j}$ is a constant. 
  Since the noisy marginal measurements are independent, the log-likelihood of data vector $p$ under noisy marginal measurements $y$ is given by
  \begin{equation*}
    \mathcal{L}_y(p) = -\frac{1}{2} \sum_{j=1}^r \frac{1}{\sigma_{\gamma_j}^2} \norm{y_{\gamma_j} - M_{\gamma_j} p}_2^2 + c
  \end{equation*}
  where $c$ is a constant.
  The vector $\hat p$ is an MLE of $p$ under noisy marginal measurements $y$ if and only if $\hat p$ minimizes the loss function 
  \begin{align*}
    L_y(p) 
    &= \sum_{j=1}^r \frac{1}{\sigma_{\gamma_j}^2} \norm{y_{\gamma_j} - M_{\gamma_j} p}_2^2 \\
    &= \sum_{j=1}^r \norm{ \bigg(\frac{1}{\sigma_{\gamma_j}}\bigg) y_{\gamma_j} - \bigg(\frac{1}{\sigma_{\gamma_j}}\bigg) M_{\gamma_j} p}_2^2 \\
    &= \sum_{i=1}^r \norm{v_{\gamma_j} - V_{\gamma_j} p}_2^2 \\
    &= \norm{v - V_Q p}_2^2.
  \end{align*}
  Since $V_Q^+ v$ minimizes $L_y(p)$, it is an MLE of $p$ under noisy marginal measurements $y$.
\end{proof}

\section{\blue{Computational Complexity}} \label{s:complexity-proofs}

\blue{In this section, we analyze the computational complexity of applications of ReM under Gaussian noise. We state and prove the results discussed in Section~\ref{s:complexity}. Let us first prove two useful lemmas regarding the time complexity of Alg~\ref{alg:shuffle_alg} for multiplying the Kronecker matrix $A = \bigotimes_{i=1}^\ell A_i$ by a vector $x$. Recall that $A_i$ has size $a_i \times b_i$ and $A$ has size $a \times b$ with $a = \prod_{i=1}^\ell a_i$ and $b = \prod_{i=1}^\ell b_i$.}

\begin{lem} \label{lem:shuffle_cases}
    At iteration $i$, Alg.~\ref{alg:shuffle_alg} has the following time complexity:
    \begin{enumerate}[label=(\alph*)]
        \item if $A_i$ is an arbitrary matrix, iteration $i$ takes $\mathcal{O}\big(\prod_{j = 1}^{i} a_j \prod_{h = i}^{\ell} b_h \big)$ time.
        \item if $A_i = D_{(k)}^+$, then iteration $i$ takes $\mathcal{O}\big(\prod_{j = 1}^{i - 1} a_j \prod_{h = i}^{\ell} b_h \big)$ time, where $b_i = n_k - 1$.
        \item \blue{if $A_i = D_{(k)}$, then iteration $i$ takes $\mathcal{O}\big(\prod_{j = 1}^{i} a_j \prod_{h = i + 1}^{\ell} b_h \big)$ time, where $a_i = n_k - 1$.}
    \end{enumerate}
\end{lem}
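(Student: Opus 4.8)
The plan is to analyze Algorithm~\ref{alg:shuffle_alg} iteration by iteration, isolating the single iteration $i$ and counting the flops in the matrix product $A_i Z$. The key observation is that at iteration $i$, the vector $x_i$ has length $r = \prod_{j=1}^{i-1} a_j \prod_{h=i}^\ell b_h$ (the first $i-1$ base matrices have already been applied, turning each $b_j$ into $a_j$, while the remaining ones are untouched). The reshape makes $Z$ a $b_i \times (r/b_i)$ matrix, so forming $A_i Z$ is a product of an $a_i \times b_i$ matrix with a $b_i \times (r/b_i)$ matrix.

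For part (a), a generic $a_i \times b_i$ times $b_i \times (r/b_i)$ product costs $\mathcal{O}(a_i b_i \cdot r/b_i) = \mathcal{O}(a_i r)$ time, which upon substituting $r = \prod_{j=1}^{i-1} a_j \prod_{h=i}^\ell b_h$ gives exactly $\mathcal{O}\big(\prod_{j=1}^{i} a_j \prod_{h=i}^\ell b_h\big)$. For parts (b) and (c), the point is that $D_{(k)}$ and $D_{(k)}^+$ are structured, so multiplication by them is cheaper than the generic $\mathcal{O}(a_i b_i)$ per output column. For $D_{(k)}$, each of its $n_k-1$ rows has only two nonzeros, so $D_{(k)} Z$ costs $\mathcal{O}((n_k-1) \cdot r/b_i)$ with $b_i = n_k$ and $a_i = n_k - 1$; since $r/b_i = \prod_{j=1}^{i-1} a_j \prod_{h=i+1}^\ell b_h$ (note $b_i = n_k$ drops out of the product), multiplying by $a_i = n_k-1$ gives $\mathcal{O}\big(\prod_{j=1}^{i} a_j \prod_{h=i+1}^\ell b_h\big)$. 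For $D_{(k)}^+$, I would use the closed form $D_{(k)}^+ = (\nicefrac{1}{n_k})(1_k u_k^\top - n_k C_k)$ from Appendix~\ref{s:proofs-MR}: the rank-one part $1_k u_k^\top Z$ is computed by first forming $u_k^\top Z$ (a row vector, $\mathcal{O}(b_i \cdot r/b_i) = \mathcal{O}(r)$) and then broadcasting, and the lower-triangular-ones part $C_k Z$ is computed by a running prefix sum down each column, also $\mathcal{O}(n_k \cdot r/b_i)$; here $b_i = n_k - 1$ and the output has $n_k$ rows, so with $r = \prod_{j=1}^{i-1}a_j \prod_{h=i}^\ell b_h$ the cost $\mathcal{O}(n_k \cdot r/(n_k-1)) = \mathcal{O}(r)$, i.e. $\mathcal{O}\big(\prod_{j=1}^{i-1} a_j \prod_{h=i}^\ell b_h\big)$ as claimed (the extra row of output is absorbed since $n_k = \Theta(n_k - 1)$).

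The main obstacle I anticipate is bookkeeping the index ranges correctly — tracking that after $i-1$ iterations the length of $x_i$ is $\prod_{j<i} a_j \prod_{h \geq i} b_h$, and being careful about which factor ($b_i$ versus $a_i$) gets cancelled or replaced in each case — rather than anything mathematically deep. A secondary subtlety is justifying the structured-multiplication counts for $D_{(k)}$ and $D_{(k)}^+$: for $D_{(k)}$ the two-nonzeros-per-row argument is immediate, but for $D_{(k)}^+$ one must explicitly invoke the rank-one-plus-triangular decomposition and note that both pieces admit linear-time application to each column (the triangular piece via cumulative sums). I would present the length-tracking as a short invariant established once at the start, then handle the three cases in sequence, each as a one-line flop count with the substitution made explicit.
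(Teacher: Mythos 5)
Your proposal is correct and follows essentially the same route as the paper's proof: count the flops of $A_i Z$ at iteration $i$ with $Z$ of size $b_i \times \big(\prod_{j=1}^{i-1} a_j \prod_{h=i+1}^{\ell} b_h\big)$, and exploit the two-nonzeros-per-row structure of $D_{(k)}$ and the decomposition $D_{(k)}^+ = (\nicefrac{1}{n_k})(1_k u_k^\top - n_k C_k)$ (dot product plus cumulative sum) to get $\mathcal{O}(n_k)$-time application per column. The bookkeeping of the product ranges matches the paper's exactly.
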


\begin{proof}
    At iteration $i$ of Alg.~\ref{alg:shuffle_alg}, $A_i$ is multiplied by a matrix $Z$ with size $b_i \times \big(\prod_{j = 1}^{i-1} a_j \prod_{h = i + 1}^{\ell} b_h \big)$. 
    Then each row in $A_i$ requires $b_i \big(\prod_{j = 1}^{i-1} a_j \prod_{h = i + 1}^{\ell} b_h \big) = \big(\prod_{j = 1}^{i - 1} a_j \prod_{h = i}^{\ell} b_h \big)$ scalar multiplications. 
    Since $A_i$ has $a_i$ rows, this yields $\big(\prod_{j = 1}^{i} a_j \prod_{h = i}^{\ell} b_h \big)$ multiplications over all rows. This proves $(a)$.

    Suppose $A_i = D_{(k)}^+$. 
    Recall that $D_{(k)}^+ = (\nicefrac{1}{n_k})(1_{k} u_{k}^\top - n_k C_{k}) $.
    We claim that computing $D_{(k)}^+ v$ for any vector $v$ takes $\mathcal{O}(n_k)$ time.
    $C_k v$ is a cumulative sum of the elements of $v$ and $u_k^\top v$ is a dot product, both of which take $\mathcal{O}(n_k)$ time to compute. The remaining steps cost $2 (n_k - 1)$ multiplications and $n_k - 1$ sums.
    Thus each column of $Z$ can be multiplied by $A_i$ in $\mathcal{O}(n_k)$ time.
    Since $Z$ has $\big(\prod_{j = 1}^{i-1} a_j \prod_{h = i + 1}^{\ell} b_h \big)$ columns, computing $A_i Z$ takes $\mathcal{O}\big(b_i\big(\prod_{j = 1}^{i-1} a_j \prod_{k = i + 1}^{\ell} b_k \big)\big) =\mathcal{O}\big(\prod_{j = 1}^{i-1} a_j \prod_{k = i}^{\ell} b_k \big)$ time, where $b_i = n_k - 1$. This proves $(b)$.

    Suppose $A_i = D_{(k)}$. 
    For vector $v$, $D_{(k)}v$ is the difference of consecutive elements of $v$, which takes $\mathcal{O}(n_k)$ time to compute. Thus each column of $Z$ can be multiplied by $A_i$ with $n_k - 1$ operations.
    Since $Z$ has $\big(\prod_{j = 1}^{i-1} a_j \prod_{h = i + 1}^{\ell} b_h \big)$ columns, computing $A_i Z$ takes $\mathcal{O}\big(a_i\big(\prod_{j = 1}^{i-1} a_j \prod_{k = i + 1}^{\ell} b_k \big)\big) = \mathcal{O}\big(\prod_{j = 1}^{i} a_j \prod_{k = i + 1}^{\ell} b_k \big)$ time, where $a_i = n_k - 1$. This proves $(c)$.
\end{proof}

\begin{lem} \label{lem:complexity-inequality}
    The following hold for Alg.~\ref{alg:shuffle_alg}:
    \begin{enumerate}[label=(\alph*)]
    \item If $a_i \geq b_i$ and either $A_i = D_{(k)}^+$ or $b_i = 1$ for $i = 1, \ldots, \ell$, then Alg.~\ref{alg:shuffle_alg} takes $\mathcal{O}\big( a \cdot \ell)$ time.
    \item If $a_i \leq b_i$ and either $A_i = D_{(k)}$ or $a_i = 1$ for $i = 1, \ldots, \ell$, then Alg.~\ref{alg:shuffle_alg} takes $\mathcal{O}\big( b \cdot \ell)$ time.
    \end{enumerate}
\end{lem}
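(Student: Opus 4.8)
The plan is to prove both parts uniformly: show that under the stated hypotheses every single iteration of Algorithm~\ref{alg:shuffle_alg} costs $\mathcal{O}(a)$ in part (a) (respectively $\mathcal{O}(b)$ in part (b)), and then sum this per-iteration bound over the $\ell$ iterations of the outer loop. The per-iteration bounds come directly from Lemma~\ref{lem:shuffle_cases}; the only work is choosing the right case of that lemma for each $i$ and collapsing the resulting product of $a_j$'s and $b_h$'s to $a$ or $b$ using the monotonicity assumption ($b_i \le a_i$ in part (a), $a_i \le b_i$ in part (b)).

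For part (a), fix an iteration $i$. The hypothesis guarantees that either $b_i = 1$ or $A_i = D_{(k)}^+$. If $b_i = 1$, apply Lemma~\ref{lem:shuffle_cases}(a): the cost is $\mathcal{O}\big(\prod_{j=1}^{i} a_j \prod_{h=i}^{\ell} b_h\big) = \mathcal{O}\big(\prod_{j=1}^{i} a_j \prod_{h=i+1}^{\ell} b_h\big)$ since $b_i = 1$, and then $b_h \le a_h$ for $h = i+1,\dots,\ell$ gives the bound $\mathcal{O}\big(\prod_{j=1}^{\ell} a_j\big) = \mathcal{O}(a)$. If instead $A_i = D_{(k)}^+$, apply Lemma~\ref{lem:shuffle_cases}(b): the cost is $\mathcal{O}\big(\prod_{j=1}^{i-1} a_j \prod_{h=i}^{\ell} b_h\big) \le \mathcal{O}\big(\prod_{j=1}^{i-1} a_j \prod_{h=i}^{\ell} a_h\big) = \mathcal{O}(a)$. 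Summing over $i = 1,\dots,\ell$ yields $\mathcal{O}(a\ell)$.

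Part (b) is the exact dual. For each $i$, the hypothesis gives $a_i = 1$ or $A_i = D_{(k)}$. When $a_i = 1$, Lemma~\ref{lem:shuffle_cases}(a) gives cost $\mathcal{O}\big(\prod_{j=1}^{i} a_j \prod_{h=i}^{\ell} b_h\big) = \mathcal{O}\big(\prod_{j=1}^{i-1} a_j \prod_{h=i}^{\ell} b_h\big)$, and $a_j \le b_j$ bounds this by $\mathcal{O}\big(\prod_{j=1}^{\ell} b_j\big) = \mathcal{O}(b)$. When $A_i = D_{(k)}$, Lemma~\ref{lem:shuffle_cases}(c) gives cost $\mathcal{O}\big(\prod_{j=1}^{i} a_j \prod_{h=i+1}^{\ell} b_h\big) \le \mathcal{O}\big(\prod_{j=1}^{i} b_j \prod_{h=i+1}^{\ell} b_h\big) = \mathcal{O}(b)$. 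Summing over the $\ell$ iterations gives $\mathcal{O}(b\ell)$.

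I do not expect a serious obstacle here; the argument is essentially index bookkeeping. The one point that must be handled carefully — and the reason the hypotheses single out $D_{(k)}^+$ and $D_{(k)}$ rather than allowing arbitrary square-ish blocks — is that for a $D_{(k)}^+$ block both dimensions $a_i = n_k$ and $b_i = n_k - 1$ exceed $1$, so the generic bound of Lemma~\ref{lem:shuffle_cases}(a) would retain an extra factor of $\Theta(n_k)$ and fail to telescope to $\mathcal{O}(a)$; one must invoke the sharper structure-specific bounds in parts (b) and (c) of Lemma~\ref{lem:shuffle_cases}, which is exactly what exploits the $\mathcal{O}(n_k)$-time evaluation of $D_{(k)}^+ v$ and $D_{(k)} v$. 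Everything else follows from the two monotonicity inequalities supplied by the hypotheses.
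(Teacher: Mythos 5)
Your proof is correct and follows essentially the same route as the paper's: both invoke the appropriate case of Lemma~\ref{lem:shuffle_cases} for each iteration, bound the resulting product by $\mathcal{O}(a)$ (resp.\ $\mathcal{O}(b)$) using the dimension inequality, and sum over the $\ell$ iterations. Your closing remark about why the structure-specific bounds for $D_{(k)}^+$ and $D_{(k)}$ are needed is a useful clarification but does not change the argument.
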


\begin{proof} 
    Applying Lemma~\ref{lem:shuffle_cases}, if $A_i = D_{(k)}^+$ then iteration $i$ takes $\mathcal{O}\big(\prod_{j = 1}^{i-1} a_j \prod_{h = i}^{\ell} b_h \big)$ time, and, if $b_i = 1$, then iteration $i$ takes $\mathcal{O}\big(\prod_{j = 1}^{i} a_j \prod_{h = i+1}^{\ell} b_h \big)$ time. 
    We can bound these terms by $\mathcal{O}\big(\prod_{j = 1}^{\ell} a_j \big) = \mathcal{O}(a)$. \ds{I made this $\prod_{j=1}^\ell a_j$ instead of $\prod_{j=1}^i a_j$. Correct?}
    Summing over all $\ell$ iterations of Alg.~\ref{alg:shuffle_alg} yields $ \mathcal{O}(\sum_{i = 1}^\ell a) = \mathcal{O}(a \cdot \ell)$.
    This proves $(a)$. 

    If $A_i = D_{(k)} $, then iteration $i$ is $\mathcal{O}\big(\prod_{j = 1}^{i} a_j \prod_{h = i+1}^{\ell} b_h \big)$ by Lemma~\ref{lem:shuffle_cases}~$(c)$.
    If $a_i = 1$, then iteration $i$ is $\mathcal{O}\big(\prod_{j = 1}^{i-1} a_j \prod_{h = i}^{\ell} b_h \big)$ by Lemma~\ref{lem:shuffle_cases}~$(a)$.
    We can bound these terms by $\mathcal{O}\big(\prod_{h = 1}^{\ell} b_h \big) = \mathcal{O}(b)$. 
    Summing over all $\ell$ iterations of Alg.~\ref{alg:shuffle_alg} yields $ \mathcal{O}(\sum_{i = 1}^\ell b) = \mathcal{O}(b \cdot \ell)$.
    This proves $(b)$.     
\end{proof}

\begin{thm} \label{th:complexity}
    Let $\W$ be a workload of marginals. Then
    \begin{enumerate}[label=(\alph*)]
        \item Reconstructing an answer to marginal $M_\gamma$ for $\gamma \in \W$ takes $\mathcal{O}(| \gamma | n_\gamma 2^{| \gamma |})$ time.
        \item The time required for reconstructing an answer to marginal $M_\gamma$ for $\gamma \in \W$ is $o(n_\gamma^{1+\epsilon})$ for any $\epsilon > 0$ as $n_i \rightarrow \infty$ for some $i \in \gamma$.
        \item \GReM-MLE$(\W, \S, z)$ takes $\mathcal{O}(\sum_{\gamma \in W} | \gamma | n_\gamma 2^{| \gamma |})$ time.
        \item \blue{EMP$(\W, \mathcal{Q}, y)$ takes $\mathcal{O}(\sum_{\gamma \in W} | \gamma | n_\gamma 2^{| \gamma |})$ time.}
        \item \GReM-LNN$(\W, \S, z)$ takes $\mathcal{O}(\sum_{\gamma \in W} | \gamma | n_\gamma 2^{| \gamma |})$ time per round.
    \end{enumerate}
  \end{thm}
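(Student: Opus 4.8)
Every part reduces to one primitive: the cost of a Kronecker-structured matrix–vector product for the matrices $A_{\gamma,\tau}$ (Proposition~\ref{prop:reconstruct}), $A_{\gamma,\tau}^+$ (Lemma~\ref{lem:decomp-identity}), and, for \GReM-LNN, their transposes together with the tridiagonal blocks $D_{(k)}D_{(k)}^\top$. The only tools needed are Lemmas~\ref{lem:shuffle_cases} and~\ref{lem:complexity-inequality}, which analyze Algorithm~\ref{alg:shuffle_alg}. I would prove (a) first and obtain (b)–(e) from it by bookkeeping.

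\emph{Parts (a) and (b).} Write $A_{\gamma,\tau} = \bigotimes_{k=1}^d A_k$ with $A_k = D_{(k)}^+$ (size $n_k \times (n_k-1)$) for $k \in \tau$, $A_k = (1/n_k)1_k$ (size $n_k \times 1$) for $k \in \gamma \setminus \tau$, and $A_k = 1$ (a scalar, hence trivial) for $k \notin \gamma$. Removing the $d - |\gamma|$ trivial $1\times 1$ factors — which do not change the product — leaves a Kronecker product of $\ell = |\gamma|$ factors on which Algorithm~\ref{alg:shuffle_alg} is run; for each of these $a_k \ge b_k$, each factor is either $D_{(k)}^+$ or has $b_k = 1$, and the product of the row dimensions is $\prod_{k \in \gamma} n_k = n_\gamma$. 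Lemma~\ref{lem:complexity-inequality}(a) then gives that computing $A_{\gamma,\tau}\hat\alpha_\tau$ takes $\mathcal{O}(|\gamma| n_\gamma)$ time, recovering the bound asserted in Proposition~\ref{prop:reconstruct}. Since $\hat\mu_\gamma = \sum_{\tau \in \S:\, \tau \subseteq \gamma} A_{\gamma,\tau}\hat\alpha_\tau$ is a sum of at most $2^{|\gamma|}$ vectors of length $n_\gamma$, summing the per-term cost over the $\le 2^{|\gamma|}$ subsets $\tau \subseteq \gamma$ (the $\mathcal{O}(n_\gamma)$ vector additions are dominated) gives $\mathcal{O}(|\gamma| n_\gamma 2^{|\gamma|})$, proving (a). Part (b), which is Theorem~\ref{th:complexity-little-o}, is then immediate: by (a) the running time is $\mathcal{O}(n_\gamma)$ with hidden constant $|\gamma| 2^{|\gamma|}$ independent of the $n_i$, and $n_\gamma/n_\gamma^{1+\varepsilon} = n_\gamma^{-\varepsilon} \to 0$; since $n_\gamma = \prod_{k \in \gamma} n_k \ge n_i \to \infty$, we get $o(n_\gamma^{1+\varepsilon})$.

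\emph{Parts (c)–(e).} \GReM-MLE (Algorithm~\ref{alg:grem-mle}) has two steps. Step~2 is exactly the reconstruction of part~(a) for every $\gamma \in \W$, at cost $\mathcal{O}(\sum_{\gamma \in \W} |\gamma| n_\gamma 2^{|\gamma|})$. Step~1 forms each $\hat\alpha_\tau$ as an inverse-variance-weighted average of the $z_{\tau,i}$ (using the proportional-covariance assumption), in $\mathcal{O}(\sum_{\tau \in \S} k_\tau m_\tau)$ time; residuals $\tau$ not contained in any $\gamma \in \W$ do not affect the output and may be dropped, so $\S \subseteq \W^{\downarrow}$ and (treating the per-residual measurement count as $\mathcal{O}(1)$, or folding it into input size) $\sum_{\tau \in \S} m_\tau \le \sum_{\gamma \in \W}\sum_{\tau \subseteq \gamma} m_\tau = \sum_{\gamma \in \W} n_\gamma$ by the identity $\sum_{\tau \subseteq \gamma} m_\tau = n_\gamma$ (footnote to Theorem~\ref{th:marginals2residuals}); Step~2 dominates, giving (c). For EMP (Algorithm~\ref{alg:efficient-pseudoinverse}, part (d)), the extra work is the decomposition loop: by Lemma~\ref{lem:decomp-identity}, $A_{\gamma,\tau}^+ = \bigotimes_k A_k$ with $A_k = D_{(k)}$ for $k \in \tau$, $A_k = 1_k^\top$ for $k \in \gamma \setminus \tau$, and trivial otherwise, so after removing trivial factors there are $|\gamma|$ factors with $a_k \le b_k$, each $D_{(k)}$ or with $a_k = 1$, and column-dimension product $n_\gamma$; Lemma~\ref{lem:complexity-inequality}(b) gives $\mathcal{O}(|\gamma| n_\gamma)$ per measurement $z_{\tau,i} = A_{\gamma,\tau}^+ y_\gamma$, with the scalar $\sigma^2_{\tau,i}$ costing $\mathcal{O}(|\gamma|)$ more and $\Sigma_{\tau,i}$ never materialized. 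Summing over the $\le 2^{|\gamma|}$ subsets $\tau \subseteq \gamma$ and over $\gamma \in \mathcal{Q}$ bounds the decomposition by $\mathcal{O}(\sum_{\gamma \in \mathcal{Q}} |\gamma| n_\gamma 2^{|\gamma|})$; adding the \GReM-MLE call (c) gives EMP's cost $\mathcal{O}(\sum_{\gamma \in \mathcal{Q} \cup \W} |\gamma| n_\gamma 2^{|\gamma|})$, matching the stated bound under the convention — satisfied in all our applications — that measured marginals are sub-marginals of workload marginals. Finally, for \GReM-LNN (part (e)), one round of the dual-ascent routine (Algorithm~\ref{alg:dual_ascent}) performs a constant number of passes over the pairs $(\gamma,\tau)$ with $\gamma \in \W$, $\tau \subseteq \gamma$, each pass applying one Kronecker-structured map — $A_{\gamma,\tau}$, its transpose, or $(2^{|\tau|} D_\tau D_\tau^\top)^{-1}$ — plus $\mathcal{O}(n_\gamma)$ elementwise vector operations; each such map costs $\mathcal{O}(|\gamma| n_\gamma)$ by the same per-factor analysis, so one round is $\mathcal{O}(\sum_{\gamma \in \W} |\gamma| n_\gamma 2^{|\gamma|})$.

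\textbf{Main obstacle.} The crux is the bookkeeping in part~(a): matching the Kronecker factors of $A_{\gamma,\tau}$ (and, for (d), $A_{\gamma,\tau}^+$) to the hypotheses of Lemma~\ref{lem:complexity-inequality} and observing that only $|\gamma|$ factors are nontrivial. The step that genuinely goes beyond the excerpt is (e): Lemmas~\ref{lem:shuffle_cases}–\ref{lem:complexity-inequality} cover $D_{(k)}$ and $D_{(k)}^+$ but not the transposed factors $(D_{(k)}^+)^\top$ or the inverse tridiagonal blocks $(D_{(k)}D_{(k)}^\top)^{-1}$ that appear in the dual-ascent primal update, so one must separately check that $(D_{(k)}^+)^\top v$ is computable in $\mathcal{O}(n_k)$ time via a reverse cumulative sum and dot product, and that $(D_{(k)}D_{(k)}^\top)^{-1} v$ — a tridiagonal solve — is computable in $\mathcal{O}(n_k)$ time by the Thomas algorithm, then rerun the Lemma~\ref{lem:complexity-inequality} argument with these added cases; pinning down exactly which linear maps Algorithm~\ref{alg:dual_ascent} applies per round (Appendix~\ref{s:grem-lnn_implementation}) is where most of the remaining care lies.
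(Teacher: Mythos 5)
Your proof follows essentially the same route as the paper's: bound each Kronecker matrix--vector product at $\mathcal{O}(|\gamma| n_\gamma)$ via Lemmas~\ref{lem:shuffle_cases} and~\ref{lem:complexity-inequality}, count the at most $2^{|\gamma|}$ terms per marginal for (a)--(b), and do the same bookkeeping for (c)--(e), including the assumption that measured marginals lie in $\W^{\downarrow}$ for (d). Your extra care in (e) is well placed (the stated lemmas do not literally cover the transposed factors $(D_{(k)}^+)^\top$, which a reverse cumulative sum plus dot product handles in $\mathcal{O}(n_k)$), but no tridiagonal solve is actually required: since $D_{(k)}^+ = D_{(k)}^\top (D_{(k)} D_{(k)}^\top)^{-1}$, one gets $(A_{\tau\tau}^\top A_{\tau\tau})^{-1} = D_\tau D_\tau^\top$, so the primal update only ever multiplies by banded Kronecker factors.
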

  
\begin{proof}
  \blue{Let us first consider the running time of $A_{\gamma, \tau} z_\tau$ for some $\tau \subseteq \gamma$.
  Recall that $A_{\gamma, \tau}$ can be written as follows:
  \begin{equation*}
      A_{\gamma,\tau} := 
      \bigotimes_{k \in \gamma} \begin{cases}D_{(k)}^+ & 
      k \in \tau \\ 
      \big(\nicefrac{1}{n_k}\big) 1_k & k \in \gamma\setminus \tau 
      \end{cases}
  \end{equation*}
  Since $A_{\gamma, \tau}$ satisfies the conditions of Lemma~\ref{lem:complexity-inequality} and has $n_\gamma$ rows, computing $A_{\gamma, \tau} z_\tau$ takes $\mathcal{O}(| \gamma | n_\gamma)$ time.
  Recall from Proposition~\ref{prop:reconstruct} that reconstructing an answer to marginal $M_\gamma$ is given by $\sum_{\tau \in \S, \tau \subseteq \gamma} A_{\gamma, \tau} y_\tau$.}  
  The number of terms in the summation is at most $2^{| \gamma |}$, so the total running time of reconstructing an answer to $M_\gamma$ is $\mathcal{O}(| \gamma | n_\gamma 2^{| \gamma |})$. This proves $(a)$.

  For $(b)$, let $\epsilon > 0$ and consider the following quotient:
  \begin{align*}
    \frac{\abs \gamma n_\gamma 2^{\abs \gamma}}{\abs \gamma n_\gamma^{1+\epsilon}} 
    = \frac{2^{\abs \gamma}}{n_\gamma^{\epsilon}} 
    = \frac{2^{\abs \gamma}}{\prod_{i \in \gamma}n_i^\epsilon}.
  \end{align*}
  Taking the limit as $n_i \rightarrow \infty$, the quotient tends to zero and we obtain the desired result.

  With \GReM-MLE, each residual query $R_\tau, \tau \in \S$ can have multiple measurements $y_{\tau, 1}, \ldots, y_{\tau, k_\tau}$ but with proportional covariances. 
  For each $\tau \in \S$, we combine the measurements using inverse variance weighting to obtain $\hat \alpha_\tau$.
  We then reconstruct the marginals $M_\gamma$ for $\gamma \in \W$ using the residual answers $\hat \alpha_\tau$ for $\tau \in \S$. 
  By $(a)$, the running time is $\mathcal{O}(\sum_{\gamma \in W} | \gamma | n_\gamma 2^{| \gamma |})$. This proves $(c)$.

  \blue{The efficient marginal pseudoinversion, given in Alg.~\ref{alg:efficient-pseudoinverse}, first decomposes marginals and then applies GReM-MLE.
  Let $\mathcal{Q}$ be the multiset of measured marginals and $\W$ be the workload of marginals to answer.
  Let $\W^\downarrow$ denote the downward closure of $\W$. 
  We assume that $\mathcal{Q}$ is a consists of elements of $\W^\downarrow$ and each $\gamma \in \W^\downarrow$ appears in $\mathcal{Q}$ at most $b$ times. 
  For each $\gamma \in \mathcal{Q}$, we decompose the marginal measurements into residual measurements by computing $A_{\gamma, \tau}^+ y_\gamma$ for each $\tau \subseteq \gamma$.
  By Lemma~\ref{lem:complexity-inequality}~$(b)$, computing $A_{\gamma, \tau}^{+} y_\gamma$ takes $\mathcal{O}(|\gamma| n_\gamma)$ time.
  Then the running time of decomposing the marginal measurements is $\mathcal{O}(\sum_{\gamma \in \mathcal{Q}} |\gamma| n_\gamma 2^{|\gamma|})$.
  From $(c)$, the running time of GReM-MLE is $\mathcal{O}(\sum_{\gamma \in \W} |\gamma| n_\gamma 2^{|\gamma|})$.
  Given that the running time of decomposition is at most a multiple of the running time of GReM-MLE, $\mathcal{O}(\sum_{\gamma \in \W} |\gamma| n_\gamma 2^{|\gamma|})$.
  This proves $(d)$.}

  Let us turn to the running time of GReM-LNN. 
  Let $\W^{\downarrow}$ be the downward closure of workload $\W$. 
  The dual ascent algorithm for GReM-LNN (Alg. \ref{alg:dual_ascent}) consists of three steps each round requiring matrix multiplications: computing $\hat \alpha_\tau$ for $\tau \in \S$, computing $\hat \alpha_{\tau'}$ for unmeasured $\tau' \in \W^{\downarrow} \setminus \S $, and reconstructing answers to marginals $M_\gamma$ for $\gamma \in \W$. 

  First consider the case where $\tau \in S$.
  Recall that in this case $\hat \alpha_\tau = \big( \sum_{i = 1}^{k_\tau} K_{\tau, i}^{-1} \big)^{-1} \big( \sum_{i = 1}^{k_\tau} K_{\tau, i}^{-1} y_{\tau, i} + \sum_{\gamma \supseteq \tau} A_{\gamma, \tau}^T \lambda_\gamma \big) $, where $K_{\tau, i} = \sigma_\tau^2 D_\tau D_\tau^T $.
  We can rewrite $\hat \alpha_\tau$ as follows: 
  \begin{equation*}
      \hat \alpha_\tau = \Bigg( \sum_{\gamma \supseteq \tau} \sigma_\tau^{-2} \Bigg)^{-1} \sum_{\gamma \supseteq \tau} \sigma_{\tau}^{-2} y_{\tau, i} + \Bigg( \sum_{\gamma \supseteq \tau} \sigma_{\tau}^{-2} \Bigg) D_\tau D_\tau^T \sum_{\gamma \supseteq \tau} A_{\gamma, \tau}^T \lambda_\gamma.
  \end{equation*}
  The left summand requires no matrix multiplications and does not depend on $\lambda$. 
  Then computing $A_{\gamma, \tau}^T \lambda_\gamma$ takes $\mathcal{O}\big(\abs \gamma n_\gamma )$ time. 
  The right summand is obtained by computing $A_{\gamma, \tau}^T \lambda_\gamma$ for each $\gamma \supseteq \tau$. 
  Then computing $\hat \alpha_\tau$ for $\tau \in \S$ takes $\mathcal{O}(\sum_{\gamma \supseteq \tau} \abs \gamma n_\gamma)$ time. 
  
  Now consider the case where $\tau \in \W^\downarrow \setminus S$. 
  Then $\hat \alpha = -(\nicefrac{1}{2})(A_{\tau, \tau}^T A_{\tau, \tau})^{-1} \sum_{\gamma \supseteq \tau} A_{\gamma, \tau}^T \lambda_\gamma $.
  As with the prior case, the desired term requires computing $A_{\gamma, \tau}^T \lambda_\gamma$ for each $\gamma \supseteq \tau$.
  Then computing $\hat \alpha_\tau$ for $\tau \in \W^\downarrow \setminus \S$ is $\mathcal{O}(\sum_{\gamma \supseteq \tau} \abs \gamma n_\gamma ) $. 

  Combing these results, computing $\hat \alpha_\tau$ for $\tau \in \W^\downarrow$ is $\mathcal{O}(\sum_{\tau \in \W^\downarrow} \sum_{\gamma \supseteq \tau} \abs \gamma n_\gamma ) $. Observe that for each $\gamma \in \W$, there are $2^{| \gamma |}$ terms in the summation. 
  By indexing the summation in terms of $\gamma$, we obtain that computing $\hat \alpha$ is $\mathcal{O}(\sum_{\gamma \in W} | \gamma | n_\gamma 2^{| \gamma |})$. 
  The remaining step of GReM-LNN is to reconstruct answers to marginals $M_\gamma$ for $\gamma \in \W$. 
  By $(a)$, the running time is $\mathcal{O}(\sum_{\gamma \in W} | \gamma | n_\gamma 2^{| \gamma |})$. This proves~$(e)$.

\end{proof}

\section{\GReM-LNN Implementation} \label{s:grem-lnn_implementation}

Recall that \GReM-LNN solves the following convex program:

\begin{equation}
  \min_{\alpha} \sum_{\tau \in \S} \sum_{i} (\alpha_\tau - \blue{z}_{\tau, i})^\top K_{\tau, i}^{-1} (\alpha_\tau - \blue{z}_{\tau, i})
  \quad
  \text{s.t.} \sum_{\tau \subseteq \gamma} A_{\gamma,\tau} \alpha_\tau \geq 0,\quad \forall \gamma \in \mathcal W
  \label{eq:grem-lnn}
\end{equation}

for $K_{\tau, i} = 2^{|\tau|} D_\tau D_\tau^T $.
Observe that the program in Eq.~\ref{eq:grem-lnn} only depends on unmeasured residuals in $\W^{\downarrow}$ through the local non-negativity constraint. 
To make this problem more tractable and the solution more stable, we introduce a regularization term to limit the contribution of unmeasured residuals to reconstructed marginals:

\begin{equation}    
    \begin{aligned}
      \min_{\alpha} &\sum_{\tau \in \S} \sum_{i} (\alpha_\tau - \blue{z}_{\tau, i})^\top K_{\tau, i}^{-1} (\alpha_\tau - \blue{z}_{\tau, i}) + \eta \sum_{\nu \in \W^{\downarrow} \setminus \S} \| A_{\nu \nu} \alpha_\nu \|_2^2 \\
      &\text{s.t.} \sum_{\tau \subseteq \gamma} A_{\gamma,\tau} \alpha_\tau \geq 0,\quad \forall \gamma \in \mathcal W
    \end{aligned}
    \label{eq:grem-lnn-real}
\end{equation}

Note that the introduction the regularization term in Eq.~(\ref{eq:grem-lnn-real}) is only relevant to the underdetermined case, since, otherwise, $\W^{\downarrow} \subseteq \S$. To solve the program in Eq.~(\ref{eq:grem-lnn-real}), we use an iterative dual ascent algorithm described in pseudocode in Alg.~\ref{alg:dual_ascent}. 

\begin{figure}
  \begin{algorithm}[H]
  \caption{\GReM-LNN Dual Ascent}
  \label{alg:dual_ascent}
  \begin{algorithmic}[1]
  \Require Marginal workload $\W$, residual workload $\S$, residual measurements $z$, rounds $T$, 
    step size $s$, Lagrangian initialization $\lambda$, regularization weight $\eta$ \smallskip
  \State Initialize $\lambda_\gamma = \lambda$ for $\gamma \in \W$
  \For{$t = 1, \dots, T$}
    \State Set $\alpha_\tau = \big(\sum_{i = 1}^{k_\tau} K_{\tau, i} \big)^{-1} (\sum_{i = 1}^{k_\tau} K_{\tau, i}^{-1} y_{\tau, i} - \sum_{\gamma \supseteq \tau} A_{\gamma \tau}^{\top} \lambda_\gamma )$ for $\tau \in \S$
    \State Set $\alpha_\tau = - \nicefrac{1}{2 \eta} \big(A_{\tau \tau}^{\top} A_{\tau \tau} \big)^{-1} (\sum_{\gamma \supseteq \tau} A_{\gamma \tau}^{\top} \lambda_\gamma)^{\top}$ for $\tau \in \W^{\downarrow} \setminus \S$
    \State Calculate $\mu_\gamma(\alpha) = \sum_{\tau \subseteq \gamma} A_{\gamma \tau} \alpha_\tau $ for $\gamma \in \W$
    \State Update $\lambda_\gamma = \min\{ \lambda_\gamma + s \mu_\gamma(\alpha), $0$ \}$ for $\gamma \in \W$
  \EndFor
  \end{algorithmic}
  \end{algorithm}
\end{figure}

Let us now show that Alg.~\ref{alg:dual_ascent} is correctly specified.
Let us denote the objective as $f(\alpha) = \sum_{\tau \in \mathcal{S}} \sum_{i = 1}^{k_\tau} (\alpha_\tau - \blue{z}_{\tau, i})^\top K_{\tau, i}^{-1} (\alpha_\tau - \blue{z}_{\tau, i}) + \eta \sum_{\nu \in \mathcal{W}^{\downarrow} \setminus \mathcal{S}} \| A_{\nu \nu} \alpha_\nu \|_2^2$ and the constraint as $\mu(\alpha) = (\mu_\gamma(\alpha))_{\gamma \in \mathcal{W}} = (\sum_{\tau \subseteq \gamma} A_{\gamma \tau} \alpha_\tau)_{\gamma \in \mathcal{W}} \geq 0$. Then the Lagrangian function is given by 

\begin{align*} 
	\mathcal{L}(\alpha, \lambda) &= f(\alpha) + \lambda^\top \mu(\alpha) \\ 
	&= \sum_{\tau \in \mathcal{S}} \sum_{i = 1}^{k_\tau} (\alpha_\tau - \blue{z}_{\tau, i})^\top K_{\tau, i}^{-1} (\alpha_\tau - \blue{z}_{\tau, i}) + \eta \sum_{\nu \in \mathcal{W}^{\downarrow} \setminus \mathcal{S}} \| A_{\nu \nu} \alpha_\nu \|_2^2 + \sum_{\gamma \in \mathcal{W}} \lambda_\gamma^\top \sum_{\tau \subseteq \gamma} A_{\gamma \tau} \alpha_\tau
\end{align*}

where $\lambda = (\lambda_\gamma)_{\gamma \in \mathcal{W}}$ is the dual variable or Lagrangian multiplier and is constrained such that $\lambda \leq 0$. The dual function is given by $g(\lambda) = \min_\alpha \mathcal{L}(\alpha, \lambda)$ and the dual problem is given by $\max_{\lambda \leq 0} g(\lambda)$. 
Under suitable regularity conditions, the optimal value of the dual problem is equivalent to the optimal value of the primal problem. We can solve both by maximizing the dual function $g$ to obtain $\lambda^*$ and then minimizing the Lagrangian $\mathcal{L}(\alpha, \lambda^*)$ with respect to $\alpha$ to obtain $\alpha^*$.

We can solve for each $\alpha_\tau^*$ in closed form for $\tau \in \mathcal{W}^\downarrow$. Minimizing the Lagrangian $\mathcal{L}(\alpha, \lambda^*)$ with respect to $\alpha$ corresponds to minimizing an unconstrained quadratic objective and can be solved separately for each $\tau$.  To see this, let us fix $\lambda$ and solve for the critical point of $\mathcal{L}(\alpha, \lambda)$.
If $\tau \in \mathcal{S}$, then gradient of $\mathcal{L}$ with respect to $\alpha_\tau$ is given by
$$
\nabla_{\alpha_\tau} \mathcal{L}(\alpha, \lambda)
= \sum_{i = 1}^{k_\tau} K_{\tau, i}^{-1}(\alpha_\tau - \blue{z_{\tau, i}}) + \sum_{\gamma \supseteq \tau} A^\top_{\gamma \tau} \lambda_\gamma.
$$
Setting this to zero and solving for $\alpha_\tau^*$ yields
$$
\alpha_\tau^* = \Big( \sum_{i = 1}^{k_\tau} K_{\tau, i}^{-1} \Big)^{-1} \Big( \sum_{i = 1}^{k_\tau} K_{\tau, i}^{-1} \blue{z_{\tau, i}} - \sum_{\gamma \supseteq \tau} A^\top_{\gamma \tau} \lambda_\gamma \Big).
$$

Now, suppose $\tau \in \mathcal{W}^\downarrow \setminus \mathcal{S}$. Then gradient of $\mathcal{L}$ with respect to $\alpha_\tau$ is given by
$$
\nabla_{\alpha_\tau} \mathcal{L}(\alpha, \lambda)
= 2 \eta \alpha_\tau^\top A_{\tau \tau}^\top A_{\tau \tau} + \sum_{\gamma \supseteq \tau} A^\top_{\gamma \tau} \lambda_\gamma.
$$
Setting this to zero and solving for $\alpha_\tau^*$ yields
$$
\alpha_\tau^* = -\nicefrac{1}{2 \eta} \Big(A_{\tau \tau}^\top A_{\tau \tau}\Big)^{-1} \Big(\sum_{\gamma \supseteq \tau} A^\top_{\gamma \tau} \lambda_\gamma \Big)^\top.
$$

To update $\lambda$, we set $\lambda^* = \min\{ \lambda + t \mu(\alpha^*), 0 \}$ where $t > 0$ is the step size. This can be seen as projected gradient ascent on $g(\lambda)$ since $\mu(\alpha^*) = \nabla_\lambda \mathcal{L}(\alpha^*, \lambda) = \nabla_\lambda g(\lambda)$.

\section{\blue{Scalable MWEM with pseudoinverse reconstruction}} \label{s:scalable_mwem}

The multiplicative weights exponential mechanism (MWEM)~\cite{hardt2010simple} is a canonical data-dependent mechanism that maintains a model $\hat p$ of the data distribution $p$ that is improved iteratively by adaptively measuring marginal queries that are poorly approximated by the current model $\hat p$\blue{.}
MWEM has served as the foundation for many related \blue{data-dependent} mechanisms.
A limitation of MWEM-style algorithms is that representing $\hat p$, even implicitly, does not scale to high-dimensional data domains without adopting parametric assumptions. 
In this section, we propose an MWEM-style algorithm called Scalable MWEM (Alg.~\ref{alg:scalable_mwem}) that employs a standard reconstruction approach, the pseudoinverse of the measured marginal queries, but scales to high-dimensional data domains.

In general, the pseudoinverse is infeasible as a reconstruction method for large data domains. 
Computing the pseudoinverse $Q^+$ of an arbitrary query matrix $Q$ scales exponentially in the number of attributes and linearly in size of the data vector. 
Moreover, even storing the reconstructed data vector $\hat p = Q^+ y$ from noisy answers $y$ in memory presents a limitation in practice. 
\blue{Scalable MWEM overcomes this computational hurdle by measuring marginals with isotropic noise and utilizing the efficient marginal pseudoinverse (Alg.~\ref{alg:efficient-pseudoinverse}).}

Scalable MWEM initializes by using a predetermined fraction of the privacy budget to measure the total query i.e. the 0-way marginal that counts the number of records in the dataset. 
Let $\W$ be a workload of marginals e.g. all 3-way marginals. Then, for a fixed number of rounds, Scalable MWEM privately selects a marginal $\gamma \in \W$ that is poorly approximated by the pseudoinverse of the current measurements using the exponential mechanism. 
\blue{The selected marginal is measured with isotropic Gaussian noise and utilizes the efficient marginal pseudoinverse to reconstruct answers to marginals in $\W$.}
Being a full query answering mechanism rather than just a reconstruction method, let \blue{us show} that Scalable MWEM satisfies differential privacy.

\begin{figure}
  \begin{algorithm}[H]
  \caption{Scalable MWEM \ds{I modified slightly to track $\Q$ so we can provide it as input to EMP}}
  \label{alg:scalable_mwem}
  \begin{algorithmic}[1]
  \Require Marginal workload $\W$, privacy budget $(\epsilon, \delta)$, initialization parameter $\alpha$ \smallskip
  \State Choose $\rho$ such that $\min_{\alpha > 1} \frac{\exp((\alpha - 1)(\alpha\rho-\epsilon))}{\alpha - 1}\left(1 - \frac{1}{\alpha}\right)^\alpha = \delta$
  \State Set $\sigma_0^2, \sigma^2 = \frac{1}{2\alpha\rho}, \frac{T}{(1 - \alpha)\rho}$
  \State \blue{Initialize measurements $y = \{M_{\emptyset} p + \xi_0\}$ with $\xi_0 \sim \mathcal{N}(0, \sigma_0^2I)$ and multiset $\Q = \{\emptyset\}$}
  \State \blue{Initialize $(\hat \mu_\gamma)_{\gamma \in \W} = \text{EMP}(\W, \Q, y)$}
  \For{$t = 1, \dots, T$}
    \State Select $\gamma_t$ with the exponential mechanism using $\frac{(1-\alpha)\rho}{2T}$ budget according to
    \begin{equation*}
        \text{Score}(p, \gamma, Y) = \lVert M_\gamma p - \hat \mu_\gamma \rVert_1 \ \forall \gamma \in \W
    \end{equation*}
    \State Measure $y_{t} = M_{\gamma_t} p + \xi_{t}$ where $\xi_{t} \sim \N(0, \sigma^2I)$ \blue{and set $\Q = \Q \cup \{\gamma_t\}$}
    \State \blue{Reconstruct $(\hat \mu_\gamma)_{\gamma \in \W} = \text{EMP}(\W, \Q, y)$} \smallskip
  \EndFor
  \Return \blue{noisy answers $(\hat \mu_\gamma)_{\gamma \in \W}$, noisy measurements $y$}
  \end{algorithmic}
  \end{algorithm}
\end{figure}

\begin{thm} \label{th:mwem_is_dp}
 Scalable MWEM satisfies $(\epsilon, \delta)$-DP. 
\end{thm}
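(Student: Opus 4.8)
The plan is to track the privacy cost of each primitive that actually touches the sensitive data $p$, accumulate these costs via adaptive composition for zCDP (Proposition \ref{th:zcdpProperties}), verify the total is $\rho$-zCDP, and then convert to $(\epsilon,\delta)$-DP using Proposition \ref{prop:zcdp_conversion} together with the definition of $\rho$ in Line 1 of Algorithm \ref{alg:scalable_mwem}. The key observation is that the reconstruction steps (the EMP calls in Lines 4 and 8) are pure post-processing of noisy measurements and hence free by Proposition \ref{th:zcdpProperties}(2), so only the Gaussian measurement of the total query $M_\emptyset$, the $T$ exponential-mechanism selections, and the $T$ Gaussian marginal measurements contribute.

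First I would enumerate the data-dependent operations and their individual guarantees. The initial measurement $M_\emptyset p + \mathcal N(0,\sigma_0^2 I)$ is a Gaussian mechanism applied to a query of $\ell_2$-sensitivity $1$ (adding or removing a record changes the count by one), so by Proposition \ref{defn:gaussian mech} it is $\frac{1}{2\sigma_0^2}$-zCDP, which with $\sigma_0^2 = \frac{1}{2\alpha\rho}$ equals $\alpha\rho$-zCDP. Next, in each round $t$ the selection of $\gamma_t$ uses the exponential mechanism with the score $\Score(p,\gamma,Y) = \|M_\gamma p - \hat\mu_\gamma\|_1$; since $\hat\mu_\gamma$ is fixed given previously released quantities, the only data-dependence is through $M_\gamma p$, and $\|M_\gamma p - M_\gamma p'\|_1 \le 1$ for neighbors, so $\Delta_1 = 1$ and by Proposition \ref{defn:exponential mech} each selection run with budget $\varepsilon_{\mathrm{EM}} = \frac{(1-\alpha)\rho}{2T}$ costs $\frac{\varepsilon_{\mathrm{EM}}^2}{8}$-zCDP — here I should double-check the bookkeeping, since the cleanest route is actually to note that the intended per-round selection cost is $\frac{(1-\alpha)\rho}{2T}$-zCDP (I would reconcile the exponential-mechanism parameterization in Proposition \ref{defn:exponential mech} with this target, possibly absorbing constants into how $\varepsilon_{\mathrm{EM}}$ is set). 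Finally, each measurement $M_{\gamma_t} p + \mathcal N(0,\sigma^2 I)$ is again a Gaussian mechanism on a sensitivity-$1$ query, hence $\frac{1}{2\sigma^2}$-zCDP; with $\sigma^2 = \frac{T}{(1-\alpha)\rho}$ this is $\frac{(1-\alpha)\rho}{2T}$-zCDP.

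Then I would sum the costs using adaptive composition (Proposition \ref{th:zcdpProperties}(1)), which applies because each step's choice of mechanism depends only on the outputs of earlier steps: $\alpha\rho + \sum_{t=1}^T \big(\tfrac{(1-\alpha)\rho}{2T} + \tfrac{(1-\alpha)\rho}{2T}\big) = \alpha\rho + T\cdot \tfrac{(1-\alpha)\rho}{T} = \alpha\rho + (1-\alpha)\rho = \rho$, so Scalable MWEM is $\rho$-zCDP. The returned objects $(\hat\mu_\gamma)_\gamma$ and $y$ are post-processing of these releases, so releasing them does not increase the cost. Applying Proposition \ref{prop:zcdp_conversion} with the $\rho$ chosen in Line 1 — which is exactly the value making $\min_{\alpha>1}\frac{\exp((\alpha-1)(\alpha\rho-\epsilon))}{\alpha-1}(1-\tfrac1\alpha)^\alpha = \delta$ — gives $(\epsilon,\delta)$-DP, completing the proof.

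The main obstacle I anticipate is purely a bookkeeping one: getting the exponential-mechanism privacy accounting to line up cleanly with the stated budget $\frac{(1-\alpha)\rho}{2T}$, since Proposition \ref{defn:exponential mech} is phrased with an $\varepsilon$ and a $\frac{\epsilon^2}{8}$ conversion while the algorithm is phrased directly in zCDP units — I would need to state carefully which convention "using $\frac{(1-\alpha)\rho}{2T}$ budget" refers to (most naturally, $\frac{(1-\alpha)\rho}{2T}$-zCDP directly) and confirm the sensitivity argument $\Delta_1(\Score)=1$. Everything else is a routine application of composition and the zCDP-to-DP conversion.
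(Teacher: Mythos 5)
Your proposal is correct and follows essentially the same route as the paper's proof: reduce to showing $\rho$-zCDP via Proposition \ref{prop:zcdp_conversion}, account for the initial Gaussian measurement of the sensitivity-$1$ total query ($\alpha\rho$-zCDP), the $T$ exponential-mechanism selections and $T$ Gaussian marginal measurements ($\frac{(1-\alpha)\rho}{2T}$-zCDP each), and sum by adaptive composition to get $\rho$. The bookkeeping concern you flag about the exponential mechanism's parameterization is real but is glossed over in the paper's proof as well, and your additional observations (post-processing of the EMP calls, sensitivity of the score function) only make the argument more complete.
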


\begin{proof}
  We will refer to  Algorithm \ref{alg:scalable_mwem} as $\mathcal{M}$. 
  Note that $\mathcal{M}$ selects a parameter $\rho$ such that $\delta = \min_{\alpha > 1} \frac{\exp((\alpha - 1)(\alpha\rho-\epsilon))}{\alpha - 1}\left(1 - \frac{1}{\alpha}\right)^\alpha$.
  By proposition \ref{prop:zcdp_conversion}, it suffices to show that $\mathcal{M}$ satisfies $\rho$-zCDP, then it also satisfies $(\epsilon, \delta)$-DP. 
  In the initialization step, $\mathcal{M}$ measures $\blue{M}_\emptyset p$ with the Gaussian mechanism using the noise scale $\sigma_o^2 = \frac{1}{2\alpha\rho}$. 
  The query $\blue{M}_\emptyset p$ is the total query, so it has an $\ell_2$ sensitivity of 1 and therefore by proposition \ref{defn:gaussian mech}, this measurement satisfies $\frac{1}{2 \sigma_o^2} = \frac{2\alpha\rho}{2} = \alpha\rho$-zCDP. 
  In each round, $\mathcal{M}$ runs the exponential mechanism such that it satisfies $\frac{(1 - \alpha)\rho}{2T}$-zCDP. 
  Also in each round, $\mathcal{M}$ runs the Gaussian mechanism to measure a marginal query with noise scale $\sigma^2 = \frac{T}{(1 - \alpha) \rho}$. 
  All marginal queries have an $\ell_2$ sensitivity of 1 so again by proposition \ref{defn:gaussian mech}, this measurement satisfies $\frac{1}{2 \sigma_o^2} = \frac{(1 - \alpha)\rho}{2T}$-zCDP. 
  By the adaptive composition result given in proposition \ref{th:zcdpProperties}, the overall mechanism satisfies $\alpha\rho + T(\frac{(1 - \alpha)\rho}{2T} + \frac{(1 - \alpha)\rho}{2T}) = \rho$-zCDP and also $(\epsilon, \delta)$-DP. 
\end{proof}

\section{Experiment Details} \label{s:exp_details}

\textbf{Datasets.} 
In general, we follow the preprocessing steps described in \citep{mckenna2022aim}. 
All attributes in the datasets are discrete. 
We identify the data domain by inferring the possible values for each attribute from the observed values for each attribute.

Titanic \cite{titanic} contains $9$ attributes, $1,304$ records, and has data vector size $8.9 \times 10^{7}$. 
Adult \cite{kohavi1996scaling} contains $14$ attributes, $48,842$ records, and has data vector size $9.8 \times 10^{17}$. 
Salary \cite{hay2016principled} contains $9$ attributes, $135,727$ records, and has data vector size $1.3 \times 10^{13}$. 
Nist-Taxi \cite{lothe2021sdnist} has $10$ attributes, $223,551$ records, and has data vector size $1.9 \times 10^{13}$.

\textbf{Compute Environment.} 
All experiments were run on an internal compute cluster with two CPU cores and 20GB of memory. 

\textbf{\GReM-LNN Hyperparameters.}
For the ResidualPlanner experiments in Section~\ref{s:rp_experiments}, we set the hyperparameters as follows: the maximum number of rounds $T = 4000$, the Lagrangian initialization parameter $\lambda = -1$, and the step size $s = 0.1$.
For the Scalable MWEM experiments in Section~\ref{s:scalable_mwem_experiments}, we set the hyperparameters as follows: the maximum number of rounds $T = 1000$, the Lagrangian initialization parameter $\lambda = -1$, the step size $s = 0.02$, and regularization weight $\eta = 40$. For all experiments, if Alg.~\ref{alg:dual_ascent} fails, we divide the step size by $\sqrt{10}$ and rerun until convergence.
We additionally impose a time limit of 24H on a given run of Alg.~\ref{alg:dual_ascent}.

\section{Additional Experiments} \label{s:additional_experiments}

In this section, we detail additional experimental results. For the ResidualPlanner experiment, we report $\ell_2$ workload error for the reconstruction methods. For the Scalable MWEM experiment, we report $\ell_2$ workload error for the reconstruction methods as well as whether or not Private-PGM successfully ran across various settings.

\newpage
\subsection{Additional ResidualPlanner Experiments}

\begin{figure}[h!]
  \centering
  \includegraphics[width=\textwidth]{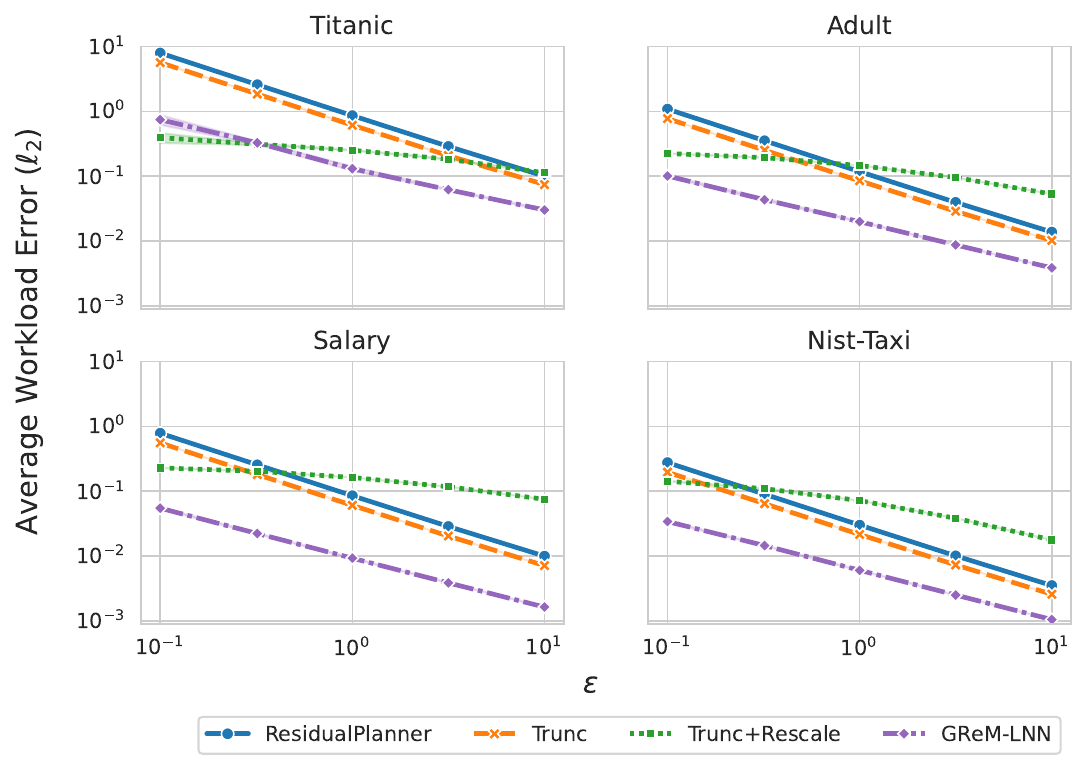}
  \caption{Average $\ell_2$ workload error on all 3-way marginals across five trials and privacy budgets $\epsilon \in \{ 0.1, 0.31, 1, 3.16, 10 \}$ and $\delta = 1 \times 10^{-9}$ for ResidualPlanner.}
  \label{fig:rp_experiments_l2}
\vspace{-10pt}\end{figure}

\newpage
\subsection{Additional MWEM Experiments}

\vspace{-10pt}
\begin{figure}[h!]
  \centering
  \includegraphics[width=\textwidth]{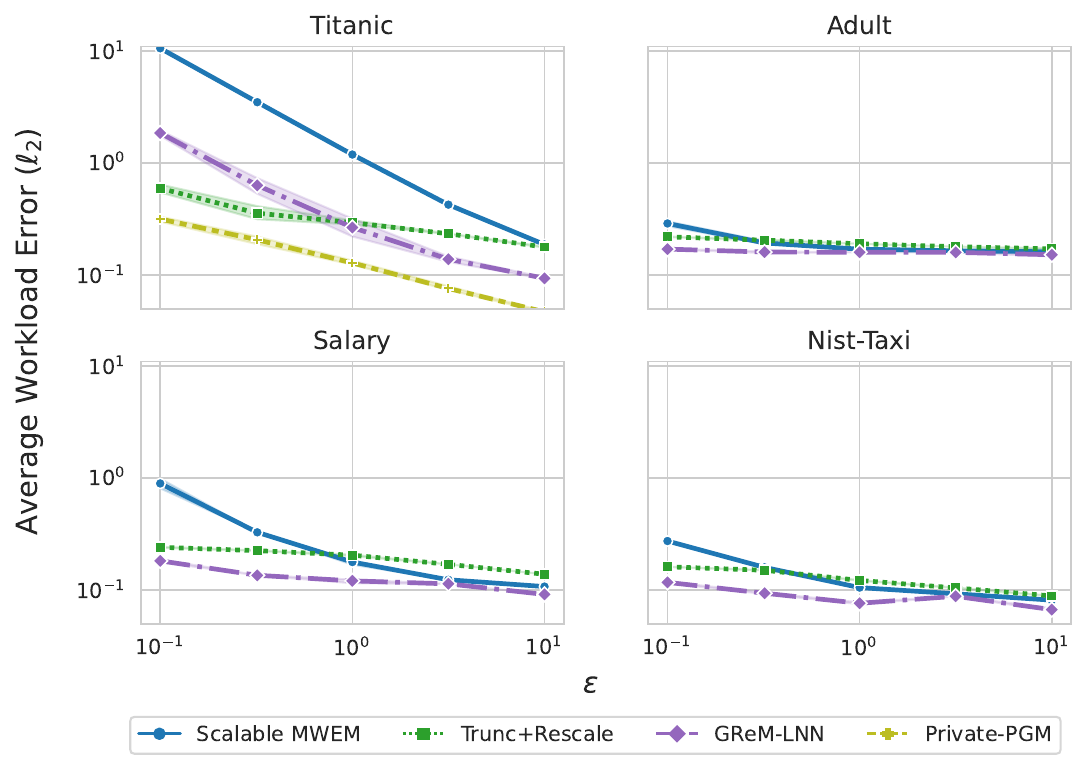}
  \caption{Average $\ell_2$ workload error on all 3-way marginals across five trials and privacy budgets $\epsilon \in \{ 0.1, 0.31, 1, 3.16, 10 \}$ and $\delta = 1 \times 10^{-9}$ for Scalable MWEM with 30 rounds of measurements.}
  \label{fig:mwem_experiments_l2}
\vspace{-10pt}\end{figure}

\vspace{10pt}
\begin{table}[h!]
\centering
\begin{tabular}{lccccc}   \hline
\textbf{Dataset} & \textbf{Rounds} & \textbf{Trials Total} & \textbf{Trials Completed} & \textbf{Trials \textgreater{}24H} & \textbf{Trials Out-of-Memory} \\ \hline
Titanic          & 10              & 25                    & 25                        & 0                                 & 0                             \\
                 & 20              & 25                    & 25                        & 0                                 & 0                             \\
                 & 30              & 25                    & 25                        & 0                                 & 0                             \\ \hline
Adult            & 10              & 25                    & 0                         & 25                                & 0                             \\
                 & 20              & 25                    & 14                        & 8                                 & 3                             \\
                 & 30              & 25                    & 0                         & 0                                 & 25                            \\ \hline
Salary           & 10              & 25                    & 11                        & 14                                & 0                             \\
                 & 20              & 25                    & 0                         & 0                                 & 25                            \\
                 & 30              & 25                    & 0                         & 0                                 & 25                            \\ \hline
Nist-Taxi        & 10              & 25                    & 0                         & 0                                 & 25                            \\
                 & 20              & 25                    & 0                         & 0                                 & 25                            \\
                 & 30              & 25                    & 0                         & 0                                 & 25                           
\end{tabular}
\caption{Completion results of running Private-PGM by setting for the Scalable MWEM experiment. Failure is broken down by exceeding the 24H time limit or exceeding the available memory (20GB).}
\label{tab:pgm}
\end{table}

\end{document}